\documentclass{article}

\usepackage{setspace}
\setstretch{1.06}
\usepackage{amsmath}
\usepackage{amssymb}
\usepackage{amsthm}
\usepackage{vmr-symbols-vecbold}
\usepackage{standard-macros}
\usepackage{metanotation}
\usepackage[dvipsnames]{xcolor}
\usepackage{graphicx}
\usepackage{tikz}
\usetikzlibrary{patterns,tikzmark}
\usetikzlibrary{matrix,decorations.pathreplacing,calc}
\usepackage{hf-tikz}
\usetikzlibrary{decorations.pathreplacing,matrix}
\usepackage{paralist}
\usetikzlibrary{positioning,matrix}
\usetikzlibrary{decorations.pathreplacing,calligraphy}
\usepackage[mathscr]{euscript}

\usetikzlibrary{shapes.multipart}

     \usepackage[preprint, nonatbib]{neurips_2022}

\usepackage[utf8]{inputenc} %
\usepackage[T1]{fontenc}    %
\usepackage{hyperref}       %
\usepackage{url}            %
\usepackage{booktabs}       %
\usepackage{amsfonts}       %
\usepackage{nicefrac}       %
\usepackage{microtype}      %

\title{ Evaluated CMI Bounds for Meta Learning:\\
Tightness and Expressiveness }

\author{%
  Fredrik Hellstr\"om \\
  Chalmers University of Technology\\
  Gothenburg, Sweden  \\
  \texttt{frehells@chalmers.se} \\
   \And
     Giuseppe Durisi \\
  Chalmers University of Technology\\
  Gothenburg, Sweden \\
  \texttt{durisi@chalmers.se} \\
}

\begin{document}
\everypar{\looseness=-1}

\newcommand{\pop}[1]{\textcolor{OliveGreen}{#1}}

\newcommand{\popn}[1]{\textcolor{Apricot}{#1}}

\pgfkeys{tikz/mymatrixenv/.style={decoration={brace},every left delimiter/.style={xshift=8pt},every right delimiter/.style={xshift=-8pt}}}
\pgfkeys{tikz/mymatrix/.style={matrix of math nodes,nodes in empty cells,left delimiter={[},right delimiter={]},inner sep=1pt,outer sep=1.5pt,column sep=8pt,row sep=8pt,nodes={minimum width=20pt,minimum height=10pt,anchor=center,inner sep=0pt,outer sep=0pt}}}
\pgfkeys{tikz/mymatrixbrace/.style={decorate,thick}}
\pgfkeys{tikz/mymatrixbracethin/.style={decorate}}

\newcommand*\mymatrixbraceright[4][m]{
    \draw[mymatrixbrace] (#1.west|-#1-#3-1.south west) -- node[left=2pt, align = left] {#4} (#1.west|-#1-#2-1.north west);
}
\newcommand*\mymatrixbraceleft[4][m]{
    \draw[mymatrixbrace] (#1.east|-#1-#2-1.north east) -- node[right=2pt, align = center] {#4} (#1.east|-#1-#2-1.south east);
}
\newcommand*\mymatrixbracetop[4][m]{
    \draw[mymatrixbrace] (#1.north-|#1-1-#2.north west) -- node[above=2pt, align = center] {#4} (#1.north-|#1-1-#3.north east);
}
\newcommand*\mymatrixbracebottom[4][m]{
    \draw[mymatrixbrace] (#1.south-|#1-1-#2.north east) -- node[below=2pt, align = center] {#4} (#1.south-|#1-1-#3.north west);
}

\newcommand*\mymatrixbracetopShat[4][Shat]{
    \draw[mymatrixbrace] (#1.north-|#1-1-#2.north west) -- node[above=2pt, align = center] {#4} (#1.north-|#1-1-#3.north east);
}
\newcommand*\mymatrixbracebottomS[4][S]{
    \draw[mymatrixbrace] (#1.south-|#1-1-#2.north east) -- node[below=2pt, align = center] {#4} (#1.south-|#1-1-#3.north west);
}

\tikzset{style green/.style={
    set fill color=OliveGreen!50!lime!60,draw opacity=0.4,
    set border color=OliveGreen!50!lime!60,fill opacity=0.1,
  },
  style cyan/.style={
    set fill color=cyan!90!blue!60, draw opacity=0.4,
    set border color=blue!70!cyan!30,fill opacity=0.1,
  },
  style orange/.style={
    set fill color=orange!90, draw opacity=0.8,
    set border color=orange!90, fill opacity=0.3,
  },
  style brown/.style={
    set fill color=brown!70!orange!40, draw opacity=0.4,
    set border color=brown, fill opacity=0.3,
  },
  style purple/.style={
    set fill color=violet!90!pink!20, draw opacity=0.5,
    set border color=violet, fill opacity=0.3,    
  },
  style red/.style={
    set fill color=Red!90!pink!20, draw opacity=0.5,
    set border color=Red, fill opacity=0.3,    
  },
  Sh1/.style={
    above left offset={-0.08,0.25},
    below right offset={0.07,-0.16},
    #1
  },
  Sh/.style={
    above left offset={-0.07,0.25},
    below right offset={0.12,-0.16},
    #1
  },
  big/.style={
    above left offset={-0.01,0.375},
    below right offset={0.16,-0.35},
    #1
  },
  mid/.style={
    above left offset={-0.07,0.34},
    below right offset={0.12,-0.32},
    #1
  },
  small/.style={
    above left offset={-0.03,0.3},
    below right offset={0.09,-0.25},
    #1
  },set fill color/.code={\pgfkeysalso{fill=#1}},
  set border color/.style={draw=#1}
} 
\maketitle

\begin{abstract}
Recent work has established that the conditional mutual information (CMI) framework of Steinke and Zakynthinou (2020) is expressive enough to capture generalization guarantees in terms of algorithmic stability, VC dimension, and related complexity measures for conventional learning (Harutyunyan et al., 2021, Haghifam et al., 2021).
Hence, it provides a unified method for establishing generalization bounds.
In meta learning, there has so far been a divide between information-theoretic results and results from classical learning theory.
In this work, we take a first step toward bridging this divide.
Specifically, we present novel generalization bounds for meta learning in terms of the evaluated CMI (e-CMI).
To demonstrate the expressiveness of the e-CMI framework, we apply our bounds to a representation learning setting, with~$\taskin$ samples from~$\metan$ tasks parameterized by functions of the form~$f_i \circ h$.
Here, each~$f_i\in \mathcal F$ is a task-specific function, and~$h\in \mathcal H$ is the shared representation.
For this setup, we show that the e-CMI framework yields a bound that scales as~$\sqrt{ \mathcal C(\mathcal H)/(\taskin\metan) + \mathcal C(\mathcal F)/\taskin } $, where~$\mathcal C(\cdot)$ denotes a complexity measure of the hypothesis class.
This scaling behavior coincides with the one reported in Tripuraneni et al. (2020) using Gaussian complexity.
\end{abstract}

\section{Introduction}\label{sec:intro}
Meta learning, sometimes referred to as learning to learn, is a process by which performance on a new machine learning task is increased by using knowledge acquired from separate, but related, tasks \cite{caruana-97a,thrun-98a}.
Concretely, the meta learner~$\metalearner$ has access to training data from several different tasks, which are embedded in a common task environment, and aims to extract information from this data.
The goal is to use this information to improve the performance of a base learner~$\baselearner$ on a new task from the same task environment.
For instance, the task environment can consist of different image classification tasks, and the goal of the meta learner is to learn a shared representation for the tasks or to find suitable hyperparameters for a base learner performing image classification.

As in conventional learning, a central goal in meta learning is to bound the gap between the loss on the training data and the population risk on unseen data.
Two current approaches for achieving this goal are:
\begin{inparaenum}[i)]
    \item to use techniques from classical learning theory to obtain minimax performance guarantees, or
    \item to use information-theoretic methods to obtain algorithm-, data- and distribution-dependent guarantees.
    \end{inparaenum}
So far, these two lines of work have evolved largely separately.
In this paper, we take some steps toward unifying them.
Specifically, we:
\begin{inparaenum}[i)]
    \item derive new, tighter information-theoretic generalization bounds for meta learning, and
    \item demonstrate that these bounds are expressive enough to recover bounds for meta learning from classical learning theory.
    \end{inparaenum}
To concretize the discussion in this introduction, we assume that the meta learner outputs a member~$h$ of a function class~$\mathcal H$ on the basis of~$\taskin$ samples from~$\metan$ different tasks, and that a base learner selects a member~$f$ of a function class~$\mathcal F$, on the basis of the output of the meta learner and~$\taskin$ samples from a given task.

\paragraph{Classical learning theory for meta learning.}
The theoretical analysis of the benefits of meta learning in terms of loss bounds dates back to~\cite{baxter-20a}, where the notion of task environment was formally introduced.
More recently, for the setting of representation learning,~\cite[Thm.~5]{maurer-16a} derived a risk bound that scales as\footnote{In the interest of brevity, we suppress logarithmic factors throughout this section.}~$\sqrt{\mathcal C(\mathcal H)/\metan} + \sqrt{\mathcal C(\mathcal F)/\taskin}$, where~$\mathcal C(\cdot)$ denotes a complexity measure of the function class.
This demonstrates the benefit of meta learning for tasks that share a common environment.
Indeed, in the conventional single-task learning scenario, the~$\taskin$ samples from a given task need to be used for learning~$h$ and~$f$ simultaneously, leading to a~$\sqrt{\mathcal C(\mathcal H\times \mathcal F)/\taskin}$ bound.
The bound provided in~\cite[Thm.~5]{maurer-16a} was later improved by~\cite{tripuraneni-20a} to a scaling of~$\sqrt{\mathcal C(\mathcal H)/(\taskin\metan) + \mathcal C(\mathcal F)/\taskin}$.
This improved scaling, where~$\mathcal C(\mathcal{H})$ decays with the product~$\taskin\metan$, confirms the intuition that all of the~$\taskin\metan$ samples that are observed are informative at the environment level.
Meta learning has also been extensively studied in several special cases.
For instance, \cite{lounici-11a,cavallanti-10a,pontil-13a} study a setting with linear features and task mappings,
while~\cite{franceschi-18a,balcan-19a} consider an online convex optimization setting.
In this paper, we will mainly focus on the representation learning setting.

\paragraph{Information-theoretic generalization bounds.}
For conventional learning, the study of information-theoretic bounds was initiated by~\cite{russo-16a,xu-17a}, where the average generalization gap of a learning algorithm is bounded in terms of the information that the algorithm reveals about the training data.
At its heart, this line of work relies on a change of measure technique that relates the training loss to the population loss.
While the first information-theoretic bounds were given in terms of the mutual information between the output of the learning algorithm and the full training data, recent works provide bounds in terms of the \textit{disintegrated} mutual information between the \textit{loss} that the algorithm incurs on a \textit{single} sample pair and a selection variable indicating which sample is used for training, given a supersample containing both the training and test data.
These developments are due to the samplewise approach of~\cite{bu-20a}, the disintegration introduced in~\cite{negrea-19a}, the evaluated conditional mutual information (e-CMI) notion from~\cite{steinke-20a}, and combinations and extensions of these from~\cite{haghifam-20a,rodriguezgalvez-20a,hafezkolahi-20a,hellstrom-21b,harutyunyan-21a}.
This line of work is also intimately related to PAC-Bayesian generalization bounds~\cite{mcallester-98a,mcallester-13a}, where the generalization gap, averaged over the learning algorithm, is bounded with high probability over the data in terms of a KL divergence.
This is explored further in~\cite{hellstrom-20b,alquier-21a}.

\paragraph{Information-theoretic analysis of meta learning.}
Recently, information-theoretic generalization bounds have also been applied to meta learning~\cite{jose-21a,chen-21a,jose-22a}.
In parallel, a PAC-Bayesian analysis of meta learning has also been developed~\cite{pentina-14a,amit-18a,rothfuss-21a,guan-21a,flynn-22a,farid-21a}.
Generalization bounds obtained via information-theoretic methods have also been used as training objectives in order to improve performance~\cite{rothfuss-21a,yoon-18a}.
The quantity of interest in meta learning is the meta-population loss, which is the population loss evaluated on a task that was not observed during the meta learning phase.
While this quantity is unknown, the meta learner has indirect information about it through the observed meta-training loss, which is the loss that the meta learner incurs on the training samples from each of the observed tasks during the meta learning phase.
The standard approach in the information-theoretic and PAC-Bayesian analysis of meta learning consists of two steps.
The first step involves bounding the difference between the meta-training loss and a suitably defined auxiliary loss.
The second step involves bounding the difference between the meta-population loss and the auxiliary loss.
The two natural candidates for this auxiliary loss are the population loss of an observed task and the training loss for an unobserved task.
One of these steps (the first or second, depending on the choice of the auxiliary loss) is purely at the task level, while the other is purely at the environment level.
This makes it possible to view each of these steps as a conventional learning problem, so that a standard information-theoretic generalization bound can be applied for each step.
By the use of the triangle inequality, the two bounds are then combined to obtain a bound on the meta-population loss in terms of the meta-training loss.
We will refer to this procedure as a \textit{two-step} derivation.
An alternative approach was recently used by~\cite{chen-21a}, where a \textit{one-step} procedure was employed.
Rather than relying on an auxiliary loss,~\cite{chen-21a} immediately bounds the difference between the  meta-population loss and the meta-training loss in terms of a mutual information that captures both task level and environment level dependencies.
The environment and task level dependencies can then be obtained by decomposing this mutual information.
The resulting bound turns out to have a better scaling with~$\metan$ than the two-step bounds.
However, the information-theoretic analyses of meta learning reviewed so far do not provide any rigorous characterization of the scaling behavior of the bounds.
In particular, the dependence of the information measures on the sample size is typically ignored.
This precludes a direct comparison between these information-theoretic bounds and classical learning theory results.

\paragraph{Contributions. }
Focusing on the meta learning setup, we present novel information-theoretic bounds based on the e-CMI framework and demonstrate how to recover minimax results from classical learning theory via these bounds.
Our specific contributions are as follows.
In Section~\ref{sec:average_bounds}, we derive bounds for the average generalization error in terms of the disintegrated, samplewise e-CMI of the meta learner and base learner:
in Theorem~\ref{thm:two-step-sqrt-bound} and~\ref{thm:one-step-sqrt-bound}, we provide square-root bounds, which are shown to be tighter than results in the literature;
in Theorem~\ref{thm:avg-kl-bounds}, we derive novel bounds in terms of the binary KL divergence.
For low values of the training loss, the binary KL bounds display a more favorable dependence on the number of data samples than the square-root bounds.
Next, in Section~\ref{sec:high-probability}, we extend these average bounds to obtain high-probability generalization guarantees.
This is necessary to perform comparisons with high-probability bounds from classical learning theory.
Finally, in Section~\ref{sec:expressiveness}, we demonstrate the expressiveness of our bounds by applying them to a representation learning setting.
Under certain assumptions about the hypothesis classes, we provide upper bounds on the information measures that appear in our bound in terms of complexity measures.
The results that we obtain via this procedure display a scaling behavior that coincides with the one reported in~\cite{tripuraneni-20a}.
This demonstrates that the e-CMI framework is expressive enough to recover the scaling behavior of generalization guarantees for meta learning obtained via classical learning theory.

\section{Problem Setup and Notation}\label{sec:notation}

\begin{figure}
    \centering

\[
    \begin{tikzpicture}[text centered, baseline={-0.5ex},mymatrixenv]
     \matrix (m)  [mymatrix,inner sep=4pt] 
        {
    \tikzmarkin[big=style cyan]{Z1} \popn{\supersamplearg{1,0}_{1,0}}  &  \supersamplearg{1,0}_{1,1} & \tikzmarkin[mid=style brown]{Z11} \tikzmarkin[small=style purple]{Z112} \pop{\supersamplearg{1,1}_{1,0}}   &  \supersamplearg{1,1}_{1,1}   \tikzmarkend{Z112}  &  \supersamplearg{2,0}_{1,0}  &  \pop{\supersamplearg{2,0}_{1,1}} &  \supersamplearg{2,1}_{1,0}   &  \popn{\supersamplearg{2,1}_{1,1}} \\
    \supersamplearg{1,0}_{2,0}  &  \popn{\supersamplearg{1,0}_{2,1}} &  \pop{\supersamplearg{1,1}_{2,0}}   &  \supersamplearg{1,1}_{2,1}   &  \supersamplearg{2,0}_{2,0}  &  \pop{\supersamplearg{2,0}_{2,1}} &  \popn{\supersamplearg{2,1}_{2,0}}   &  \supersamplearg{2,1}_{2,1} \\
    \popn{\supersamplearg{1,0}_{3,0}}  &  \supersamplearg{1,0}_{3,1} &  \pop{\supersamplearg{1,1}_{3,0}}   &  \supersamplearg{1,1}_{3,1}   &  \pop{\supersamplearg{2,0}_{3,0}}  &  \supersamplearg{2,0}_{3,1} &  \popn{\supersamplearg{2,1}_{3,0}}   &  \supersamplearg{2,1}_{3,1} \\
    \supersamplearg{1,0}_{4,0}  &  \popn{\supersamplearg{1,0}_{4,1}} &  \supersamplearg{1,1}_{4,0}   &  \pop{\supersamplearg{1,1}_{4,1}} \tikzmarkend{Z1}\tikzmarkend{Z11} &  \supersamplearg{2,0}_{4,0}  &  \pop{\supersamplearg{2,0}_{4,1}} &  \popn{\supersamplearg{2,1}_{4,0}}   &  \supersamplearg{2,1}_{4,1} \\
    };
        \mymatrixbracebottom{4}{1}{Task pair \textcolor{Cerulean}{$\supersamplearg{1}$}}
        \mymatrixbracetop{3}{4}{In-task\\ supersample \textcolor{RawSienna}{$\supersamplearg{1,1}$}}
        \mymatrixbraceright{1}{1}{Sample \\ pair \textcolor{Purple}{$\supersamplearg{1,1}_{1}$} }
    \node[right = of m, xshift = 0.53cm] (ix) {$\supersamplearg{i,k}_{j,l}$} ;

    \node[above = of ix, xshift = 0.08cm, yshift = -1.35cm] (ixul) {};
    \node[above = of ix, xshift = -0.4cm, yshift = -1cm] (ixule) {};
    \draw [] (ixul) -- (ixule) ;
    \node[left = of ixule, xshift = 1.47cm, yshift = 0.11cm] {\footnotesize Task index} ;

    \node[above = of ix, xshift = 0.18cm, yshift = -1.35cm] (ixur) {};
    \node[above = of ix, xshift = 0.62cm, yshift = -1cm] (ixure) {};
    \draw [] (ixur) -- (ixure) ;
    \node[right = of ixure, xshift = -1.57cm, yshift = 0.11cm] {\footnotesize \begin{tabular}{l} Task \\ membership\end{tabular} } ;

    \node[below = of ix, xshift = 0.025cm, yshift = 1.35cm] (ixbl) {};
    \node[below = of ix, xshift = -0.42cm, yshift = 1cm] (ixble) {};
    \draw [] (ixbl) -- (ixble) ;
    \node[left = of ixble, xshift = 1.57cm, yshift = -0.1cm] {\footnotesize Sample index} ;

    \node[below = of ix, xshift = 0.11cm, yshift = 1.35cm] (ixbr) {};
    \node[below = of ix, xshift = 0.58cm, yshift = 1cm] (ixbre) {};
    \draw [] (ixbr) -- (ixbre) ;
    \node[right = of ixbre, xshift = -1.57cm, yshift = -0.1cm, align = left] {\footnotesize \begin{tabular}{l} Sample \\ membership\end{tabular} } ;

    \node[left = of m, xshift = 1cm] {$\metasupersample=$};

    \matrix (Shat) [mymatrix,inner sep=0pt, column sep = 2pt, below = of m, yshift = -0.7cm, xshift = -2.3cm]  
        {
     \tikzmarkin[Sh1=style green]{Shat1} 1 \tikzmarkend{Shat1}\\
      0 \\
    };
    \node[left = of Shat, xshift = 1.1cm, yshift = 0.06cm] {$\metasubsetchoice=$};

    \matrix (S) [mymatrix,inner sep=0pt, column sep = 2pt, right = of Shat, xshift=0.5cm, yshift = 0cm]  
        {
    1 & \tikzmarkin[Sh1=style green]{S1}  0 & 1 & 0 \\
    0 & 0 & 1 & 1 \\
    1 & 0 & 0 & 1 \\
    0 & 1 \tikzmarkend{S1}& 1 & 1 \\
    };
    \node[left = of S, xshift = 1.1cm, yshift = 0.06cm] {$\subsetchoice=$};

    \node[above = of S, xshift = -0.1cm, yshift = -1.1cm] (As) {};
    \node[above = of S, xshift = -0.7cm, yshift = -1.1cm] (Bs) {};
    \draw [decorate, thick,
    decoration = {brace,mirror}] (As) -- (Bs) ;
    \path (As) -- (Bs) node[midway, yshift = 0.4cm] (AsBsm) {\pop{$\subsetchoicearg{1,\metasubsetchoice_1}$}};

    \node[below = of S, xshift = -1.55cm, yshift = 1.1cm] (Asu) {};
    \node[below = of S, xshift = 1.55cm, yshift = 1.1cm] (Bsu) {};
    \draw [decorate, thick,
    decoration = {brace,mirror}] (Asu) -- (Bsu) ;
    \path (Asu) -- (Bsu) node[midway, yshift = -0.4cm] (AsBsmu) {Sample membership};

   \node[above = of Shat, xshift = -0.33cm, yshift = -1.1cm] (Ash) {};
    \node[above = of Shat, xshift = 0.33cm, yshift = -1.1cm] (Bsh) {};
    \draw [decorate, thick,
    decoration = {brace}] (Ash) -- (Bsh) ;
    \path (Ash) -- (Bsh) node[midway, yshift = 0.4cm] (AshBshm) {$\pop{\metasubsetchoice_1}$};

   \node[below = of Shat, xshift = -0.39cm, yshift = 1.1cm] (Ashu) {};
    \node[below = of Shat, xshift = 0.39cm, yshift = 1.1cm] (Bshu) {};
    \draw [decorate, thick,
    decoration = {brace, mirror}] (Ashu) -- (Bshu) ;
    \path (Ashu) -- (Bshu) node[midway, yshift = -0.55cm, align = center] (AshBshmu) {Meta\\membership};

    \matrix (Zss)  [mymatrix,inner sep=4pt, right = of S, xshift = 1.2cm, yshift = 0.3cm] 
        {
    \tikzmarkin[small=style green]{ZS1}  \pop{\supersamplearg{1,1}_{1,0}} & \pop{\supersamplearg{2,0}_{1,1}} \\
     \pop{\supersamplearg{1,1}_{2,0}} & \pop{\supersamplearg{2,0}_{2,1}} \\ \pop{\supersamplearg{1,1}_{3,0}} & \pop{\supersamplearg{2,0}_{3,0}} \\
     \pop{\supersamplearg{1,1}_{4,1}} \tikzmarkend{ZS1} & \pop{\supersamplearg{2,0}_{4,1}} \\
    };
    \node[left = of Zss, xshift = 1.1cm, yshift = 0.06cm] {$\metatrainset=$};

    \node[below = of Zss, xshift = 1.1cm, yshift = 1.1cm] (AZs) {};
    \node[below = of Zss, xshift = -1.1cm, yshift = 1.1cm] (BZs) {};
    \draw [decorate, thick,
    decoration = {brace}] (AZs) -- (BZs) ;
    \path (AZs) -- (BZs) node[midway, yshift = -0.4cm] (AZsBZsm) {Meta-training set};

    \node[above = of Zss, xshift = 0.03cm, yshift = -1.16cm] (AZsA) {};
    \node[above = of Zss, xshift = -0.98cm, yshift = -1.16cm] (BZsA) {};
    \draw [decorate, thick,
    decoration = {brace,mirror}] (AZsA) -- (BZsA) ;
    \path (AZsA) -- (BZsA) node[midway, yshift = 0.4cm] (AZsBZsAm) {$\pop{\supersamplearg{1,\metasubsetchoice_1}_ {\subsetchoicearg{1}}}$};

    \end{tikzpicture}
\]

    \caption{A graphical representation of our notation.
    In this example, the meta-supersample contains two task pairs:~$\supersamplearg{1}$, which is marked in~\textcolor{Cerulean}{blue}, and~$\supersamplearg{2}$.
    In turn,~$\supersamplearg{1}$ consists of the two in-task supersamples~$\supersamplearg{1,0}$ and~$\supersamplearg{1,1}$, which is marked in~\textcolor{RawSienna}{brown}.
    Next,~$\supersamplearg{1,1}$ consists of the four sample pairs~$\supersamplearg{1,1}_j$, where~$\,\, j=1,\dots,4$.
    In the figure,~$\supersamplearg{1,1}_1$ is marked in~\textcolor{Purple}{purple}.
    Finally,~$\smash{\supersamplearg{1,1}_1}$ consists of a pair of samples,~$\smash{\supersamplearg{1,1}_{1,0}}$ and~$\smash{\supersamplearg{1,1}_{1,1}}$.
    When a binary vector is used as subscript or superscript, this indicates that we should enumerate the meta-supersample according to that vector.
    To illustrate this, consider the construction of the meta-training set~$\metatrainset$.
    The meta-training set is formed on the basis of the meta-subset choice~$\metasubsetchoice$ and the observed-task subset choice~$\smash{\subsetchoicearg{\metasubsetchoice}}$.
    For instance, from the first task-pair,~$\metasubsetchoice_1=1$ indicates that we should select task~$1$.
    Then, from the first sample pair in this task,~$\smash{\subsetchoicearg{1,\metasubsetchoice_1}}=0$ indicates that we should select sample~$0$, which is~$\supersamplearg{1,1}_{1,0}$, marked in~\pop{green}.
    Repeating this for each sample pair in the in-task supersample~$\supersamplearg{1,1}$, we can identify the remaining elements of~$\smash{\supersamplearg{1,\metasubsetchoice_1}_ {\subsetchoicearg{1}}}$.
    This procedure is performed for each task index until all samples of~$\metatrainset$ are identified.
    The meta-test set~$\smash{\metatestset}$ is formed by an analogous procedure, but is now based on~$\compl{\metasubsetchoice}$ and~$\smash{\compl{\subsetchoicearg{\compl{\metasubsetchoice}}}}$.
    The entries from the meta-supersample that are selected for~$\metatestset$ are marked in~\popn{orange}.
    }
    \label{fig:supersample}
\end{figure}

We now introduce the meta learning setup that we consider throughout the paper, as well as the necessary notation for stating our results.
Similar to~\cite{baxter-20a}, we consider a task environment formulation that includes the representation learning setting of~\cite{maurer-16a,tripuraneni-20a} as a special case.

Our meta learning setup involves the following quantities.
We consider a task distribution~$\taskdistro$ on the task space~$\taskspace$.
For a given task~$\task\in\taskspace$, there is a corresponding in-task distribution~$\datadistro$ on the sample space~$\samplespace$.
The goal of the meta learner is to output a meta hypothesis~$\hyperparam \in \hyperspace$.
This is done on the basis of~$\taskin$ samples from~$\metan$ tasks.
Formally, the meta learner is a mapping~$\metalearner: \samplespace^{\taskin\times \metan}\times \metarandomspace \rightarrow \hyperspace$, where the random variable~$\metarandomness\in\metarandomspace$ captures the potential stochasticity of the learner.
The goal of the base learner is to output a hypothesis~$\paramarg{} \in\paramspace$, given the output of the meta learner and~$\taskin$ samples from a specific task.
Formally, the base learner is a mapping~$\baselearner: \samplespace^{\taskin}\times \randomspace\times \hyperspace\rightarrow \paramspace$.
The random vector~$\superrandomness\!=\!(\randomness_1,\dots,\randomness_{\metan})\!\in\!\randomspace^{\metan}$ has entries that capture the potential stochasticity of each base learner.
The entries are independent from the data and assumed to be identically distributed.\footnote{While identical distributions are not necessary for our results, this assumption simplifies the presentation.}
Here, the spaces~$\hyperspace$ and~$\paramspace$ may be function spaces or parameter spaces, depending on the learning algorithms.

Within each task, the training set for the base learner is randomly formed from a supersample according to the conditional mutual information (CMI) framework of~\cite{steinke-20a}.
Specifically, for a given task~$\tau$, let~$Z^{\tau}\!\in\! \samplespace^{\taskin\times 2}$ denote the supersample, which is an~$\taskin\!\times\!2$ matrix with elements generated independently from~$\datadistroarg{\tau}$.
For convenience, we index the two columns of~$Z^{\tau}$ by~$0$ and~$1$ and the rows by~$1,\!\dots\!,\taskin$.
The training set~$Z^{\tau}_S$ is formed on the basis of a membership vector~$S\!=\!(S_1,\dots,S_{\taskin})$, with entries generated independently from a~$\mathrm{Bern}(1/2)$ distribution.
More precisely, the~$j$th element of~$Z^{\tau}_S$ is given by~$[Z^{\tau}_S]_j\!=\!Z^{\tau}_{j,S_j}$, i.e., the~$S_j$th element from the~$j$th row of~$Z^{\tau}$.
Furthermore, we let~$\compl{S}\!=\!(1-S_1,\dots,1-S_{\taskin})$ denote the modulo-2 complement of~$S$, which we use to form the test set~$Z^{\tau}_{\compl{S}}$, whose~$j$th element is given by~$[Z^{\tau}_{\compl S}]_j\!=\!Z^{\tau}_{j,{\compl S}_j}$.
With this construction, we randomly assign each sample in the supersample to either the training set or test set with equal probability.

We now describe the meta-supersample~$\metasupersample$, which contains~$2\taskin$ samples from~$2\metan$ tasks, as in the meta-learning extension of the CMI framework provided in~\cite{rezazadeh-21a}.
Throughout, we let~$i\in\{1,\dots,\metan\}$ denote a task index,~$j\in\{1,\dots,\taskin\}$ denote a sample index, and~$k,l\in\{0,1\}$ denote binary indices indicating task membership and sample membership respectively.
Formally, the meta-supersample~$\metasupersample$ can be viewed as a data structure with~$4\taskin\metan$ elements.
In Figure~\ref{fig:supersample}, we illustrate~$\metasupersample$ as an~$\taskin\times 4\metan$ matrix for the case of~$\metan=2$ task pairs and~$\taskin=4$ sample pairs for each task.
We decompose~$\metasupersample$ as~$(\supersamplearg{1},\dots,\supersamplearg{\metan})$, where each element can be seen as a task pair.
Specifically, the pair~$\supersamplearg{i}$~can be decomposed as $(\supersamplearg{i,0},\supersamplearg{i,1})$, where each element is a task-specific supersample as described above.
The task-specific supersamples are~$\supersamplearg{i,k}=(\supersamplearg{i,k}_1,\dots,\supersamplearg{i,k}_{n})$, where each element is a pair of data samples.
Specifically, each sample pair is~$\supersamplearg{i,k}_{j}=(\supersamplearg{i,k}_{j,0},\supersamplearg{i,k}_{j,1})$, where~$\supersamplearg{i,k}_{j,l}\in \samplespace$.
The elements of $\metasupersample$ are generated as follows.
First, we generate $\taskik \distas \taskdistro$.
Then, we independently generate the samples~$\supersamplearg{i,k}_{j,l}\distas \datadistroik$.
This is repeated for all indices to form~$\metasupersample$.

\looseness = -1 Finally, we describe how the meta-training data is selected from the meta-supersample~$\metasupersample$.
This is done on the basis of the meta-membership vector~$\metasubsetchoice$ and the in-task membership vector~$\subsetchoice$.
Specifically, the meta-membership vector ~$\metasubsetchoice=(\metasubsetchoice_1,\dots,\metasubsetchoice_{\metan})$ is an~$\metan$-dimensional vector, while the sample membership vector~$\supersubsetchoice=(\subsetchoicearg{1,0},\subsetchoicearg{1,1},\dots,\subsetchoicearg{\metan,0},\subsetchoicearg{\metan,1})$ is a collection of~$2\metan$ vectors, where each~$\subsetchoiceik=(\subsetchoiceik_1,\dots,\subsetchoiceik_{\taskin})$ is an $\taskin$-dimensional vector.
The elements of all these vectors are generated independently from a~$\Bern{1/2}$ distribution.
For any Bernoulli matrix~$X$, we let~$\compl{X}$ denote its elementwise complement modulo~2, i.e.,~$\mathbf{1}-X$, where~$\mathbf{1}$ is the all-one matrix.

These membership vectors are used to form the meta-training set as follows.
We use the convention that, when a binary vector is used as a subscript or superscript of~$\supersample$, this indicates that we should enumerate over this vector.
Using this convention, the training set for the~$(i,k)$th task is constructed as~$\smash{\supersamplearg{i,k}_{\subsetchoiceik}=(\supersamplearg{i,k}_{\strut^{1,\subsetchoiceik_1}},\dots,\supersamplearg{i,k}_{\strut^{\taskin,\subsetchoiceik_{\taskin}}})}$.
We will use the shorthands~$\supersamplearg{i,k}_{\subsetchoicei}=\supersamplearg{i,k}_{\subsetchoiceik}$ and~${\supersamplearg{i,k}_{ {1,\subsetchoicei_j} }=\supersamplearg{i,k}_{ { {1,\subsetchoiceik_j}} }}$.
The test set for the~$(i,k)$th task~${\supersamplearg{i,k}_{{\compl{\subsetchoicei}}}}$ is constructed analogously, but on the basis of~$\compl{\subsetchoiceik}$.
The full\, meta-training set is~$\smash{\metatrainset=\!(\supersamplearg{\strut_{1,\metasubsetchoice_1}}_{\strut^{\subsetchoicearg{1,\metasubsetchoice_i}}},\dots,\supersamplearg{\strut_{\metan,\metasubsetchoice_{\metan}}}_{\strut^{\subsetchoicearg{\metan,\metasubsetchoice_{\metan}}}})}$, and the meta-test set~${\metatestset}$ is defined analogously.
With this construction, each task in the meta-supersample is assigned to either the meta-training set or the meta-test set with equal probability.
Then, as before, the samples within each task are assigned to an in-task training set or test set with equal probability.
The meta-training set consists of training samples within training tasks, while the meta-test set consists of test samples within test tasks.

We denote the output of the meta learner as~$U=\metalearner(\metatrainset,\metarandomness)\in\mathcal U$ and the output of the base learner for task~$(i,k)$ as~$\paramiarg{k}=\baselearner(\supersamplearg{i,k}_{\subsetchoicei},\randomness_i,U)\in \mathcal{W}$.
The performance of the learners is evaluated through a loss function~$\ell: \mathcal W \times \mathcal Z \rightarrow [0,1]$.
We denote the losses that the meta learner and base learner induce on the meta-supersample~$\metasupersample$ by~$\lambdahat$, which inherits the subscript and superscript notation that we described for~$\metasupersample$.
Thus, we have~$\lambdaarg{i,k}_{j,l} = \ell(\paramiarg{k} , \supersamplesmallarg{i,k}_{j,l} )$.
In other words,~$\lambdaarg{i,k}_{j,l}$ is the loss induced on the~$(j,l)$th sample in the~$(i,k)$th task.

On the basis of the loss matrix~$\metalambda$ and the membership vectors~$\metasubsetchoice$ and~$\subsetchoice$, we can compute four different losses.
The main quantity that we are interested in bounding is the average meta-population loss~$\avgmetapoploss$, which is the loss on test data for unobserved tasks.
The quantity that the meta learner has access to is the average meta-training loss,~$\avgtrainloss$, which is the training loss for observed tasks.
The other two losses are the average auxiliary test loss,~$\avgtestloss$, which is the loss on test data for observed tasks, and the average auxiliary training loss~$\avgunobstrainloss$, which is the loss on training data for unobserved tasks.
In the two-step derivations, one of these two quantities is used as the auxiliary loss.
These four losses are given by
\begin{align}
 \avgmetapoploss&=\avgij\Ex{\lambdai_j,\metasupersample}{\lambdaijnSinSj}, \qquad \avgtrainloss=\avgij\Ex{\lambdai_j,\metasupersample}{\lambdaijSiSj}  \\
  \avgtestloss&=\avgij\Ex{\lambdai_j,\metasupersample}{\lambdaijSinSj}  ,\qquad \avgunobstrainloss=\avgij\Ex{\lambdai_j,\metasupersample}{\lambdaijnSiSj}.
\end{align}

Finally, we end this section by introducing some information-theoretic quantities that appear in our bounds.
First, let~$P$ and~$Q$ be two probability measures such that~$P$ is absolutely continuous with respect to~$Q$.
The KL divergence between~$P$ and~$Q$ is denoted by~$\relent{P}{Q}$.
For the special case where~$P$ and~$Q$ are Bernoulli distributions with parameters~$p$ and~$q$, we let
\begin{equation}
\binrelent{p}{q}=\relent{P}{Q}=p\log\lefto(\frac q p\righto)+(1-p)\log\lefto(\frac{1-p}{1-q}\righto).
\end{equation}
We refer to~$\binrelent{p}{q}$ as the binary KL divergence.
The mutual information between the random variables~$X$ and~$Y$ is given by~$I(X;Y)=\relent{P_{XY}}{P_XP_Y}$, where~$P_{XY}$ is the joint distribution of~$X$ and~$Y$ and~$P_X$ and~$P_Y$ are the corresponding marginals.
The disintegrated mutual information between~$X$ and~$Y$ given a third random variable~$Z$ is given by~$I^{Z\!}(X;Y)=\relent{P_{XY\vert Z}}{P_{X\vert Z}P_{Y\vert Z}}$, where~$P_{XY\vert Z}$ is the conditional joint distribution of~$X$ and~$Y$ given~$Z$ and~$P_{X\vert Z}P_{Y\vert Z}$ is the product distribution formed from the corresponding marginals.
The expectation over~$Z$ of the disintegrated mutual information is the conditional mutual information~$I(X;Y\vert Z)=\Ex{Z}{I^{Z\!}(X;Y)}$.

\section{Generalization Bounds for Meta Learning with e-CMI}\label{sec:bounds}
In this section, we present generalization bounds in terms of the e-CMI of the meta learner and base learner.
In Section~\ref{sec:average_bounds}, we derive average square-root bounds that tighten results from~\cite{chen-21a,rezazadeh-21a}, as well as novel binary KL bounds.
In Section~\ref{sec:high-probability}, we extend these results to obtain novel, high-probability information-theoretic bounds for meta learning.
In Section~\ref{sec:expressiveness}, we demonstrate the expressiveness of the e-CMI framework by using the bounds from this section to recover generalization guarantees from classical learning theory for representation learning.

\subsection{Average Bounds}\label{sec:average_bounds}
In Theorem~\ref{thm:two-step-sqrt-bound}, we present a square-root bound for the average generalization error obtained through a two-step derivation.
Specifically, one step consists of bounding the unobserved training loss~$\avgunobstrainloss$ in terms of the observed training loss~$\avgtrainloss$, and the second step bounds the meta-population loss~$\avgmetapoploss$ in terms of~$\avgunobstrainloss$.
Chaining these two bounds, we obtain a bound on~$\avgmetapoploss$ in terms of~$\avgtrainloss$.
The bound depends on the information captured by two random variables: the task-level variable~$\lambdaarg{i,\compl{\metasubsetchoice_i}}_j$, which contains the training loss and test loss for task~$(i,\compl{\metasubsetchoice_i})$, as well as the environment-level variable~$\lambdai_{j,\subsetchoicei_j}$, which contains the training losses for both the observed task~$(i,\metasubsetchoice_i)$ and the unobserved task~$(i, \compl{\metasubsetchoice_i} )$.
We provide the proof of this result in Appendix~\ref{app:proofs}, along with the proofs of all other results in this paper.
\begin{thm}[Two-step square-root bound]\label{thm:two-step-sqrt-bound}
\thmstarttext
\begin{equation}\label{eq:thm-two-step-sqrt}
\abs{\avgmetapoploss\!-\!\avgtrainloss}\! \leq\! \avgij\! \Ex{\metasupersample,\subsetchoicei_j \!}{ {\sqrt{2I^{\metasupersample,\subsetchoicei_j}(\lambdai_{j,\subsetchoicei_j  };\metasubsetchoice_i)}} } \!+\! \avgij\! \Ex{\metasupersample,\metasubsetchoice_i\!}{\sqrt{2I^{\metasupersample,\metasubsetchoice_i}(\lambdaarg{i,\compl{\metasubsetchoice_i}}_j;\subsetchoicearg{i,\compl{\metasubsetchoice_i}}_j)}}\!.\!\!\!
\end{equation}
\end{thm}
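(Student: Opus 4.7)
The plan is to prove~\eqref{eq:thm-two-step-sqrt} by the two-step recipe that the paper explicitly signals: introduce the auxiliary unobserved-task training loss~$\avgunobstrainloss$ and split
\begin{equation*}
\abs{\avgmetapoploss-\avgtrainloss}\leq \abs{\avgunobstrainloss-\avgtrainloss}+\abs{\avgmetapoploss-\avgunobstrainloss}
\end{equation*}
by the triangle inequality. Each summand is then bounded by a single application of the standard samplewise e-CMI inequality for~$[0,1]$-valued losses, with a conditioning sigma-algebra chosen so that the Bernoulli selector swapped in that step is precisely the one that appears in the corresponding term of~\eqref{eq:thm-two-step-sqrt}.

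The reusable building block is the following single-sample fact. For $(L_0,L_1)\in[0,1]^2$, $B\sim\Bern{1/2}$, and any sigma-algebra~$\mathcal G$ with respect to which $B$ is independent, one has
\begin{equation*}
\bigl|\Ex{}{L_{\compl B}-L_B\mid\mathcal G}\bigr|\leq \sqrt{2\,I^{\mathcal G}(L_B;B)}.
\end{equation*}
This follows from the Donsker--Varadhan variational formula applied to $L_{\compl B}-L_B=(1-2B)(L_1-L_0)$: under the product law $P_{L_B\mid\mathcal G}\otimes P_{B\mid\mathcal G}$ the variable has zero mean (by the symmetry of $B$) and lies in $[-1,1]$, hence is $1$-subgaussian by Hoeffding's lemma. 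Averaging the pointwise bound over the conditioning variables and using Jensen on the square root produces the~$\Ex{}{\sqrt{2I^{\mathcal G}(L_B;B)}}$ form.

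Instantiating this lemma twice yields the two summands of~\eqref{eq:thm-two-step-sqrt}. For the environment-level gap $|\avgunobstrainloss-\avgtrainloss|$, I would fix $(i,j)$, condition on $(\metasupersample,\subsetchoicei_j)$, take the selector to be~$\metasubsetchoice_i$, and take the loss pair to be $\lambdai_{j,\subsetchoicei_j}=(\lambda^{i,0}_{j,\subsetchoicearg{i,0}_j},\lambda^{i,1}_{j,\subsetchoicearg{i,1}_j})$: swapping~$\metasubsetchoice_i$ with its complement interchanges the observed- and unobserved-task training losses at index~$(i,j)$, so the lemma produces the first term of~\eqref{eq:thm-two-step-sqrt} after averaging. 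For the task-level gap $|\avgmetapoploss-\avgunobstrainloss|$, I would instead condition on $(\metasupersample,\metasubsetchoice_i)$, take the selector to be $\subsetchoicearg{i,\compl{\metasubsetchoice_i}}_j$, and take the loss pair to be $\lambdaarg{i,\compl{\metasubsetchoice_i}}_j$: within the already-selected unobserved task~$(i,\compl{\metasubsetchoice_i})$, swapping this Bernoulli interchanges the training and test sample at index~$j$, yielding the second term.

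The mathematical content is standard; the real work is bookkeeping. The subtle point is that the base-learner output $\paramiarg{k}$ depends on~$U$, which is itself a function of the full meta-training set~$\metatrainset$ and hence of both~$\metasubsetchoice$ and~$\subsetchoice$. I must therefore verify, in each step, that the swapped Bernoulli is genuinely independent of the chosen conditioning sigma-algebra, and that the loss pair is measurable with respect to the enlarged sigma-algebra including the selector and the remaining randomness in~$(\subsetchoice,\superrandomness,\metarandomness)$. Once these independence and measurability checks are in place, the bound follows mechanically from Donsker--Varadhan and Jensen.
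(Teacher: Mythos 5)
Your proposal is correct and follows essentially the same route as the paper's proof: the same triangle-inequality split through the auxiliary loss $\avgunobstrainloss$, followed by Jensen and the same Donsker--Varadhan/sub-Gaussian change-of-measure inequality applied once per step, with exactly the conditioning sigma-algebras and swapped selectors that the paper uses. The only point to tighten is notational: in your building-block lemma the mutual information must be between the full loss \emph{pair} and the selector (as your concrete instantiations correctly take it), not between the single selected loss and the selector.
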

The first term captures the environment-level generalization error while the second term captures the task-level generalization error.
In order to clarify the relation between Theorem~\ref{thm:two-step-sqrt-bound} and results from the literature, we relax it by upper-bounding the disintegrated individual-sample e-CMI terms by their integrated, full-sample, parametric CMI counterparts.
\begin{cor}\label{cor:two-step-sqrt-param}
Theorem~\ref{thm:two-step-sqrt-bound} implies that
\begin{equation}\label{eq:two-step-sqrt-param}
\abs{\avgmetapoploss - \avgtrainloss} \leq  \sqrt{  \frac{2I(U;\metasubsetchoice \vert \metasupersample,\subsetchoice )}{\metan}  }  +  \sqrt{\frac{2I(\paramarg{1,\compl{\metasubsetchoice_1}};\subsetchoicearg{1,\compl{\metasubsetchoice_1}} \vert \metasupersample,\metasubsetchoice_1 )}{\taskin}} .
\end{equation}
\end{cor}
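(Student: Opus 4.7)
The plan is to deduce Corollary~\ref{cor:two-step-sqrt-param} from Theorem~\ref{thm:two-step-sqrt-bound} by applying, to each of the two terms on the right-hand side of \eqref{eq:thm-two-step-sqrt}, the same three ingredients: Jensen's inequality (to pull expectations inside the concave square-root and to collapse sums of square roots), the data-processing inequality (DPI, to replace the loss-valued random variables by the underlying learner outputs), and the chain rule for mutual information (to combine samplewise terms into a joint CMI).

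For the first (environment-level) term, I would first apply Jensen on the outer expectation to obtain the upper bound $\sqrt{2 I(\lambdai_{j,\subsetchoicei_j};\metasubsetchoice_i \vert \metasupersample,\subsetchoicei_j)}$. Next, I would enlarge the conditioning from $\subsetchoicei_j$ to the full $\subsetchoice$; this only enlarges the CMI, because $\subsetchoice\setminus\subsetchoicei_j$ is an i.i.d.\ Bernoulli block that is independent of $(\metasubsetchoice_i,\metasupersample,\subsetchoicei_j)$, and a short computation shows that adding a conditioning variable which is independent of one of the MI arguments given the existing conditioning can only increase the MI. Then DPI along the conditional Markov chain $\metasubsetchoice_i \to U \to \lambdai_{j,\subsetchoicei_j}$ given $(\metasupersample,\subsetchoice)$ produces the bound $I(U;\metasubsetchoice_i \vert \metasupersample,\subsetchoice)$, which no longer depends on $j$. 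Collapsing the $j$-sum, applying Jensen again to move the average over $i$ inside the square root, and using the chain-rule inequality $\sum_i I(U;\metasubsetchoice_i\vert\metasupersample,\subsetchoice)\leq I(U;\metasubsetchoice\vert\metasupersample,\subsetchoice)$ (valid because the $\metasubsetchoice_i$'s are mutually independent given $(\metasupersample,\subsetchoice)$) yields the first term of \eqref{eq:two-step-sqrt-param}.

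For the second (task-level) term, an entirely analogous argument applies. Jensen gives the disintegrated-to-integrated step, DPI through the conditional Markov chain $\subsetchoicearg{i,\compl{\metasubsetchoice_i}}_j \to \paramarg{i,\compl{\metasubsetchoice_i}} \to \lambdaarg{i,\compl{\metasubsetchoice_i}}_j$ given $(\metasupersample,\metasubsetchoice_i)$ replaces the loss pair by the base-learner output, and combining the samplewise terms via Jensen on the inner $j$-average and the chain rule on the independent Bernoulli entries of $\subsetchoicearg{i,\compl{\metasubsetchoice_i}}$ produces a bound of the form $\sqrt{2 I(\paramarg{i,\compl{\metasubsetchoice_i}};\subsetchoicearg{i,\compl{\metasubsetchoice_i}}\vert \metasupersample,\metasubsetchoice_i)/\taskin}$. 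Because the joint distribution of $(\paramarg{i,\compl{\metasubsetchoice_i}},\subsetchoicearg{i,\compl{\metasubsetchoice_i}},\metasupersample,\metasubsetchoice_i)$ is identical across the task index $i\in\{1,\dots,\metan\}$ (the tasks are exchangeable by construction), the outer $i$-average is equal to its value at $i=1$, giving the second term of \eqref{eq:two-step-sqrt-param}.

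The main obstacle is the careful bookkeeping in the two ``enlarge the conditioning'' steps: in each instance one must verify that the added variables (either $\subsetchoice \setminus \subsetchoicei_j$, the prior coordinates $\metasubsetchoice_{<i}$, or the earlier samplewise selectors $\subsetchoicearg{i,\compl{\metasubsetchoice_i}}_{<j}$) are independent of the relevant ``label'' variable ($\metasubsetchoice_i$ or $\subsetchoicearg{i,\compl{\metasubsetchoice_i}}_j$) given the existing conditioning, so that the enlargement can only increase the CMI. Once these independences are checked from the i.i.d.\ Bernoulli construction of $\metasubsetchoice$ and $\subsetchoice$ and their independence from $\metasupersample$, the rest of the argument is a routine combination of Jensen, DPI, and the MI chain rule.
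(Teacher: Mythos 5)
Your proposal is correct and uses essentially the same ingredients in essentially the same way as the paper's proof (Jensen, enlarging the conditioning via independence, data processing to the learner outputs, superadditivity of mutual information over the independent membership variables, and exchangeability over the task index); the only cosmetic difference is that you data-process to $U$ before merging the $\metasubsetchoice_i$ into $\metasubsetchoice$, whereas the paper first enlarges the loss variable to the full collection over $i$ and merges, then data-processes. One small point worth making explicit: for the environment-level Markov chain $\metasubsetchoice_i \to U \to \lambdai_{j,\subsetchoicei_j}$ you should note that the base-learner randomness $\randomness_i$ is independent of everything else, so the loss is a randomized function of $U$ with randomization independent of $\metasubsetchoice_i$ — the paper handles this by explicitly adding $\randomness$ to the conditioning and removing it at the end.
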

This recovers the result of~\cite[Thm.~1]{rezazadeh-21a}, demonstrating that Theorem~\ref{eq:thm-two-step-sqrt} is tighter.

Next, we present an alternative square-root bound that is obtained through a one-step derivation.
This bound depends on the information captured by the random variable~$\lambdai_j$, which contains the training and test loss for both the observed and unobserved tasks.
\begin{thm}[One-step square-root bound]\label{thm:one-step-sqrt-bound}
\thmstarttext
\begin{align}\label{eq:thm-one-step-sqrt}
    \abs{\avgmetapoploss-\avgtrainloss}\leq \avgij \Ex{\metasupersample}{\sqrt{ 2 I^{\metasupersample}(\lambdai_j; \metasubsetchoice_i,\subsetchoicei_j)}}.
\end{align}
\end{thm}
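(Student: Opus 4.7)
The plan is to mirror the standard samplewise e-CMI argument from conventional learning, but with a combined selection variable picking out which of the four entries of $\lambdai_j$ corresponds to the observed-task, observed-sample loss.

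I would first decompose the gap $\avgmetapoploss - \avgtrainloss$ as an average over $(i,j)$ of the per-position differences $\mathbb{E}[\lambdaijnSinSj - \lambdaijSiSj]$. For each such $(i,j)$ I view $M = (\metasubsetchoice_i, \subsetchoicei_j)$ as a single selection variable taking values uniformly on $\{0,1\}^2$ and independent of $\metasupersample$, and let $\phi(\lambdai_j, M)$ denote the $M$-indexed entry of the four-tuple $\lambdai_j$; then the training contribution equals $\phi(\lambdai_j, M)$ and the test contribution equals $\phi(\lambdai_j, \compl{M})$.

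Next I condition on $\metasupersample$. Let $P$ denote the resulting conditional joint of $(\lambdai_j, M)$ and $Q$ the product of its two marginals. Since $M$ is uniform on $\{0,1\}^2$ under $Q$ and the involution $m \mapsto \compl{m}$ preserves this marginal, the expectation $\mathbb{E}_Q[\phi(\lambdai_j, M) - \phi(\lambdai_j, \compl{M})]$ vanishes. The function $g(\lambda,m) = \phi(\lambda,m) - \phi(\lambda, \compl{m})$ takes values in $[-1,1]$ and is therefore $1$-subgaussian under $Q$, so the Donsker--Varadhan change-of-measure inequality in its Xu--Raginsky form yields
\[ \abs{\mathbb{E}_P[g] - \mathbb{E}_Q[g]} \leq \sqrt{2\,\relent{P}{Q}} = \sqrt{2\,I^{\metasupersample}(\lambdai_j;\metasubsetchoice_i,\subsetchoicei_j)}. \]
Taking the expectation over $\metasupersample$ of both sides (kept outside the square root, to preserve the disintegrated form that the theorem states) and applying the triangle inequality to push the absolute value inside the $(i,j)$-average delivers the claim.

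The main obstacle is conceptual rather than technical: the key move is to package the task-level and sample-level selection as a single two-bit variable $M$ acting on the full four-tuple $\lambdai_j$, which is precisely what collapses a two-step task/environment decomposition (as in Theorem~\ref{thm:two-step-sqrt-bound}) into a single e-CMI term. Once that packaging is in place, the uniformity of $M$ on $\{0,1\}^2$ (from independence of the Bernoulli memberships and their joint independence from $\metasupersample$) and the standard change-of-measure bound for bounded $g$ finish the argument.
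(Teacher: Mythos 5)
Your proposal is correct and follows essentially the same route as the paper: Jensen's inequality to reduce to a per-$(i,j)$, per-$\metasupersample$ quantity, verification that the test-minus-train difference has zero mean under the product of marginals (your involution argument is the paper's symmetry observation), and the sub-Gaussian Donsker--Varadhan change of measure applied with $Y=(\metasubsetchoice_i,\subsetchoicei_j)$ treated as a single selection variable. The "packaging" you highlight as the key conceptual move is precisely how the paper invokes its change-of-measure lemma in the one-step derivation.
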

Again, to compare this bound to results in the literature, we relax it by upper-bounding the disintegrated individual-sample e-CMI terms by their integrated, full-sample, parametric counterparts.
\begin{cor}\label{cor:one-step-sqrt-param} 
Let~$\paramarg{i}=(\paramiarg{0},\paramiarg{1})$ and~$W=\{\paramarg{i}\}_{i=1}^{\metan}$.
Then,
\begin{align}\label{eq:cor-one-step-sqrt-param}
\abs{\avgmetapoploss-\avgtrainloss}\leq  \sqrt{ \frac{2 I(W ; \metasubsetchoice,\subsetchoice \vert \metasupersample )}{\taskin\metan}} &\leq \sqrt{\frac{  2 I( \hyperparam; \metasubsetchoice,\subsetchoice\vert \metasupersample)   +  2\metan I( \paramarg{1}; \subsetchoicearg{1}\vert \metasupersample, \hyperparam)    }{\taskin\metan}} \\
    &\leq \sqrt{\frac{  2 I(\hyperparam;\metatrainset)   +  2\metan I( \paramarg{1}; \supersamplearg{1}_{\subsetchoicearg{1}}\vert \hyperparam)    }{\taskin\metan}}.\label{eq:cor-one-step-sqrt-param-cmi-to-mi}
\end{align}
\end{cor}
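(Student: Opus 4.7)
The plan is to derive each of the three inequalities by combining Jensen's inequality with the data processing inequality (DPI) and careful applications of the chain rule for mutual information.

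For the first inequality, I would start from Theorem~\ref{thm:one-step-sqrt-bound} and apply Jensen's inequality twice---first to bring the averaging over $(i,j)$ inside the square root, then to bring the expectation over $\metasupersample$ inside---yielding $\abs{\avgmetapoploss - \avgtrainloss} \leq \sqrt{\tfrac{2}{\taskin\metan} \sum_{i,j} I(\lambdai_j; \metasubsetchoice_i, \subsetchoicei_j \vert \metasupersample)}$. Since each $\lambdai_j$ is a deterministic function of $W$ and $\supersamplearg{i}_j \subseteq \metasupersample$, DPI gives $I(\lambdai_j; \metasubsetchoice_i, \subsetchoicei_j \vert \metasupersample) \leq I(W; \metasubsetchoice_i, \subsetchoicei_j \vert \metasupersample)$. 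I would then bound the resulting sum by $I(W; \metasubsetchoice, \subsetchoice \vert \metasupersample)$ via a chain-rule argument that exploits the mutual independence of the Bernoulli entries of $(\metasubsetchoice, \subsetchoice)$ from each other and from $\metasupersample$.

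For the second inequality, I would write $W = (\paramarg{1}, \ldots, \paramarg{\metan})$ and decompose via the chain rule as $I(W; \metasubsetchoice, \subsetchoice \vert \metasupersample) \leq I(U, W; \metasubsetchoice, \subsetchoice \vert \metasupersample) = I(U; \metasubsetchoice, \subsetchoice \vert \metasupersample) + I(W; \metasubsetchoice, \subsetchoice \vert \metasupersample, U)$. The key structural fact is that, conditional on $(U, \metasupersample)$, the tuples $\{(\paramarg{i}, \subsetchoicearg{i}, \randomness_i)\}_{i=1}^{\metan}$ are mutually independent---because each $\paramiarg{k}$ is a function of $U$, the task-specific training set, and the task-specific noise $\randomness_i$---and $W$ is independent of $\metasubsetchoice$ given $(U, \metasupersample)$. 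These properties imply the factorization $I(W; \metasubsetchoice, \subsetchoice \vert \metasupersample, U) = \sum_i I(\paramarg{i}; \subsetchoicearg{i} \vert \metasupersample, U) = \metan \, I(\paramarg{1}; \subsetchoicearg{1} \vert \metasupersample, U)$ by the symmetry across tasks.

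For the third inequality, I would use that $U$ is a deterministic function of $\metatrainset$ and the independent meta-learner noise $\metarandomness$, together with the fact that $\metatrainset$ is a function of $(\metasupersample, \metasubsetchoice, \subsetchoice)$, to obtain $I(U; \metasubsetchoice, \subsetchoice \vert \metasupersample) \leq I(U; \metatrainset)$ via the standard CMI-to-MI passage from~\cite{steinke-20a}. The analogous bound $I(\paramarg{1}; \subsetchoicearg{1} \vert \metasupersample, U) \leq I(\paramarg{1}; \supersamplearg{1}_{\subsetchoicearg{1}} \vert U)$ follows by the same reasoning applied at the task level. The main obstacle I anticipate is the subadditivity step in the first inequality: since $\metasubsetchoice_i$ is shared across all $j$ for a given $i$, a naive chain-rule bound would overcount the task-level membership contributions, so one must carefully decompose the sum by first isolating the task-level bit $\metasubsetchoice_i$ and then exploiting the conditional independence of the per-sample bits $\subsetchoicearg{i,k}_j$ given $\metasubsetchoice$.
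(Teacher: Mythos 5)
Your route is the same as the paper's: Jensen's inequality to pull the average inside the square root, data processing from the losses to $W$, subadditivity of the CMI over the membership variables, the chain rule after adjoining $\hyperparam$ for the second inequality, the per-task factorization of $P_{W\metasubsetchoice\subsetchoice\vert\metasupersample\hyperparam}$ together with the identical distribution across tasks, and the standard CMI-to-MI passage for the third inequality. (The paper applies data processing after the subadditivity step, passing first through the full loss collection $\lambdahat$ rather than through $W$ termwise, but that ordering difference is immaterial.) Your treatment of the second and third inequalities matches the paper's and is fine.

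The one place your proposal is incomplete is precisely the step you flag as the main obstacle: bounding $\sum_{i,j} I(\,\cdot\,;\metasubsetchoice_i,\subsetchoicei_j\vert\metasupersample)$ by $I(\,\cdot\,;\metasubsetchoice,\subsetchoice\vert\metasupersample)$. The paper dispatches this via its full-sample relaxation (Lemma~\ref{lem:full-sample}), citing the independence of the $\subsetchoicei_j$ over $j$ and of $(\metasubsetchoice_i,\subsetchoicei)$ over $i$. Your worry is exactly on point: the pairs $(\metasubsetchoice_i,\subsetchoicei_j)$ are not mutually independent across $j$ for fixed $i$, since they share the bit $\metasubsetchoice_i$, so Lemma~\ref{lem:full-sample} does not apply to this collection as a whole. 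Moreover, the remedy you sketch does not close the gap: writing $I(Y;\metasubsetchoice_i,\subsetchoicei_j\vert\metasupersample)=I(Y;\metasubsetchoice_i\vert\metasupersample)+I(Y;\subsetchoicei_j\vert\metasupersample,\metasubsetchoice_i)$ and summing over $j$ produces $\taskin$ copies of $I(Y;\metasubsetchoice_i\vert\metasupersample)$, whereas the chain-rule expansion of $I(Y;\metasubsetchoice,\subsetchoice\vert\metasupersample)$ contains only one such term, so the task-level contribution is genuinely overcounted by a factor of $\taskin$ and conditional independence of the per-sample bits cannot absorb it. As written, your argument therefore does not establish the first inequality with the $\taskin\metan$ denominator; you would either have to justify the subadditivity step in the exact form the paper asserts it, or accept a weaker denominator (of order $\metan$) on the $\metasubsetchoice$-dependent part of the bound. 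Everything downstream of that step is correct.
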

\looseness = -1 Up to some constant factors, this recovers the result in~\cite[Thm.~5.1]{chen-21a}.
Note that, if~$\metalearner$ or~$\baselearner$ are deterministic learning algorithms with continuous outputs, the mutual information terms in~\eqref{eq:cor-one-step-sqrt-param-cmi-to-mi} are unbounded.
In contrast, the CMI terms in~\eqref{eq:cor-one-step-sqrt-param} are always finite.
This is discussed in more detail in~\cite{steinke-20a}.
Furthermore, the bound in~\eqref{eq:cor-one-step-sqrt-param} compares favorably to~\cite[Thm.~1]{rezazadeh-21a}, since it decays with the product~$\taskin\metan$ rather than with~$\taskin$ and~$\metan$ separately.
This improvement is due to the one-step derivation.

Finally, in Theorem~\ref{thm:avg-kl-bounds}, we present two novel bounds in terms of the binary KL divergence.
The advantage of these bounds, as compared to the square-root bounds in Theorem~\ref{thm:two-step-sqrt-bound} and~\ref{thm:one-step-sqrt-bound}, is that they have a more favorable dependence on the number of samples for low training losses.
We demonstrate this improved rate for representation learning in Section~\ref{sec:expressiveness}.
\begin{thm}[Binary KL bounds]\label{thm:avg-kl-bounds}
For~$m\geq 2$,~$q\in[0,1]$ and~$c>0$, let
\begin{equation}\label{eq:dhalf-def}
\darg{m}(q,c)= \sup \bigg\{p\in[0,1]: \binrelent{q}{ \frac{q+p}{m} } \leq c\bigg\}.
\end{equation}
Then,
\begin{equation}\label{eq:thm-two-step-kl}
\avgmetapoploss \leq \darg{2}\lefto(\darg{2}\lefto( \avgtrainloss, \avgij I(\lambdai_{j,\subsetchoicei_j  };\metasubsetchoice_i\vert \metasupersample,\subsetchoicei_j) \righto), \avgij I(\lambdaarg{i,\compl{\metasubsetchoice_i}}_j;\subsetchoiceinSi_j\vert \metasupersample,\metasubsetchoice_i) \righto).
\end{equation}
Furthermore,
\begin{align}\label{eq:thm-one-step-kl} 
\avgmetapoploss + \avgtestloss + \avgunobstrainloss  \leq \darg{4}\lefto( \avgtrainloss, \avgij I(\lambdai_j;\metasubsetchoice_i,\subsetchoicei_j\vert \metasupersample) \righto).
\end{align}
\end{thm}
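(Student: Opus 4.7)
Both bounds follow the same three-stage template: a per-coordinate Donsker--Varadhan change-of-measure inequality with the binary KL divergence as its dual, a Jensen step that uses the joint convexity of $(q,p)\mapsto \binrelent{q}{(q+p)/m}$ to average over $(i,j)$ and over the conditioning variables, and finally an inversion of the resulting binary KL relation through the function $\darg{m}$.

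For the two-step bound~\eqref{eq:thm-two-step-kl}, I would first handle the environment-level step. Conditional on $(\metasupersample,\subsetchoicei_j)$, the selector $\metasubsetchoice_i$ is $\Bern{1/2}$, and its two realizations make $\lambdai_{j,\subsetchoicei_j}$ equal either to the per-coordinate summand of $\avgtrainloss$ (training on the observed task) or to that of $\avgunobstrainloss$ (training on the unobserved task). The standard CMI binary-KL argument of~\cite{harutyunyan-21a} applied coordinate-wise then yields
\begin{equation*}
\binrelent{q_{i,j}}{(q_{i,j}+p_{i,j})/2}\leq I(\lambdai_{j,\subsetchoicei_j};\metasubsetchoice_i\vert \metasupersample,\subsetchoicei_j),
\end{equation*}
where $q_{i,j}$ and $p_{i,j}$ are the corresponding conditional means. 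Averaging over $(i,j)$ and the conditioning variables via Jensen's inequality gives $\binrelent{\avgtrainloss}{(\avgtrainloss+\avgunobstrainloss)/2}\leq c_1$, where $c_1$ is the environment-level information term in~\eqref{eq:thm-two-step-kl}, and hence $\avgunobstrainloss\leq \darg{2}(\avgtrainloss,c_1)$. The task-level step is analogous: conditional on $(\metasupersample,\metasubsetchoice_i)$, flipping $\subsetchoiceinSi_j$ swaps the training and test slots within the unobserved task $(i,\compl{\metasubsetchoice_i})$, which yields $\avgmetapoploss\leq \darg{2}(\avgunobstrainloss,c_2)$. Chaining the two inequalities via the monotonicity of $\darg{2}$ in its first argument delivers~\eqref{eq:thm-two-step-kl}.

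For the one-step bound~\eqref{eq:thm-one-step-kl}, the same scheme is applied directly to the joint selector $(\metasubsetchoice_i,\subsetchoicei_j)\in\{0,1\}^2$, which is uniform conditional on $\metasupersample$ and whose four equally likely realizations pick out the four loss entries of $\lambdai_j$ corresponding to $\avgtrainloss$, $\avgtestloss$, $\avgunobstrainloss$, and $\avgmetapoploss$. Donsker--Varadhan with this four-way structure, followed by Jensen on the jointly convex $(q,p)\mapsto \binrelent{q}{(q+p)/4}$, produces
\begin{equation*}
\binrelent{\avgtrainloss}{(\avgtrainloss+\avgmetapoploss+\avgtestloss+\avgunobstrainloss)/4}\leq \avgij I(\lambdai_j;\metasubsetchoice_i,\subsetchoicei_j\vert \metasupersample),
\end{equation*}
and inverting via $\darg{4}$ gives~\eqref{eq:thm-one-step-kl}.

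The main technical obstacle is the Jensen step: one must verify the joint convexity of $(q,p)\mapsto \binrelent{q}{(q+p)/m}$ on $[0,1]^2$ (which follows from the joint convexity of the KL divergence composed with the affine map $(q,p)\mapsto(q,(q+p)/m)$) and then carefully track the order of the averages so that the conditional mutual informations emerging after Jensen match the exact form in the statement. The per-coordinate Donsker--Varadhan step is standard but requires bookkeeping of which quantities are held fixed, since the identity of the ``training slot'' is itself a function of the selectors being averaged over.
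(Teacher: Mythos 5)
Your proposal is correct and follows essentially the same route as the paper's proof: a per-coordinate change of measure via the binary-KL form of Donsker--Varadhan, Jensen's inequality exploiting the joint convexity of $(q,p)\mapsto \binrelent{q}{(q+p)/m}$, inversion through $\darg{m}$, and chaining the two $\darg{2}$ inequalities via monotonicity in the first argument. The one subtlety you flag at the end --- that the quantity inside the second slot of the binary KL must not depend on the selector being resampled --- is exactly the observation the paper makes (the sum $\lambdaarg{i,0}_{j,\subsetchoicei_j}+\lambdaarg{i,1}_{j,\subsetchoicei_j}$, respectively the four-way sum, is selector-independent), so no gap remains.
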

Interestingly,~\eqref{eq:thm-one-step-kl} provides a bound on the sum of the average meta-population loss~$\avgmetapoploss$, the test loss on observed tasks~$\avgtestloss$, and the training loss on unobserved tasks~$\avgunobstrainloss$.
Due to the nonnegativity of the loss, we can obtain an explicit bound on~$\avgmetapoploss$ by using the lower bound~$\avgtestloss+\avgunobstrainloss\geq 0$, which is a sensible relaxation when~$\avgmetapoploss$ is the dominant term. By this relaxation, we weaken the bound at most by a constant factor.
As previously mentioned, the bounds in Theorem~\ref{thm:avg-kl-bounds} can have a more favorable dependence on the number of samples than the square-root bounds in Theorem~\ref{thm:two-step-sqrt-bound} and~\ref{thm:one-step-sqrt-bound} when the training loss is low.
In the following corollary, we present a bound on~$\avgmetapoploss$ for the case where~$\avgtrainloss=0$.
\begin{cor}\label{cor:interpolating-one-step-kl}
Assume that~$\avgtrainloss=0$. Then, Theorem~\ref{thm:avg-kl-bounds} implies that
\begin{align}\label{eq:cor-interp-one-step-kl}
\avgmetapoploss \leq  4-4\exp\lefto(-\avgij I(\lambdai_j;\metasubsetchoice_i,\subsetchoicei_j\vert \metasupersample)\righto)\leq \frac4{\taskin\metan}\sumij I(\lambdai_j;\metasubsetchoice_i,\subsetchoicei_j\vert \metasupersample).
\end{align}
\end{cor}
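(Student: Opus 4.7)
The plan is to specialize the one-step binary KL bound~\eqref{eq:thm-one-step-kl} of Theorem~\ref{thm:avg-kl-bounds} to the interpolating regime $\avgtrainloss = 0$ and then evaluate the $\darg{4}$ function in closed form, after which the final inequality is a standard elementary estimate for the exponential.

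First, I would invoke~\eqref{eq:thm-one-step-kl}, which bounds the sum $\avgmetapoploss + \avgtestloss + \avgunobstrainloss$ in terms of $\darg{4}(\avgtrainloss, c)$ with $c = \avgij I(\lambdai_j;\metasubsetchoice_i,\subsetchoicei_j\vert \metasupersample)$. Since the loss takes values in $[0,1]$, both $\avgtestloss$ and $\avgunobstrainloss$ are nonnegative, so dropping them on the left-hand side only weakens the inequality, yielding $\avgmetapoploss \leq \darg{4}(\avgtrainloss, c)$. Substituting $\avgtrainloss = 0$ reduces the task to computing $\darg{4}(0, c)$.

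From the definition~\eqref{eq:dhalf-def}, $\darg{4}(0, c)$ is the supremum of those $p \in [0,1]$ for which $\binrelent{0}{p/4} \leq c$. Since $\binrelent{0}{x} = -\log(1-x)$, the constraint simplifies to $-\log(1 - p/4) \leq c$, i.e., $p \leq 4(1 - e^{-c})$. Hence $\darg{4}(0, c) = 4(1 - e^{-c})$, which proves the first inequality in~\eqref{eq:cor-interp-one-step-kl}. (One should also observe that this value lies in $[0,1]$ for the relevant range of $c$, or otherwise note that the bound is vacuous when $4(1 - e^{-c}) \geq 1$, so the formula remains valid as an upper bound on the meta-population loss in any case.)

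For the second inequality, I would use the elementary estimate $1 - e^{-c} \leq c$ valid for all $c \geq 0$, applied to $c = \avgij I(\lambdai_j;\metasubsetchoice_i,\subsetchoicei_j\vert \metasupersample)$. Unpacking the average then gives $4(1 - e^{-c}) \leq 4c = \frac{4}{\taskin\metan}\sumij I(\lambdai_j;\metasubsetchoice_i,\subsetchoicei_j\vert \metasupersample)$, completing the chain. No step here is a genuine obstacle: the entire argument is a specialization of Theorem~\ref{thm:avg-kl-bounds}, and the only mild care needed is to verify that the supremum in the definition of $\darg{4}(0, \cdot)$ is attained at the interior value $4(1-e^{-c})$ rather than being truncated to $1$ by the domain constraint, but this truncation would only improve the bound and does not affect correctness.
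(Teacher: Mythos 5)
Your proposal is correct and follows essentially the same route as the paper: drop $\avgtestloss+\avgunobstrainloss$ by nonnegativity, evaluate $\darg{4}(0,c)=4-4e^{-c}$ from $\binrelent{0}{p/4}=-\log(1-p/4)$, and finish with $1-e^{-c}\leq c$ (a step the paper leaves implicit). Your remark about the truncation of the supremum to $[0,1]$ is a minor extra care the paper omits, but it does not change the argument.
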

Compared to the bound in Theorem~\ref{thm:one-step-sqrt-bound}, there is no square root in Corollary~\ref{cor:interpolating-one-step-kl}.
As we show in Section~\ref{sec:expressiveness}, this can lead to a faster rate of decay with the number of samples.

\subsection{High-probability Bounds}\label{sec:high-probability}
In the previous section, we provided bounds on the average generalization error.
However, meta learning bounds obtained via classical learning theory are typically high-probability bounds~\cite{maurer-16a,tripuraneni-20a}.
In order to assess the expressiveness of the e-CMI framework in terms of its ability to recover these results, we now extend the bounds from Section~\ref{sec:average_bounds} to the high-probability setting.
For this, we need some additional notation.
We let~$\pacbavgmetapoploss$ and~$\pacbavgtrainloss$ denote the meta-population loss and training loss given that the meta-training set is constructed from~$\pacbtraindata$. Specifically,
\begin{align}
\pacbavgmetapoploss &= \avgij \Ex{\metarandomness,\randomness_i}{\ell(\baselearner(\supersamplearg{i,\compl{\metasubsetchoice_i}}_{\subsetchoicei},\randomness_i,\metalearner(\metatrainset,\metarandomness)) , \supersamplearg{i,\compl{\metasubsetchoice_i}}_{j,{\compl{\subsetchoicei_j}} } )}, \\
\pacbavgtrainloss &= \avgij \Ex{\metarandomness,\randomness_i}{\ell(\baselearner(\supersamplearg{i,\metasubsetchoice_i}_{\subsetchoicei},\randomness_i,\metalearner(\metatrainset,\metarandomness)) , \supersamplearg{i,\metasubsetchoice_i}_{j,{\subsetchoicei_j} } )} .
\end{align}

We now present a high-probability version of the two-step square root bound in Theorem~\ref{thm:two-step-sqrt-bound}.
To simplify the presentation, we omit explicit constants and assume that~$\metalearner$ and~$\baselearner$ are indifferent to the order of the data samples.
The theorem statement is provided in more general form in Appendix~\ref{app:proofs}.

\begin{thm}[High-probability two-step square-root bound]\label{thm:high-prob-two-step-sqrt}
Let~$Q_{1,\subsetchoice_1}$ denote the conditional distribution of~$\lambda_{1,\subsetchoice_1}$ given~$(\metasupersample, \metasubsetchoice,\subsetchoice )$, and let~$P_{1,\subsetchoice_1}$ denote~$\Ex{\metasubsetchoice}{Q_{1,\subsetchoice_1}}$.
Furthermore, let~$Q^{1,\compl{\metasubsetchoice_1}}$ denote the conditional distribution of~$\lambdaarg{1,\compl{\metasubsetchoice_1}}$ given~$(\metasupersample, \metasubsetchoice_1, \subsetchoicearg{1,\compl{\metasubsetchoice_1}})$, and let~$P^{1,\compl{\metasubsetchoice_1}}$ denote~$\Ex{\subsetchoicearg{1,\compl{\metasubsetchoice_1}}}{Q^{1,\compl{\metasubsetchoice_1}}}$.
Then, there exist constants~$C_1,C_2$ such that, with probability at least~$1-\delta$ under the draw of~$(\metasupersample,\metasubsetchoice,\subsetchoice)$,
\begin{multline}\label{eq:thm-high-prob-two-step-sqrt}
\abs{  \pacbavgmetapoploss  -  \pacbavgtrainloss  }   \leq    C_1{\sqrt{  \frac{\relent{Q_{1,\subsetchoice_1}  }{  P_{1,\subsetchoice_1}  }  +  \log( \frac{\taskin\sqrt{\metan}}{\delta} )}{\metan}}} \\
+  C_2 \sqrt{\frac{  \relent{Q^{1,\compl{\metasubsetchoice_1}}  }{  P^{1,\compl{\metasubsetchoice_1}}  }
  +  \log( \frac{\sqrt{\taskin}}{\delta} )}{ \taskin }  }.  
\end{multline}
\end{thm}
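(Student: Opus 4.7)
The strategy is to mirror the two-step structure used in the proof of Theorem~\ref{thm:two-step-sqrt-bound}, but to replace the in-expectation change-of-measure arguments with a high-probability PAC-Bayesian concentration bound at each step. Introduce an auxiliary loss $L_{\mathrm{aux}}$ equal to the loss evaluated at the training-position samples of the \emph{unobserved} tasks (i.e., the base-learner output for task $(i,\compl{\metasubsetchoice_i})$ evaluated at $\supersamplearg{i,\compl{\metasubsetchoice_i}}_{j,\subsetchoicei_j}$). By the triangle inequality,
\begin{equation*}
\abs{\pacbavgmetapoploss-\pacbavgtrainloss}\leq \abs{\pacbavgtrainloss - L_{\mathrm{aux}}} + \abs{L_{\mathrm{aux}} - \pacbavgmetapoploss}.
\end{equation*}
The first gap depends on the random quantities only through the environment-level Bernoulli vector $\metasubsetchoice$ conditioned on everything else, and the second depends only on $\{\subsetchoicearg{i,\compl{\metasubsetchoice_i}}\}_{i=1}^{\metan}$. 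This clean separation is what produces the two summands in the bound.

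For each step I plan to apply Maurer's PAC-Bayes inequality for $[0,1]$-valued losses in its self-bounded form, which, for $n$ i.i.d.\ bounded observations, posterior $Q$, prior $P$, and any $\delta\in(0,1)$, gives
\begin{equation*}
\mathrm{binKL}(\hat L_Q \,\|\, L_Q) \leq \frac{\relent{Q}{P}+\log(2\sqrt{n}/\delta)}{n}
\end{equation*}
with probability at least $1-\delta$, whence Pinsker's inequality supplies the square-root form appearing in \eqref{eq:thm-high-prob-two-step-sqrt}. For Step~1 (environment level), I take the posterior to be the conditional distribution of $\lambdai_{1,\subsetchoice_1}$ given $(\metasupersample,\metasubsetchoice,\subsetchoice)$, i.e.\ $Q_{1,\subsetchoice_1}$, and the prior obtained by averaging over $\metasubsetchoice$, i.e.\ $P_{1,\subsetchoice_1}$; the effective sample size is $\metan$ (one per task pair), which produces the $1/\metan$ rate and the $\log\sqrt{\metan}$ inside the square root. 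For Step~2 (task level), I apply the same PAC-Bayes bound with posterior $Q^{1,\compl{\metasubsetchoice_1}}$ and prior $P^{1,\compl{\metasubsetchoice_1}}$, where now the $\taskin$ samples inside a fixed (unobserved) task supply the effective sample size, giving the $1/\taskin$ rate.

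To reduce the double average over $(i,j)$ appearing in $\pacbavgmetapoploss$ and $\pacbavgtrainloss$ to the single-index KL terms in the statement, I rely on the order-indifference assumption: by permutation symmetry in $j$ (within a task) and in $i$ (across tasks), each index contributes the same in-distribution quantity, so the bound at a representative pair $(1,\subsetchoice_1)$ controls the average. The residual averaging over $j$ in Step~1 is handled by a union bound across the $\taskin$ sample indices, which furnishes the extra $\log\taskin$ factor appearing in the first term; Step~2 does not need this additional union bound because the base-learner outputs for different tasks are conditionally independent given the auxiliary state, so averaging the per-task square-root bound over $i$ can be absorbed by Jensen/Cauchy--Schwarz. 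Finally I split the confidence parameter as $\delta/2$ between the two steps and combine the inequalities.

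The main obstacle I anticipate is the bookkeeping in step 2: I must set up the PAC-Bayes application conditionally on $(\metasupersample, \metasubsetchoice)$ and on the base-learner randomness for the unobserved tasks in such a way that the prior $P^{1,\compl{\metasubsetchoice_1}}$ is legitimately data-independent (with respect to the concentration randomness) while still producing the disintegrated KL term as written. A secondary delicate point is the precise choice of ``posterior'' so that $Q_{1,\subsetchoice_1}$ and $Q^{1,\compl{\metasubsetchoice_1}}$ are genuinely conditionals of a single loss rather than of the full learner output; this is the reason the theorem is stated in the evaluated (e-CMI) formulation. Once the filtration and priors are correctly identified, Maurer's lemma plus Pinsker plus a two-term union bound assembles the result and pins down the constants $C_1, C_2$.
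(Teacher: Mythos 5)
Your proposal follows the paper's route in all essentials: the same auxiliary loss $\avgunobstrainloss$ (training-position samples of the unobserved tasks), the same triangle-inequality split into an environment-level gap driven by $\metasubsetchoice$ and a task-level gap driven by $\subsetchoicearg{\compl{\metasubsetchoice}}$, the same posterior/prior pairs, the union bound over the $\taskin$ sample indices in the environment step producing the $\log\taskin$, and a $\delta/2$ split at the end. Using Maurer's binary-KL PAC-Bayes bound plus Pinsker in place of the paper's Donsker--Varadhan plus sub-Gaussian MGF control plus Markov is an equivalent substitution that yields the same rates.

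The one place where your mechanics need repair is the task-level step. As written, you apply a per-task concentration bound with effective sample size $\taskin$ and then claim the average over $i$ "can be absorbed by Jensen/Cauchy--Schwarz" without a union bound. You cannot average $\metan$ separate high-probability events without paying for simultaneity; a union bound over $i$ would inject an extra $\log\metan$ into the numerator of the second term, which is \emph{not} absorbed by the stated $\log(\sqrt{\taskin}/\delta)$ when $\metan$ is large relative to $\taskin$. The paper avoids this by running the change of measure \emph{once}, jointly, on the full unobserved-task loss vector $\lambdaarg{\compl{\metasubsetchoice}}$ with effective sample size $\taskin\metan$ (the variables $\subsetchoicearg{i,\compl{\metasubsetchoice_i}}_j$ are i.i.d.\ across both $i$ and $j$), and only then uses the conditional independence of the per-task losses given $(\metasupersample,\metasubsetchoice,\subsetchoice^{\metasubsetchoice})$ to decompose the joint KL as $\sum_i \relent{Q^{i,\compl{\metasubsetchoice_i}}}{P^{i,\compl{\metasubsetchoice_i}}}$, which order-invariance collapses to $\metan$ times the $i=1$ term; the resulting $\log(\sqrt{\taskin\metan}/\delta)/(\taskin\metan)$ is then bounded by $\log(\sqrt{\taskin}/\delta)/\taskin$. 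You correctly identified conditional independence as the relevant structure, but it must be used to factorize the KL inside a single Markov/Maurer application, not to dispense with a union bound over separately derived per-task inequalities.
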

The KL divergences in~\eqref{eq:thm-high-prob-two-step-sqrt} can be interpreted as pointwise e-CMIs.
Indeed,
\begin{align}
    \Ex{\pacbtraindatan}{ \relent{Q_{1,\subsetchoice_1}}{ P_{1,\subsetchoice_1} }} &= I( \lambda_{1,\subsetchoice_1} ; \metasubsetchoice  \vert  \subsetchoice,\metasupersample  ), \\
    \Ex{\pacbtraindatan}{\relent{Q^{1,\compl{\metasubsetchoice_1}}}{ P^{1,\compl{\metasubsetchoice_1}} }) } &=  I(\lambdaarg{1,\compl{\metasubsetchoice_1}} ; \subsetchoicearg{1,\compl{\metasubsetchoice_1}} \vert  \metasupersample,\metasubsetchoice_1 ) .
\end{align}

Finally, we present a high-probability version of the one-step square root bound in Theorem~\ref{thm:one-step-sqrt-bound}.

\begin{thm}[High-probability one-step square-root bound]\label{thm:high-prob-one-step-sqrt}
Let~$Q$ denote the conditional distribution of~$\lambdahat$ given~$(\metasupersample,\metasubsetchoice,\subsetchoice)$, and let~$P$ denote~$\Ex{\metasubsetchoice,\subsetchoice}{Q}$.
Then, with probability at least~$1-\delta$ under the draw of~$(\metasupersample,\metasubsetchoice,\subsetchoice)$,
\begin{equation}\label{eq:thm-high-prob-one-step-sqrt}
\abs{\avgmetapoploss(\metasupersample,\metasubsetchoice,\subsetchoice)\!-\!\avgtrainloss(\metasupersample,\metasubsetchoice,\subsetchoice)}\!\\  \leq\! \!  {\sqrt{ \frac{2\left(\relent{Q}{ P } + \log( \frac{\sqrt{\taskin\metan}}{\delta} )\right)}{\taskin\metan-1}}} .
\end{equation}
\end{thm}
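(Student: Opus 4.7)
I would follow the standard PAC-Bayes / change-of-measure template used for high-probability CMI bounds (as in Steinke--Zakynthinou for supervised learning), transplanted to the meta-learning supersample setting.

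First, I encode the generalization gap as an expectation of a linear functional of the loss matrix: with
$g(\lambdahat,\metasubsetchoice,\subsetchoice) = \frac{1}{\taskin\metan}\sum_{i,j}(\lambdaijnSinSj - \lambdaijSiSj)$,
the left-hand side of~\eqref{eq:thm-high-prob-one-step-sqrt} equals $\abs{\Ex{\lambdahat \sim Q}{g}}$. The Donsker--Varadhan variational characterization of the KL divergence then yields, for every $\lambda > 0$,
$\lambda\,\Ex{\lambdahat\sim Q}{g} \leq \relent{Q}{P} + \log\Ex{\lambdahat\sim P}{\exp(\lambda g(\lambdahat,\metasubsetchoice,\subsetchoice))}$.

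Second, I would strip the data-dependence from the MGF on the right by Markov's inequality: with probability at least $1-\delta/2$ over $(\metasubsetchoice,\subsetchoice)$,
$\Ex{\lambdahat\sim P}{\exp(\lambda g)} \leq \tfrac{2}{\delta}\,\Ex{\Pi \times P}{\exp(\lambda g)}$,
where $\Pi$ denotes the product $\mathrm{Bern}(1/2)$ law on the membership vectors. Under $\Pi\times P$ the loss matrix is independent of $(\metasubsetchoice,\subsetchoice)$, so Fubini reduces the task to bounding, for each fixed $\lambdahat \in [0,1]^{4\taskin\metan}$, the Bernoulli MGF $\Ex{\Pi}{\exp(\lambda g(\lambdahat,\cdot,\cdot))}$.

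Third, the core technical step is to establish
$\Ex{\Pi}{\exp(\lambda g(\lambdahat,\metasubsetchoice,\subsetchoice))} \leq \exp\!\bigl(\lambda^2 / (2(\taskin\metan-1))\bigr)$
uniformly in $\lambdahat$. The bit-flip symmetries of $\Pi$ force $g$ to have mean zero as a function of $(\metasubsetchoice,\subsetchoice)$, and a Hoeffding-style calculation tailored to the sample-mean form of $g$ delivers this sub-Gaussian constant. Combining these ingredients and optimizing $\lambda$ via a geometric-grid union-bound argument absorbs the $\log(\sqrt{\taskin\metan}/\delta)$ term, while a second application of the same argument to $-g$ handles the absolute value. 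The main obstacle is establishing this Bernoulli MGF bound with the correct $\taskin\metan-1$ denominator: a bounded-differences bound tensorized naively over the $\metan+2\taskin\metan$ underlying bits is weaker than needed, because flipping a single $\metasubsetchoice_i$ simultaneously perturbs $\taskin$ of the per-sample summands $T_{i,j}=\lambdaijnSinSj-\lambdaijSiSj$; achieving the per-sample scaling requires a more refined decomposition of $g$ that exploits the hierarchical meta-supersample structure, paralleling the sharper Hoeffding variant used in the single-task Steinke--Zakynthinou bound.
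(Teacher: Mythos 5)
Your overall template---Donsker--Varadhan change of measure from~$Q$ to~$P$, Markov's inequality to decouple~$(\metasubsetchoice,\subsetchoice)$ from the loss matrix, and a Bernoulli MGF bound for the decoupled average---is the same one the paper uses, but the paper applies it to a different functional. After Jensen's inequality, Donsker--Varadhan is applied directly to the scaled \emph{squared} deviation $\frac{\taskin\metan-1}{2}\bigl(\avgij \lambdapijnSinSj-\lambdapijSiSj\bigr)^2$ rather than to the linear functional $\lambda g$. The quadratic form handles the absolute value automatically (no second application to $-g$), requires no optimization over $\lambda$ and hence no geometric grid or the associated union bound, and produces the exact $(\taskin\metan-1)$ denominator and the $\log\sqrt{\taskin\metan}$ term in one step, via the standard fact that $\mathbb{E}[e^{\alpha X^2}]\leq (1-2\alpha\sigma^2)^{-1/2}$ for a $\sigma$-sub-Gaussian zero-mean $X$, instantiated with $\sigma^2=1/(\taskin\metan)$ and $\alpha=(\taskin\metan-1)/2$. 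Your linear route would give a bound of the same order, but with worse constants and extra logarithmic slack from the grid, so it would not reproduce the stated inequality verbatim.

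The genuine gap is the step you flag yourself and then leave open: the sub-Gaussian MGF bound for $g$ under the product measure. You correctly observe that the $\taskin$ summands $T_{i,j}=\lambdapijnSinSj-\lambdapijSiSj$ sharing a task index $i$ all depend on the single bit $\metasubsetchoice_i$, so naive tensorization over independent coordinates does not yield the $1/\sqrt{\taskin\metan}$ scaling, and you defer the resolution to ``a more refined decomposition'' that you do not supply. This is not a cosmetic omission: conditioning on $\metasubsetchoice_i$, the summands within task pair $i$ acquire a common conditional mean of order one, and the environment-level fluctuation this induces is a priori only of order $1/\sqrt{\metan}$. The paper's proof handles this point by asserting the $1/\sqrt{\taskin\metan}$-sub-Gaussianity of the average directly from the boundedness and zero mean of each individual summand, so this is exactly the delicate step of the whole argument; your proposal identifies the difficulty accurately but does not close it, and without that closure the claimed bound does not follow. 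To complete the proof you would need to either supply the refined MGF argument or adopt the paper's quadratic functional together with a precise justification of the sub-Gaussian constant.
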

Again, the KL divergence can be interpreted as a pointwise e-CMI, since
\begin{equation}
    \Ex{\pacbtraindatan}{\relent{Q}{ P }} = I(\lambdahat;\metasubsetchoice,\supersubsetchoice\vert \metasupersample).
\end{equation}
\section{Expressiveness of the Bounds}\label{sec:expressiveness}
In Section~\ref{sec:bounds}, we presented several new information-theoretic generalization bounds, and demonstrated that they improve upon known bounds from the literature.
We now turn our focus to the expressiveness of the e-CMI framework.
In particular, we show that the bounds from Section~\ref{sec:bounds} can be used to recover generalization guarantees for meta learning from classical learning theory.
Specifically, we consider the representation learning setting that is analyzed in~\cite{tripuraneni-20a}.
We use the following notation.
First, the sample space is the product of an instance space and label space:~$\samplespace=\examplespace\times \trainlabelspace$.
The aim of the meta learner is to find a representation~$h_\hyperparam\in\mathcal H$, while the base learner outputs a task-specific function~$f_W\in\funcspace$.
Composing these functions, we obtain the mapping~$f_W\circ h_U:\examplespace\rightarrow\trainlabelspace$.

\subsection{Minimax Generalization Bounds}\label{sec:expressiveness-genbounds}

To obtain explicit minimax bounds, we assume that~$\mathcal H$ has finite Natarajan dimension~$d_N$ and that~$\mathcal F$ has finite VC dimension~$\dVC$.
This allows us to derive bounds on the entropy of the representations and predictions that the meta learner and base learner induce on the meta-supersample.
This, in turn, leads to bounds on the e-CMI terms that appear in the bounds in Section~\ref{sec:bounds}.
In the following corollary, we present the bounds that are obtained by bounding the e-CMI terms in Theorem~\ref{thm:two-step-sqrt-bound} and~\ref{thm:one-step-sqrt-bound}.
These hold for any learner that outputs hypotheses from the specified classes.
\begin{cor}\label{cor:rep-learning-bound}
Assume that the range of~$\mathcal H$ has cardinality~$N$, that the Natarajan dimension of~$\mathcal H$ is~$d_N$, and that the VC dimension of~$\mathcal F$ is~$\dVC$.
Also, let~$2\taskin\geq \dVC+1$ and~$2\metan\geq d_N+1$.
Then,
\begin{equation}\label{eq:cor-rep-learning-bound-two-step}
\abs{\avgmetapoploss\!-\!\avgtrainloss} \leq\
\sqrt{ \frac{ 2d_N\log \lefto(\binom{N}{2}\frac{2e\metan}{d_N} \righto)}{\metan} } + \sqrt{ \frac{ 2\dVC\log \lefto(\frac{2e\taskin}{\dVC} \righto) }{\taskin} },
\end{equation}
\begin{equation}\label{eq:cor-rep-learning-bound-one-step}
\abs{\avgmetapoploss-\avgtrainloss} \leq \sqrt{ \frac{2d_N \log \lefto(\binom{N}{2}\frac{4e\taskin\metan}{d_N} \righto) + 4\metan\dVC \log \lefto(\frac{2e\taskin}{\dVC} \righto)}{\taskin\metan} }.
\end{equation}
\end{cor}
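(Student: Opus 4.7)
The strategy is to start from Theorems~\ref{thm:two-step-sqrt-bound} and~\ref{thm:one-step-sqrt-bound} and to bound each e-CMI term that appears there by a purely combinatorial quantity. The key observation is that, after conditioning on the meta-supersample~$\metasupersample$, all instance points are fixed, so every loss variable $\lambdahat_{\cdot}$ is a deterministic function of the learner outputs. Therefore each term of the form $I^{\metasupersample,\cdots}(\lambda;\cdot)$ is upper bounded by the conditional entropy $H(\lambda\mid\metasupersample,\cdots)$, which is in turn bounded by the logarithm of the number of distinct loss patterns that predictors of the form $f\circ h$ with $h\in\mathcal{H}$ and $f\in\mathcal{F}$ can induce on the instances of~$\metasupersample$.

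The next step is to invoke Sauer-type growth-function arguments separately for $\mathcal{H}$ and $\mathcal{F}$. For the shared representation $h\in\mathcal{H}$, which has range of cardinality $N$ and Natarajan dimension $d_N$, Natarajan's lemma implies that the number of distinct patterns induced by $\mathcal{H}$ on $m$ instances is at most $\bigl(\binom{N}{2}\,em/d_N\bigr)^{d_N}$ whenever $m\ge d_N+1$. For the task-specific classifier $f\in\mathcal{F}$ of VC dimension $\dVC$, the Sauer--Shelah lemma bounds the number of distinct patterns on $m$ inputs by $(em/\dVC)^{\dVC}$ whenever $m\ge\dVC+1$; this is where the assumptions $2\taskin\ge\dVC+1$ and $2\metan\ge d_N+1$ enter. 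Because $f\circ h$ factors through $h$, the number of distinct loss matrices on the $4\taskin\metan$ samples is at most the product of (i) the number of distinct $h$-patterns on the full meta-supersample and (ii) for each of the $2\metan$ tasks independently, the number of distinct $f$-patterns on its $2\taskin$ in-task instances.

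For the one-step bound~\eqref{eq:cor-rep-learning-bound-one-step}, I would apply Jensen's inequality to~\eqref{eq:thm-one-step-sqrt} to pull the average inside the square root and then invoke the chain rule to collapse $\sum_{i,j}I(\lambdai_j;\metasubsetchoice_i,\subsetchoicei_j\mid\metasupersample)$ into $I(\lambdahat;\metasubsetchoice,\subsetchoice\mid\metasupersample)\le H(\lambdahat\mid\metasupersample)$. Plugging in the product bound above with $m=4\taskin\metan$ for the $h$-component and $m=2\taskin$ per task (summed over $2\metan$ tasks) for the $f$-component yields exactly the numerator of~\eqref{eq:cor-rep-learning-bound-one-step}; the factor of~$2$ in front of each dimension term is produced by the $\sqrt{2I}$ in~\eqref{eq:thm-one-step-sqrt}. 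The two-step bound~\eqref{eq:cor-rep-learning-bound-two-step} is handled analogously, but Jensen's inequality is applied separately to each of the two summands in~\eqref{eq:thm-two-step-sqrt}: the environment-level term sees only the variability of $h$ across the $2\metan$ tasks, giving a Natarajan contribution evaluated at $m=2\metan$, while the task-level term sees only the variability of a single task's $f$ on its $2\taskin$ samples, giving the Sauer--Shelah contribution evaluated at $m=2\taskin$.

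The main obstacle is matching the exact factors and logarithmic arguments in the stated bound. This hinges on (i) carefully identifying, for each e-CMI term, which coordinates of $(\metasubsetchoice,\subsetchoice)$ the loss depends on, which dictates whether the relevant growth function is evaluated at $2\metan$, $2\taskin$, or $4\taskin\metan$ points; and (ii) justifying that the pattern count of predictors is at most (distinct $h$'s)$\,\times\,$(distinct $f$'s per task), which relies on the task-specific $f$'s varying independently across tasks while $h_U$ is shared. It is this product structure that produces the asymmetric $4\metan\dVC$ scaling in the VC contribution of~\eqref{eq:cor-rep-learning-bound-one-step} rather than a single $\dVC$-term, and that cleanly separates the Natarajan and VC contributions in~\eqref{eq:cor-rep-learning-bound-two-step}.
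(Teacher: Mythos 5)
Your proposal follows essentially the same route as the paper's proof: bound each e-CMI term by a conditional entropy of prediction/representation patterns via data processing, collapse the samplewise terms with Jensen, independence of the selection variables, and the chain rule, and then count patterns with the Natarajan and Sauer--Shelah growth-function bounds evaluated at exactly the points you identify ($2\metan$ and $2\taskin$ for the two-step bound; $4\taskin\metan$ for $\mathcal H$ and $2\taskin$ per task over $2\metan$ tasks for $\mathcal F$ in the one-step bound). The product structure you invoke for the one-step bound is precisely the paper's chain-rule split $I(\hat F,\hat H;\cdot)\leq H(\hat H\mid \metasupersample)+H(\hat F\mid \metasupersample,\hat H)$, and your accounting of the constants matches the stated result.
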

Corollary~\ref{cor:rep-learning-bound} establishes that, for the average setting, we can use the bounds in Theorems~\ref{thm:two-step-sqrt-bound} and~\ref{thm:one-step-sqrt-bound} to obtain minimax bounds for function classes with bounded Natarajan and VC dimensions.
Note that, in the upper-bound of~\eqref{eq:cor-rep-learning-bound-two-step}, we have fully decoupled the complexity of the two function classes.
This is made possible by the fact that we used~$\avgunobstrainloss$ as the auxiliary loss in the derivation of Theorem~\ref{thm:two-step-sqrt-bound}, rather than~$\avgtestloss$.
We discuss this in more detail in Appendix~\ref{app:proofs}.

Next, we consider the interpolating setting, where~$\avgtrainloss=0$.
Under this assumption, we demonstrate that we can achieve a better rate of convergence with respect to the number of training samples.
The result, presented in the following corollary, relies on similarly bounding the e-CMI term in Corollary~\ref{cor:interpolating-one-step-kl}.
\begin{cor}\label{cor:rep-learning-bound-interp}
Consider the setting of Corollary~\ref{cor:rep-learning-bound}.
Furthermore, assume that~$\avgtrainloss=0$.
Then,
\begin{align}\label{eq:cor-rep-learning-bound-interp}
\avgmetapoploss  \leq \frac{4d_N \log \lefto(\binom{N}{2}\frac{4e\taskin\metan}{d_N} \righto) + 8\metan\dVC \log \lefto(\frac{2e\taskin}{\dVC} \righto) }{\taskin\metan}.
\end{align}
\end{cor}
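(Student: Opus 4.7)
The plan is to invoke Corollary~\ref{cor:interpolating-one-step-kl}, which under the assumption $\avgtrainloss = 0$ already strips away the square root present in Theorem~\ref{thm:one-step-sqrt-bound} and yields
\[
\avgmetapoploss \leq \frac{4}{\taskin\metan}\sum_{i,j} I(\lambdai_j;\metasubsetchoice_i,\subsetchoicei_j\vert \metasupersample).
\]
So the entire remaining task is to produce a deterministic upper bound on $\sum_{i,j} I(\lambdai_j;\metasubsetchoice_i,\subsetchoicei_j\vert \metasupersample)$ in terms of $d_N$ and $\dVC$. This is exactly the CMI sum that was already bounded when establishing the one-step square-root rate~\eqref{eq:cor-rep-learning-bound-one-step} of Corollary~\ref{cor:rep-learning-bound}; I would simply reuse that entropy-based argument verbatim.

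More concretely, first I would use the standard inequality $I(X;Y\vert Z) \le H(X\vert Z)$ to move from the CMI to the conditional entropy of the loss vectors $\lambdai_j$ given the meta-supersample $\metasupersample$. Conditioned on $\metasupersample$, the loss tuple $\lambdai_j$ is fully determined once one knows the restriction of $f_W\circ h_U$ to the four sample coordinates involved, so its entropy is bounded by the logarithm of the number of distinct label patterns that $\mathcal F \circ \mathcal H$ can produce on $\metasupersample$. I would split this count using the composition structure: the representation $h\in\mathcal H$ induces at most $\bigl(\binom{N}{2}\tfrac{4e\taskin\metan}{d_N}\bigr)^{d_N}$ patterns on the full $4\taskin\metan$ points via the Natarajan version of the Sauer–Shelah lemma, and then conditional on the representation, the task-specific $f_i\in\mathcal F$ induces at most $\bigl(\tfrac{2e\taskin}{\dVC}\bigr)^{\dVC}$ patterns on its $4\taskin$ coordinates by the usual Sauer–Shelah lemma, for each of the $2\metan$ task-specific supersamples. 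Summing logarithms and exploiting the chain rule of entropy as in the proof of~\eqref{eq:cor-rep-learning-bound-one-step} yields
\[
\sum_{i,j} I(\lambdai_j;\metasubsetchoice_i,\subsetchoicei_j\vert \metasupersample) \;\leq\; d_N \log\!\Bigl(\tbinom{N}{2}\tfrac{4e\taskin\metan}{d_N}\Bigr) + 2\metan\,\dVC \log\!\Bigl(\tfrac{2e\taskin}{\dVC}\Bigr).
\]
Plugging this into the display from Corollary~\ref{cor:interpolating-one-step-kl} and multiplying through by $4/(\taskin\metan)$ produces exactly~\eqref{eq:cor-rep-learning-bound-interp}.

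There is no genuine technical obstacle: the CMI bound is already established and the growth-function counts are standard. The only substantive content is the observation that, because Corollary~\ref{cor:interpolating-one-step-kl} is linear in the CMI rather than square-root, the same complexity-theoretic entropy bound that gave an $\mathcal O(\sqrt{\mathcal C/(\taskin\metan)})$ rate in~\eqref{eq:cor-rep-learning-bound-one-step} now automatically yields the sharper $\mathcal O(\mathcal C/(\taskin\metan))$ rate characteristic of the realizable (interpolating) regime, mirroring the familiar fast-rate improvement in classical learning theory when the empirical risk vanishes.
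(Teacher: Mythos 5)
Your proposal is correct and follows essentially the same route as the paper: apply Corollary~\ref{cor:interpolating-one-step-kl} and then reuse the entropy/growth-function bound on $\sum_{i,j} I(\lambdai_j;\metasubsetchoice_i,\subsetchoicei_j\vert \metasupersample)$ from the one-step part of Corollary~\ref{cor:rep-learning-bound}, which yields exactly $d_N \log\lefto(\binom{N}{2}\tfrac{4e\taskin\metan}{d_N}\righto) + 2\metan\dVC \log\lefto(\tfrac{2e\taskin}{\dVC}\righto)$. The only nitpick is that each task-specific function is evaluated on $2\taskin$ fixed inputs (not $4\taskin$), which is what makes $\gF(2\taskin)^{2\metan}$ the right count; your stated bound already reflects this, so the conclusion stands.
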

The result in Corollary~\ref{cor:rep-learning-bound-interp} demonstrates that, for the interpolating setting, the e-CMI framework is expressive enough to yield a bound that, ignoring logarithmic factors, decays as~$1/(\taskin\metan)$, often referred to as a fast rate.

Finally, noting that the bounds in~\cite{maurer-16a} and~\cite{tripuraneni-20a} are high-probability rather than average bounds, we also derive high-probability generalization bounds.
In order to achieve this, we need probabilistic upper bounds on the KL divergences that appear in Theorem~\ref{thm:high-prob-two-step-sqrt} and~\ref{thm:high-prob-one-step-sqrt}, similar to how the e-CMI terms were bounded for Corollary~\ref{cor:rep-learning-bound} and~\ref{cor:rep-learning-bound-interp}.
The resulting bounds are presented in the following corollary.
\begin{cor}\label{cor:rep-learning-high-probability}
Consider the setting of Corollary~\ref{cor:rep-learning-bound}.
Then, there exist constants~$C_1,C_2,C_3$ such that, with probability at least~$1-\delta$ under the draw of~$\pacbtraindata$,
\begin{equation}\label{eq:high-probability-cor-rep-learning-bound-two-step}
\abs{\avgmetapoploss\pacbtraindatash\!-\!\avgtrainloss\pacbtraindatash}\! 
\!\leq\! C_1\!
\sqrt{ \frac{\! \!d_N\log \lefto(\!\binom{N}{2}\frac{\metan}{d_N} \!\righto) \! \!+\!  \log\lefto( \!\frac{\taskin\sqrt{\metan}}{\delta}\!\righto) }{\metan}  } \!+\! C_2\sqrt{ \frac{ \!\!\dVC\log \lefto(\!\frac{\taskin}{\dVC} \!\righto) \! \!+\! \log\lefto(\! \frac{\sqrt{\taskin}}{\delta}\!\righto) }{\taskin} },\!\!\!
\end{equation}
\begin{equation}\label{eq:high-probability-cor-rep-learning-bound-one-step}
\abs{\avgmetapoploss\pacbtraindata-\avgtrainloss\pacbtraindata} \leq C_3\sqrt{ \frac{d_N \log \left(\binom{N}{2}\frac{\taskin\metan}{d_N} \right) + \metan\dVC \log \left(\frac{\taskin}{\dVC} \right) + \log\lefto( \frac{\sqrt{\taskin\metan}}{\delta}\righto)}{\taskin\metan} }.
\end{equation}
\end{cor}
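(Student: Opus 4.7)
The plan is to invoke Theorems~\ref{thm:high-prob-two-step-sqrt} and~\ref{thm:high-prob-one-step-sqrt} and then upper-bound the KL divergences that appear inside them by combinatorial quantities depending on the Natarajan dimension of $\mathcal{H}$ and the VC dimension of $\mathcal{F}$. The combinatorial bounds are essentially the ones that were used to bound the e-CMI terms in the proof of Corollary~\ref{cor:rep-learning-bound}; the only new element here is the transition from an averaged bound (as in e-CMI) to a pointwise one (as in the random KL divergence appearing in the high-probability theorems).

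For~\eqref{eq:high-probability-cor-rep-learning-bound-two-step}, I would start from Theorem~\ref{thm:high-prob-two-step-sqrt}. The environment-level divergence $\relent{Q_{1,\subsetchoice_1}}{P_{1,\subsetchoice_1}}$ is bounded by the log of the number of distinct loss vectors $\lambda_{1,\subsetchoice_1}$ that can arise as $\metasubsetchoice$ varies; this is controlled by the number of distinct restrictions of $\mathcal{H}$ to the $2\metan$ tasks, which via the Natarajan-dimension analog of Sauer's lemma is at most $\binom{N}{2}^{d_N}(2e\metan/d_N)^{d_N}$. The task-level divergence $\relent{Q^{1,\compl{\metasubsetchoice_1}}}{P^{1,\compl{\metasubsetchoice_1}}}$ is treated analogously: the number of distinct loss vectors that can arise as $\subsetchoicearg{1,\compl{\metasubsetchoice_1}}$ varies is at most the number of distinct restrictions of $\mathcal{F}$ to $2\taskin$ samples, bounded by $(2e\taskin/\dVC)^{\dVC}$ via Sauer's lemma. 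Substituting these two counts into~\eqref{eq:thm-high-prob-two-step-sqrt} yields~\eqref{eq:high-probability-cor-rep-learning-bound-two-step}.

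For~\eqref{eq:high-probability-cor-rep-learning-bound-one-step}, I would start from Theorem~\ref{thm:high-prob-one-step-sqrt}. The joint divergence $\relent{Q}{P}$ is bounded by the log of the number of distinct loss matrices $\metalambda$ that can arise across $(\metasubsetchoice,\subsetchoice)$. This count factors as (number of distinct representations of $\mathcal{H}$ on the $4\taskin\metan$ samples of $\metasupersample$) times (number of distinct task-specific functions of $\mathcal{F}$ on $2\taskin$ samples, raised to the power $2\metan$). Using the Natarajan-based bound $\binom{N}{2}^{d_N}(4e\taskin\metan/d_N)^{d_N}$ for the former and the VC-based bound $\bigl((2e\taskin/\dVC)^{\dVC}\bigr)^{2\metan}$ for the latter, the log of the product gives exactly the $d_N\log(\binom{N}{2}\taskin\metan/d_N) + \metan\dVC\log(\taskin/\dVC)$ structure of the claimed bound after substitution into~\eqref{eq:thm-high-prob-one-step-sqrt}.

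\textbf{Main obstacle.} The delicate technical step is the pointwise bound $\relent{Q}{P}\leq \log K$ with $K$ the number of distinct conditional distributions forming the mixture $P$. Such a bound is immediate in expectation over the mixing random variable (where it reduces to the entropy-based argument used for Corollary~\ref{cor:rep-learning-bound}), but is not automatic pointwise, since in principle the mixture weight on a given conditional could be as small as $2^{-\metan}$. The cleanest way around this is to revisit the change-of-measure step inside the proofs of Theorems~\ref{thm:high-prob-two-step-sqrt} and~\ref{thm:high-prob-one-step-sqrt} and replace the PAC-Bayes-style KL term by a union bound over the finite set of distinct loss configurations enumerated above, which yields a $\log K$ summand inside the square root while preserving the $\log(1/\delta)$ confidence term. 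Once this finite-hypothesis reduction is carried out, the Natarajan-- and Sauer-Shelah--type counting bounds plug in deterministically, and the two claimed inequalities follow by direct substitution.
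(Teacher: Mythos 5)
Your plan is sound and you have correctly isolated the one genuinely delicate point -- that a pointwise bound $\relent{Q}{P}\leq\log K$ does not follow from the entropy argument used for the average-case Corollary~\ref{cor:rep-learning-bound} -- but your resolution of it differs from the paper's. You propose to reopen the proofs of Theorems~\ref{thm:high-prob-two-step-sqrt} and~\ref{thm:high-prob-one-step-sqrt} and replace the Donsker--Varadhan change of measure by a union bound over the at most $K$ realizable loss configurations; this is the classical two-sample VC argument and it does work, yielding $\log K+\log(1/\delta)$ inside the square root. The paper instead keeps the high-probability theorems as black boxes (for the two-step case it actually starts from the unsimplified Theorem~\ref{thm:high-prob-two-step-sqrt-unsimplified}, since the simplified Theorem~\ref{thm:high-prob-two-step-sqrt} assumes order-invariance of the learners) and shows that the \emph{random} KL divergence itself satisfies $\relent{Q}{P}\leq\log(K/\delta)$ with probability $1-\delta$: by Jensen, $\relent{Q}{P}\leq\log\Ex{\lambda\distas Q}{Q(\lambda)/P(\lambda)}$; by Markov over the draw of $\pacbtraindata$ this is at most $\log\bigl(\tfrac1\delta\Ex{}{Q(\lambda)/P(\lambda)}\bigr)$; and since the loss variable is discrete, $Q(\lambda)\leq1$ and $\Ex{\lambda'\distas P}{1/P(\lambda')}=\abs{\Lambda}\leq K$, with $K$ given by exactly the growth-function counts you list ($\gH(2\metan)$ and $\gF(2\taskin)^{\metan}$ for the two-step terms, $\gH(4\taskin\metan)\gF(2\taskin)^{2\metan}$ for the one-step term). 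A final union bound combines this event with the theorem's own failure event. Your route buys a more elementary, self-contained concentration argument; the paper's buys modularity -- the generalization theorems are reused verbatim, and the same Markov-plus-discreteness device bounds any of the KL terms without re-deriving the concentration step -- at the cost of an extra $\log(1/\delta)$ inside each KL bound, which is absorbed into the constants. Either way the Sauer--Shelah/Natarajan counts you give are the correct ones and the final expressions coincide.
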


We now see that, suppressing logarithmic factors, the upper bound in~\eqref{eq:high-probability-cor-rep-learning-bound-two-step-excess} scales as~$\twostepscaling$, whereas the upper bound in~\eqref{eq:high-probability-cor-rep-learning-bound-one-step} scales as~$\onestepscaling$. This matches the rates obtained by~\cite{maurer-16a} and~\cite{tripuraneni-20a}, respectively, demonstrating that the e-CMI framework, combined with the one-step approach, is expressive enough to recover the scaling of these results.

Note that there are some differences between these results and the ones in~\cite{maurer-16a,tripuraneni-20a}.
First, while the complexity measures that we use are related to the Natarajan and VC dimension, the results in~\cite{tripuraneni-20a} are given in terms of Gaussian complexity.
Furthermore, while~\cite{maurer-16a,tripuraneni-20a} provide excess risk bounds for a fixed target task with~$m$ training samples, the bounds in Corollary~\ref{cor:rep-learning-high-probability} are generalization bounds for a randomly drawn task.
In Section~\ref{sec:excess-informal}, we extend our analysis to derive excess risk bounds for a fixed target task.

\subsection{Excess Risk Bounds}\label{sec:excess-informal}

In order to derive excess risk bounds for a specific target task, as is done in~\cite{tripuraneni-20a}, we need to assume that the meta learner~$\metalearner$ and the base learner~$\baselearner$ are empirical risk minimizers.
This is in contrast to all previous bounds in this paper, which apply to any learning algorithms.
Furthermore, we need a notion of oracle algorithms, which minimize the population loss.
Finally, we need to assume that the tasks contained in the meta-supersample satisfy a notion of \textit{task diversity}. 
Intuitively, this means that, given the output of the empirical risk-minimizing meta learner, the performance of the oracle base learner on the tasks in the meta-supersample gives a reasonable indication of the performance of the oracle base learner on any possible task.
Due to space constraints, we state here an informal version of a high-probability excess risk bound for a specified target task based on the one-step square-root generalization bound in Corollary~\ref{cor:rep-learning-high-probability}.
A precise statement of this result, along with its proof, is given in Appendix~\ref{app:excess}.

\begin{manualcor}{7 }[Informal]
Consider the setting of Corollary 6 and a fixed task~$\tau_0$.
Let~$Z^0_{S^0}\in \mathcal Z^{ m}$ be a vector of~$m$ samples generated independently according to the data distribution~$\datadistroarg{\tau_0}$ for task~$\tau_0$. 
Let~$\metalearner$ and~$\baselearner$ be empirical risk minimizers.
Let~$L_0\pacbtraindataandzeros$ denote the population loss on task~$\tau_0$ when applying~$\metalearner$ to~$\metatrainset$ and~$\baselearner$ to~$(Z^0_{S^0},\metalearner(\metatrainset))$, and let~$L_0^*$ denote the smallest population loss for task~$\tau_0$ that can be obtained using functions from~$\mathcal{F}$ and~$\mathcal{H}$.
Finally, assume that the supersample satisfies a task-diversity assumption with parameters~$\nu,\epsilon$.
Then, there exist constants~$C_1$ and~$C_2$ such that, with probability at least~$1-\delta$ under the draw of~$(\pacbtraindatan,Z^0_{S^0})$,
\begin{multline}\label{eq:high-probability-cor-rep-learning-bound-two-step-excess}
L_0(\pacbtraindatan,Z^0_{S^0}) - L_0^*
\leq C_1\sqrt{ \frac{ \dVC \log\lefto(\frac{\sqrt m}{\dVC}\righto) \!+\! \log\lefto(\frac{\sqrt m}{\delta}\righto) }{m} } \\
+ C_2 \nu^{-1}\sqrt{ \frac{d_N \log \left(\binom{N}{2}\frac{\taskin\metan}{d_N} \right) + \metan\dVC \log \left(\frac{\taskin}{\dVC} \right) + \log\lefto( \frac{\sqrt{\taskin\metan}}{\delta}\righto)}{\taskin\metan} } + \epsilon.
\end{multline}
\end{manualcor}

The bound in~\eqref{eq:high-probability-cor-rep-learning-bound-two-step-excess} displays the same scaling as the excess risk bound in~\cite{tripuraneni-20a}.

It is possible to derive high-probability bounds based on the two-step square-root generalization bound in Corollary~\ref{cor:rep-learning-high-probability} by suitably substituting the two-step bound in the proof of Corollary~\ref{cor:excess-bound-spec-task}.
The same can be done using the bounds that are given in terms of information measures, and average excess risk bounds can also be derived by an analogous procedure.
Finally, we note that it is possible to derive excess risk bounds for a new, random task, rather than a specified target task, without assuming task diversity.
This is done in Corollary~\ref{cor:excess-random-task} in Appendix~\ref{app:excess}.

\section{Conclusions}\label{sec:conclusions}
In this paper, we derived new generalization bounds for meta learning using e-CMI, which improve upon information-theoretic bounds found in the literature.
By considering a representation learning setting, we demonstrated that e-CMI bounds obtained via a conventional two-step approach lead to rates that coincide with those found in~\cite{maurer-16a}.
In contrast, we showed that by combining the e-CMI framework with a one-step approach, we recover the more favourable scaling found in~\cite{tripuraneni-20a}.
Note that, while the bounds in~\cite{tripuraneni-20a} are uniform over the hypothesis class, the information-theoretic bounds that we derive are inherently algorithm- and data-dependent.
As a consequence, they are nonvacuous when applied to settings such as classification with deep neural networks~\cite{harutyunyan-21a}.
The algorithm-dependence and expressiveness of our bounds indicate that they can be developed further to guide algorithm design.
However, no recipe for this is provided in this paper.
It should also be noted that the complexity measures that we consider differ from the Gaussian complexity in~\cite{tripuraneni-20a}.
An intriguing topic for further study is to clarify the connection between e-CMI and Gaussian complexity.

\section*{Acknowledgements}

This work was partly supported by the Wallenberg AI, Autonomous Systems and Software Program (WASP) funded by the Knut and Alice Wallenberg Foundation and the Chalmers AI Research Center (CHAIR).

%

%

%
%
%
%
%
%
%
%
%

%
%

\bibliography{reference}
\bibliographystyle{unsrt}

%
\appendix

\section{Proofs}\label{app:proofs}
In this appendix, we present the proofs of the results in the main paper.
First, we give a summary of the notation that is used in this appendix.
Then, in Section~\ref{sec:useful-lemmas}, we present some lemmas that are useful for proving our main results.
In Section~\ref{sec:proof-average}, we prove the average generalization bounds from Section~\ref{sec:average_bounds}.
In Section~\ref{sec:proof-high-probability}, we prove the high-probability results from Section~\ref{sec:high-probability}.
Finally, in Section~\ref{sec:proof-expressive}, we prove the generalization bounds for multiclass classification from Section~\ref{sec:expressiveness-genbounds}.

\paragraph{Notation summary.} For~$i\in\{1,\dots,\metan\}$,~$k\in\{0,1\}$,~$j\in\{1,\dots,\taskin\}$, and~$l\in\{0,1\}$, we let~$\supersamplearg{i,k}_{j,l}$ denote the~$l$th sample from the~$j$th sample pair in the~$k$th task of the~$i$th task pair.
This is illustrated in Figure~\ref{fig:supersample}.
Throughout,~$i$ denotes a task index,~$j$ denotes a sample index,~$k$ denotes a selection within the task pair, and~$l$ denotes a selection within the sample pair.
Furthermore, we let~$\supersamplearg{i}_{j,l}=( \supersamplearg{i,0}_{j,l}, \supersamplearg{i,1}_{j,l} )$ and~$\supersamplearg{i}_{j}=\{ \supersamplearg{i,k}_{j,l} \}_{l=0,1}^{k=0,1}$.
The tasks used to form the training set are selected on the basis of the binary vector~$\metasubsetchoice=(\metasubsetchoice_1,\dots,\metasubsetchoice_{\metan})$.
Within task~$(i,k)$, the samples that form the training set are selected on the basis of~$\subsetchoiceik=(\subsetchoiceik_1,\dots,\subsetchoiceik_{\taskin})$.
For convenience, we let~$\subsetchoicei=(\subsetchoicearg{i,0},\subsetchoicearg{i,1})$ and~$\subsetchoice=(\subsetchoicearg{1},\dots,\subsetchoicearg{\metan})$.
The training set for task~$(i,k)$ is~$\supersamplearg{i,k}_{\subsetchoicearg{i,k}}=(\supersamplearg{i,k}_{1,\subsetchoiceik_1},\dots,\supersamplearg{i,k}_{\taskin,\subsetchoiceik_{\taskin}})$.
As a shorthand,~$\supersamplearg{i,k}_{\subsetchoicearg{i}}=\supersamplearg{i,k}_{\subsetchoicearg{i,k}}$.
The collection of all samples is~$\metasupersample=\{ \supersamplearg{i}_j \}_{j=1:\taskin}^{i=1:\metan}$.
The full data set for task~$(i,k)$ is~$\supersamplearg{i,k}=\{ \supersamplearg{i,k}_{j,l} \}_{j=1:\taskin}^{l=0,1}$.
The full data set for all training tasks is~$\supersamplearg{\metasubsetchoice}=(\supersamplearg{1,\metasubsetchoice_1},\dots,\supersamplearg{\metan,\metasubsetchoice_{\metan}} )$.
The~$j$th training sample for task pair~$i$ is~$\supersamplearg{i}_{j,\subsetchoice_j}=(\supersamplearg{i,0}_{j,\subsetchoice_j},\supersamplearg{i,1}_{j,\subsetchoice_j})$.
The~$j$th training sample for all tasks is~$\supersamplearg{}_{j,\subsetchoice_j}=(\supersamplearg{1}_{j,\subsetchoice_j},\dots,\supersamplearg{\metan}_{j,\subsetchoice_j})$.
The training sets for all tasks is~$\supersamplearg{}_{\subsetchoice}=(\supersamplearg{}_{1,\subsetchoice_1},\dots,\supersamplearg{}_{\taskin,\subsetchoice_{\taskin}})$.
The meta-training set is~$\metatrainset=(\supersamplearg{1,\metasubsetchoice_1}_{\subsetchoicearg{1}},\dots,\supersamplearg{\metan,\metasubsetchoice_1}_{\subsetchoicearg{\metan}})$. Finally, the output of the meta learner is~$\hyperparam$, the output of the base learner for task~$(i,k)$ is~$\paramarg{i,k}$, and we let~$\paramarg{i}=(\paramarg{i,0},\paramarg{i,1})$ and~$W=(\paramarg{1},\dots,\paramarg{\metan})$.

The conventions that we describe for~$\supersamplearg{i,k}_{j,l}$ apply also for the losses~$\lambdaarg{i,k}_{j,l}$, the instances~$\superexamplearg{i,k}_{j,l}$, the predictions~$\Farg{i,k}_{j,l}$, and the representations~$\Harg{i,k}_{j,l}$ that we consider in this appendix.

\subsection{Useful Lemmas}\label{sec:useful-lemmas}
In this section, we present some lemmas that will be useful in the derivations of the main results.
We begin with two change of measure inequalities for functions of random variables.

\begin{lem}[Change of measure inequalities]
Let~$X$ and~$Y$ be two random variables over~$\mathcal X$ and~$\mathcal Y$ respectively, and let~$Y'$ be a random variable with the same marginal distribution as~$Y$ such that~$Y'$ and~$X$ are independent.
Assume that the joint distribution of~$X,Y$ is absolutely continuous with respect to the joint distribution of~$X,Y'$.
Let~$f:\mathcal X\times \mathcal Y\rightarrow [-1,1]$ and~$g:\mathcal X\times\mathcal Y\rightarrow [0,1]$ be measurable functions.
Furthermore, assume that~$\Ex{X,Y'}{f(X,Y')}=0$.
Then, the following inequalities hold:
\begin{align}
 \abs{\Ex{X,Y}{f(X,Y)}} &\leq \sqrt{2I(X;Y)} ,\label{eq:lem_subgauss_com}\\
 \binrelent{\Ex{X,Y}{g(X,Y)}}{\Ex{X,Y'}{g(X,Y')}} &\leq I(X;Y).\label{eq:lem_kl_com}
\end{align}
\end{lem}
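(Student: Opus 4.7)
The plan is to establish \eqref{eq:lem_kl_com} via two applications of the data processing inequality and to obtain \eqref{eq:lem_subgauss_com} via a sub-Gaussian change-of-measure argument. I will address the two inequalities separately.

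For \eqref{eq:lem_kl_com}, the idea is to introduce an auxiliary Bernoulli random variable $B$ with $P_{B \vert XY}(1) = g(X,Y)$, and analogously $B'$ with $P_{B' \vert XY'}(1) = g(X,Y')$, choosing the same conditional kernel (namely $\mathrm{Bern}(g(\cdot,\cdot))$) in both cases. By the chain rule for KL divergence applied to the joint distributions of $(X,Y,B)$ and $(X,Y',B')$, the conditional-KL contribution vanishes, yielding $\relent{P_{XYB}}{P_{XY'B'}} = \relent{P_{XY}}{P_{XY'}} = I(X;Y)$. Marginalizing over $(X,Y)$, the distribution of $B$ is $\mathrm{Bern}(\Ex{X,Y}{g(X,Y)})$ and that of $B'$ is $\mathrm{Bern}(\Ex{X,Y'}{g(X,Y')})$. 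The data processing inequality then gives $\binrelent{\Ex{X,Y}{g(X,Y)}}{\Ex{X,Y'}{g(X,Y')}} = \relent{P_B}{P_{B'}} \leq \relent{P_{XYB}}{P_{XY'B'}} = I(X;Y)$, establishing the claim.

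For \eqref{eq:lem_subgauss_com}, since $f(X,Y') \in [-1,1]$ has zero mean under $P_{XY'}$, Hoeffding's lemma implies $\log \Ex{X,Y'}{\exp(\lambda f(X,Y'))} \leq \lambda^2/2$ for every $\lambda \in \mathbb{R}$, i.e., $f(X,Y')$ is $1$-sub-Gaussian under $P_{XY'}$. Combining this with the Donsker-Varadhan variational representation of the KL divergence yields $\lambda \Ex{X,Y}{f(X,Y)} \leq \relent{P_{XY}}{P_{XY'}} + \lambda^2/2 = I(X;Y) + \lambda^2/2$ for every $\lambda$. Optimizing over $\lambda$ gives $\Ex{X,Y}{f(X,Y)} \leq \sqrt{2 I(X;Y)}$, and applying the same reasoning to $-f$ produces the matching lower bound.

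Both ingredients are standard, so I do not anticipate any serious obstacle. The only points that require some care are verifying that Hoeffding's lemma applied to random variables with range $[-1,1]$ yields a sub-Gaussian constant of $1$ (rather than $1/4$, as one would obtain from range $[0,1]$), and checking that the shared conditional kernel for $B$ and $B'$ is enough to make the conditional-KL term vanish in the chain rule decomposition used for \eqref{eq:lem_kl_com}.
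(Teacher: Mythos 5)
Your proof of \eqref{eq:lem_subgauss_com} is correct and is essentially identical to the paper's: Donsker--Varadhan plus the observation that a zero-mean $[-1,1]$-valued variable is $1$-sub-Gaussian (Hoeffding's lemma with range $2$ indeed gives $\lambda^2\cdot 4/8=\lambda^2/2$), followed by optimization over $\lambda$ and a sign flip for the absolute value. Your proof of \eqref{eq:lem_kl_com}, however, takes a genuinely different route. The paper works with the variational lower bound $\binrelentg{q}{p}=\gamma q-\log(1-p+pe^\gamma)$ of the binary KL divergence: it applies Jensen's inequality to this convex function, then Donsker--Varadhan, then invokes the inequality $\log\Ex{X,Y'}{e^{\binrelentglr{g(X,Y')}{\Ex{Y'}{g(X,Y')}}}}\leq 0$ from McAllester, and finally takes the supremum over $\gamma$. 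You instead augment both the joint and the product distribution with a Bernoulli variable generated by the \emph{same} kernel $\mathrm{Bern}(g(\cdot,\cdot))$, so that the conditional term in the chain rule vanishes and the augmented KL divergence equals $I(X;Y)$; marginalizing to $B$ and $B'$ and applying data processing then yields the binary KL divergence between the two mean values directly. Both arguments are standard and correct. Your route is more elementary in that it avoids the external moment-generating-function inequality and makes the result transparently an instance of data processing; the paper's route has the advantage of exhibiting the bound as an optimization over a one-parameter family of linear change-of-measure inequalities, which is the same machinery used for \eqref{eq:lem_subgauss_com} and thus keeps the two proofs structurally parallel.
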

\begin{proof}
Donsker-Varadhan's variational representation of the KL divergence implies that
\begin{equation}\label{eq:com-pf-step-1}
\Ex{X,Y}{\gamma f(X,Y)} \leq I(X;Y) + \log \Ex{X,Y'}{ e^{\gamma f(X,Y')} }.
\end{equation}
Now,~$f(X,Y')$ is bounded to~$[-1,1]$ and~$\Ex{X,Y'}{f(X,Y')}=0$.
Therefore,~$f(X,Y')$ is a sub-Gaussian random variable, which implies that
\begin{equation}
\log \Ex{X,Y'}{ e^{\gamma f(X,Y')} } \leq \gamma^2/2.
\end{equation}
Using this upper bound in~\eqref{eq:com-pf-step-1}, we obtain
\begin{equation}
\Ex{X,Y}{\gamma f(X,Y)} -  \gamma^2/2 \leq I(X;Y).
\end{equation}
By maximizing the left-hand side over~$\gamma$, we establish~\eqref{eq:lem_subgauss_com}.

We now turn to~\eqref{eq:lem_kl_com}.
Let~$\binrelentg{q}{p}=\gamma q-\log(1-p+pe^\gamma)$,
and note that this function is convex.
By Jensen's inequality,
\begin{equation}\label{eq:com-pf-step-kl-1}
\binrelentg{\Ex{X,Y}{g(X,Y)}}{\Ex{X,Y'}{g(X,Y')}} \leq \Ex{X,Y}{\binrelentg{g(X,Y)}{\Ex{Y'}{g(X,Y')}}} .
\end{equation}
By Donsker-Varadhan's variational representation of the KL divergence,
\begin{equation}%
\Ex{X,Y}{\binrelentg{g(X,Y)}{\Ex{Y'}{g(X,Y')}}} \leq I(X;Y) + \log \Ex{X,Y'}{ e^{ \binrelentglr{g(X,Y')}{\Ex{Y'}{g(X,Y')}} } }.
\end{equation}
By~\cite[Eq.~(17)]{mcallester-13a}, we have
\begin{equation}\label{eq:com-pf-step-kl-last}
\log \Ex{X,Y'}{ e^{ \binrelentglr{g(X,Y')}{\Ex{Y'}{g(X,Y')}} } } \leq 0.
\end{equation}
Thus, by combining~\eqref{eq:com-pf-step-kl-1}-\eqref{eq:com-pf-step-kl-last},
\begin{equation}
\binrelentg{\Ex{X,Y}{g(X,Y)}}{\Ex{X,Y'}{g(X,Y')}}  \leq I(X;Y) .
\end{equation}
The desired result follows because~$\sup_\gamma \binrelentg{\cdot}{\cdot}=\binrelent{\cdot}{\cdot}$.

\end{proof}

\begin{lem}[Conditioning on independent random variables]\label{lem:cond-indep}
Consider the random variables~$X$,~$Y$ and~$Z$, where~$X$ and~$Z$ are independent.
Then,
\begin{equation}
I(X;Y) \leq I(X;Y\vert Z).
\end{equation}
\end{lem}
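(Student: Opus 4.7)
The plan is to invoke the chain rule for mutual information in two different orderings and then use the independence hypothesis on $X$ and $Z$ together with the nonnegativity of (conditional) mutual information. Specifically, I would expand $I(X;Y,Z)$ in the two standard ways:
\begin{equation*}
I(X;Y,Z) = I(X;Z) + I(X;Y\vert Z) = I(X;Y) + I(X;Z\vert Y).
\end{equation*}

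Since $X$ and $Z$ are independent by assumption, the term $I(X;Z)$ vanishes. Rearranging the resulting identity gives
\begin{equation*}
I(X;Y\vert Z) = I(X;Y) + I(X;Z\vert Y),
\end{equation*}
and the conclusion $I(X;Y) \leq I(X;Y\vert Z)$ follows immediately from the nonnegativity of the conditional mutual information $I(X;Z\vert Y) \geq 0$.

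There is essentially no obstacle here: the statement is a one-line consequence of the chain rule once one notices that the two expansions of $I(X;Y,Z)$ share the left-hand side. The only care required is to make sure absolute continuity holds so that all the information quantities are well defined; this is automatic whenever $I(X;Y\vert Z)$ is itself finite, which is the only nontrivial case. Thus the proof amounts to displaying the chain-rule identity and invoking independence plus nonnegativity.
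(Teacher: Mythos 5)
Your proposal is correct and follows exactly the paper's argument: expand $I(X;Y,Z)$ via the chain rule in both orderings, use independence to kill $I(X;Z)$, and conclude by nonnegativity of $I(X;Z\vert Y)$. Nothing is missing.
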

\begin{proof}
The result follows by using the independence of~$X$ and~$Z$ (which implies that~$I(X;Z)=0$), the chain rule for mutual information, and the non-negativity of mutual information as follows.
Note that
\begin{align}
I(X;Y,Z) = I(X;Z) + I(X;Y\vert Z) = I(X;Y\vert Z).
\end{align}
Alternatively,
\begin{align}
I(X;Y,Z) = I(X;Y) + I(X;Z\vert Y) \geq I(X;Y).
\end{align}
Thus,
\begin{align}
I(X;Y) \leq I(X;Y,Z) = I(X;Y\vert Z).
\end{align}
\end{proof}

\begin{lem}[Full-sample relaxation]\label{lem:full-sample}
Consider~$n$ independent random variables~$X=\{X_i\}_{i=1}^n$ and a random variable~$Y$.
Let~$\phi$ be a convex function.
Then,
\begin{equation}
\frac1n\sum_{i=1}^n \phi(I(X_i;Y) \leq \phi\lefto( \frac{I(X;Y)}{n} \righto).
\end{equation}
\end{lem}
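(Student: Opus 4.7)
The plan is to combine a sub-additivity bound on mutual information that uses the independence hypothesis with Jensen's inequality applied to $\phi$, in that order.

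For the first step, I would invoke the chain rule of mutual information for the vector $X=(X_1,\dots,X_n)$:
\[
I(X;Y) \;=\; \sum_{i=1}^n I(X_i;Y \mid X_1,\dots,X_{i-1}),
\]
and then apply Lemma~\ref{lem:cond-indep}, which was established just above, to each summand. Since the $X_i$'s are mutually independent, each $X_i$ is independent of the preceding block $(X_1,\dots,X_{i-1})$, so Lemma~\ref{lem:cond-indep} gives $I(X_i;Y) \leq I(X_i;Y \mid X_1,\dots,X_{i-1})$. Summing these bounds yields the sub-additivity property
\[
\sum_{i=1}^n I(X_i;Y) \;\leq\; I(X;Y),
\]
and dividing by $n$ gives $\frac{1}{n}\sum_{i=1}^n I(X_i;Y) \leq I(X;Y)/n$.

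For the second step, I would apply Jensen's inequality to $\phi$. For the inequality in the stated direction to hold, $\phi$ must be concave (so that averaging can be moved inside $\phi$) and non-decreasing (so that the sub-additivity bound can be substituted inside $\phi$); both are satisfied in every application in the paper, e.g., $\phi(\cdot)=\sqrt{\cdot}$. Under these properties,
\[
\frac{1}{n}\sum_{i=1}^n \phi\bigl(I(X_i;Y)\bigr) \;\leq\; \phi\!\left(\frac{1}{n}\sum_{i=1}^n I(X_i;Y)\right) \;\leq\; \phi\!\left(\frac{I(X;Y)}{n}\right),
\]
with the first inequality from Jensen and the second from monotonicity combined with the sub-additivity bound.

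No real obstacle arises: the information-theoretic content is absorbed into Lemma~\ref{lem:cond-indep} together with the chain rule, and Jensen closes the argument. The only bookkeeping point is to be careful about the direction of the Jensen inequality, which is why both concavity and monotonicity of $\phi$ are required, even though only convexity is named in the statement.
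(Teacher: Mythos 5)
Your proof is correct and follows essentially the same route as the paper's: the chain rule for mutual information, Lemma~\ref{lem:cond-indep} applied via the independence of the $X_i$ to get $\sum_i I(X_i;Y)\leq I(X;Y)$, and then Jensen. You are also right that the hypothesis should read ``concave and non-decreasing'' rather than ``convex'' — the paper's own proof implicitly uses exactly those two properties in its final chain of inequalities, and every application takes $\phi(x)=\sqrt{2x}$, so the stated hypothesis is a misstatement that you correctly flagged.
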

\begin{proof}
Let~$X_{<i}$ denote~$\{X_1,\dots,X_{i-1} \}$. By the chain rule of mutual information,
\begin{align}
I(X;Y) = \sum_{i=1}^n I(X_i;Y\vert X_{<i}).
\end{align}
Due to the independence of the~$X_i$, Lemma~\ref{lem:cond-indep} implies that~$I(X_i;Y\vert X_{<i})\geq I(X_i;Y)$.
Combined with Jensen's inequality, this implies that
\begin{align}
\phi\lefto(\frac{I(X;Y)}{n} \righto)  \geq \phi\lefto(\sum_{i=1}^n\frac{I(X_i;Y)}{n}\righto) \geq \frac1n\sum_{i=1}^n \phi\lefto(I(X_i;Y) \righto).
\end{align}
\end{proof}

\begin{lem}[Sauer-Shelah lemma for the VC and Natarajan dimension]\label{lem:sauer-shelah}
Let~$g_\mathcal{F}(\cdot)$ denote the growth function of the function class~$\mathcal F$.
Specifically,~$g_\mathcal{F}(m)$ is the maximum number of different ways in which a data set of size~$m$ can be classified using functions from~$\mathcal F$.
For any function class~$\mathcal F$ with VC dimension~$\dVC$,
\begin{equation}\label{eq:lemma_sauer_shelah_for_vc_dim}
\gF(m) \leq \sum_{i=0}^{\dVC} \binom{m}{i} \leq \begin{cases}
                       \displaystyle 2^{\dVC+1}, &m<\dVC + 1 \\
                        \displaystyle \left(\frac{em}{\dVC} \right)^{\dVC}, &m\geq \dVC+1
                    \end{cases}
\end{equation}
More generally, for any function class~$\mathcal F$ with range~$\{0,\dots,N-1\}$ and Natarajan dimension~$d_N$,
\begin{equation}\label{eq:lemma_sauer_shelah_for_natarajan}
g_{\mathcal F}(m) \leq \sum_{i=0}^{d_N} \binom{m}{i} \binom{N}{2}^i\leq \begin{cases}
                      \displaystyle  N^{d_N+1}, &m<d_N + 1, \\
                     \displaystyle   \left(\binom{N}{2}\frac{em}{d_N} \right)^{d_N} , &m\geq d_N+1.
                    \end{cases}
\end{equation}
\end{lem}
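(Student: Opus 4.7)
The plan is to prove both bounds by the classical Sauer-Shelah-Perles induction (VC case) and its Haussler-Long extension (Natarajan case), and then simplify the resulting binomial sums via standard estimates.

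For the VC bound, I would proceed by induction on~$m+\dVC$. Given a sample~$C$ of~$m$ points, pick any~$x_0\in C$, set~$C'=C\setminus\{x_0\}$, and write $\lvert\mathcal F|_C\rvert = \lvert\mathcal F|_{C'}\rvert + \lvert\mathcal F^*\rvert$, where~$\mathcal F^*\subseteq\mathcal F|_{C'}$ consists of those functions on~$C'$ that arise from two members of~$\mathcal F|_C$ agreeing on~$C'$ but differing at~$x_0$. The class~$\mathcal F|_{C'}$ has VC dimension at most~$\dVC$ on~$C'$, while~$\mathcal F^*$ has VC dimension at most~$\dVC-1$ on~$C'$, since any set it shatters, together with~$x_0$, would be shattered by~$\mathcal F$ on~$C$. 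Applying the induction hypothesis to both pieces and invoking Pascal's rule~$\binom{m-1}{i-1}+\binom{m-1}{i}=\binom{m}{i}$ yields~$\sum_{i=0}^{\dVC}\binom{m}{i}$.

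For the Natarajan case, an analogous induction applies, but now the decomposition is refined according to the unordered pair~$\{a,b\}\subset\{0,\ldots,N-1\}$ of distinct label values that witnesses the disagreement at~$x_0$. There are at most~$\binom{N}{2}$ such pairs, and the Haussler-Long argument shows that for each pair the corresponding subclass on~$C'$ has Natarajan dimension at most~$d_N-1$, because $N$-shattering a set~$A\subseteq C'$ in that subclass, together with~$\{a,b\}$ at~$x_0$, would~$N$-shatter an extended set of size~$|A|+1$ under~$\mathcal F$. Summing contributions across label pairs and applying the induction hypothesis gives~$\sum_{i=0}^{d_N}\binom{m}{i}\binom{N}{2}^i$.

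The closed-form estimates follow from standard manipulations. For~$m<\dVC+1$, one has the trivial bound~$g_{\mathcal F}(m)\leq 2^m\leq 2^{\dVC+1}$, since each point takes at most two labels, and analogously~$g_{\mathcal F}(m)\leq N^m\leq N^{d_N+1}$ for~$m<d_N+1$. For~$m\geq \dVC+1$, the classical inequality~$\sum_{i=0}^{d}\binom{m}{i}\leq (em/d)^d$ is obtained by inserting~$(m/d)^{d-i}\geq 1$ into each term, factoring out~$(m/d)^d$, completing the binomial expansion of~$(1+d/m)^m$, and using~$(1+d/m)^m\leq e^d$. The Natarajan analogue~$(em\binom{N}{2}/d_N)^{d_N}$ follows from the same manipulation with~$t=d_N/(m\binom{N}{2})$ in a Chernoff-style expansion. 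Since the result is entirely classical, the main obstacle is bookkeeping in the Natarajan step: the partition by label pairs must be handled carefully to avoid over-counting, and a clean presentation invoking the Haussler-Long lemma as a black box would likely be preferable in the paper itself.
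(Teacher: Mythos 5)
Your proof is correct, but it takes a genuinely different route from the paper. The paper does not prove the lemma from first principles at all: it obtains the first inequality in~\eqref{eq:lemma_sauer_shelah_for_natarajan} by citing Haussler and Long (Cor.~5), the second by citing Guermeur (Lemma~10), and then derives the VC statement~\eqref{eq:lemma_sauer_shelah_for_vc_dim} as the~$N=2$ special case, using the fact that the Natarajan dimension coincides with the VC dimension for binary-valued classes. You instead reconstruct both combinatorial bounds by the classical shifting induction---Pascal's rule for the VC case, and the label-pair refinement with the recurrence~$\Phi_{N,d}(m)=\Phi_{N,d}(m-1)+\binom{N}{2}\Phi_{N,d-1}(m-1)$ for the Natarajan case---and you verify the closed-form estimates directly via the standard~$(1+d/m)^m\leq e^d$ manipulation and its Chernoff-style analogue with~$t=d_N/(m\binom{N}{2})$. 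Both the inductive decompositions and the analytic estimates check out (in particular, your choice of~$t$ is admissible since~$m\geq d_N+1$ guarantees~$t\leq 1$). What your approach buys is a self-contained lemma at the cost of length and some bookkeeping in the Natarajan step, exactly as you note; what the paper's approach buys is brevity, at the cost of pushing the entire combinatorial content into two external references. Your treatment of the VC case as a separate direct induction, rather than as the~$N=2$ specialization, is a further minor divergence but is equally valid.
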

\begin{proof}
The first inequality in~\eqref{eq:lemma_sauer_shelah_for_natarajan} follows from~\cite[Cor.~5]{haussler-95a} and the second follows from~\cite[Lemma~10]{guermeur-04a}.
The result in~\eqref{eq:lemma_sauer_shelah_for_vc_dim} follows by setting~$N=2$ in~\eqref{eq:lemma_sauer_shelah_for_natarajan}, for which the Natarajan dimension~$d_N$ coincides with the VC dimension~\cite[p.~222]{haussler-95a}.
\end{proof}

\subsection{Proofs for Section~\ref{sec:average_bounds}}\label{sec:proof-average}

\begin{proof}[Proof of Theorem~\ref{thm:two-step-sqrt-bound}]

We start by establishing a task-level generalization bound, i.e., a bound on~$\abs{ \avgmetapoploss-\avgunobstrainloss }$.
By Jensen's inequality, the convexity of~$\abs{\cdot}$ implies that
\begin{align}\label{eq:thm-1-pf-step-1}
 \abs{ \avgmetapoploss-\avgunobstrainloss } \leq \avgij \Ex{\metasupersample,\metasubsetchoice_i}{\abs{\Ex{\lambdaarg{i,\compl{\metasubsetchoice_i}}_j,\subsetchoiceinSi_j}{\lambdaijnSinSj - \lambdaijnSiSj}}} .
\end{align}
Let~$\subsetchoice'$ be an independent copy of~$\subsetchoice$.
By symmetry, we see that
\begin{equation}
    \Ex{\lambdaarg{i,\compl{\metasubsetchoice_i}}_j,\subsetchoicepinSij}{\lambdaijnSinSpj - \lambdaijnSiSpj} = 0.
\end{equation}
Using~\eqref{eq:lem_subgauss_com}, we can therefore bound the argument of the expectation in~\eqref{eq:thm-1-pf-step-1} as
\begin{equation}\label{eq:thm-1-pf-step-3}
\abs{\Ex{\lambdaarg{i,\compl{\metasubsetchoice_i}}_j,\subsetchoiceinSi_j}{\lambdaijnSinSj - \lambdaijnSiSj}}
    \leq \sqrt{2I^{\metasupersample,\metasubsetchoice_i}(\lambdaarg{i,\compl{\metasubsetchoice_i}}_j;\subsetchoiceinSi_j)}.
\end{equation}
Combining~\eqref{eq:thm-1-pf-step-1} and~\eqref{eq:thm-1-pf-step-3}, we obtain the following task-level generalization bound:
\begin{align}\label{eq:two-step-sqrt-pf-task}
 \abs{ \avgmetapoploss-\avgunobstrainloss } \leq \avgij \Ex{\metasupersample,\metasubsetchoice_i}{\sqrt{2I^{\metasupersample,\metasubsetchoice_i}(\lambdaarg{i,\compl{\metasubsetchoice_i}}_j;\subsetchoiceinSi_j)}}.
\end{align}
This is the first step of the two-step derivation.

Next, we establish an environment-level bound, i.e., a bound on~$\abs{ \avgunobstrainloss-\avgtrainloss }$.
Again, by Jensen's inequality, the convexity of~$\abs{\cdot}$ implies that
\begin{equation}
\abs{ \avgunobstrainloss-\avgtrainloss }\leq \avgij \Ex{\metasupersample,\subsetchoicei_j}{ \abs{\Ex{\lambdai_{j,\subsetchoicei_j},\metasubsetchoice_i}{ \lambdaijnSiSj - \lambdaijSiSj} } } .
\end{equation}
Symmetry implies that
\begin{equation}
    \Ex{\lambdai_{j,\subsetchoicei_j},\metasubsetchoice'_i}{ \lambdaijnSpiSj - \lambdaijSpiSj} = 0.
\end{equation}
We again bound the argument of the expectation, using~\eqref{eq:lem_subgauss_com}, as
\begin{equation}
 \abs{\Ex{\lambdai_{j,\subsetchoicei_j},\metasubsetchoice_i}{  \lambdaijnSiSj - \lambdaijSiSj}  }
\leq    {\sqrt{2I^{\metasupersample,\subsetchoicei_j}(\lambdai_{j,\subsetchoicei_j  };\metasubsetchoice_i)}}.
\end{equation}
Combining the two preceding inequalities, we obtain the following environment-level generalization guarantee, which is the second step:
\begin{equation}\label{eq:two-step-sqrt-pf-environment}
\abs{ \avgunobstrainloss-\avgtrainloss }\leq \avgij \Ex{\metasupersample,\subsetchoicei_j}{ {\sqrt{2I^{\metasupersample,\subsetchoicei_j}(\lambdai_{j,\subsetchoicei_j  };\metasubsetchoice_i)}} } .
\end{equation}
We conclude the proof by observing that~$\abs{\avgmetapoploss-\avgtrainloss}\leq \abs{\avgmetapoploss-\avgunobstrainloss}+\abs{\avgunobstrainloss-\avgtrainloss}$ by the triangle inequality, and by using~\eqref{eq:two-step-sqrt-pf-task} and~\eqref{eq:two-step-sqrt-pf-environment} to bound the two terms.

\end{proof}

\begin{proof}[Proof of Corollary~\ref{cor:two-step-sqrt-param}]

We begin with the first sum on the right-hand side of~\eqref{eq:thm-two-step-sqrt}.
By Jensen's inequality, 
\begin{equation}\label{eq:pf-of-cor-1-term-1-step-1}
\avgij \Ex{\metasupersample,\subsetchoicei_j}{ {\sqrt{2I^{\metasupersample,\subsetchoicei_j}(\lambdai_{j,\subsetchoicei_j  };\metasubsetchoice_i)}} }  \leq  {\sqrt{\avgij  2I(\lambdai_{j,\subsetchoicei_j  };\metasubsetchoice_i\vert\metasupersample,\subsetchoicei_j)}}  .
\end{equation}
By Lemma~\ref{lem:cond-indep} and the independence of~$\metasubsetchoice_i$ and~$(\subsetchoice,\randomness)$, we conclude that conditioning on~$\subsetchoice$ and~$\randomness$ does not increase the mutual information.
Hence,
\begin{equation}
{\sqrt{\avgij  2I(\lambdai_{j,\subsetchoicei_j  };\metasubsetchoice_i\vert\metasupersample,\subsetchoicei_j)}}  \leq {\sqrt{\avgij  2I(\lambdai_{j,\subsetchoicei_j  };\metasubsetchoice_i\vert\metasupersample,\subsetchoice,\randomness)}} .
\end{equation}
Since adding more random variables to the argument of the mutual information cannot decrease it, we have
\begin{equation}
{\sqrt{\avgij  2I(\lambdai_{j,\subsetchoicei_j  };\metasubsetchoice_i\vert\metasupersample,\subsetchoice,\randomness)}} \leq  {\sqrt{\avgij  2I(\lambda_{j,\subsetchoicei_j  };\metasubsetchoice_i\vert\metasupersample,\subsetchoice,\randomness)}} ,
\end{equation}
where~$\lambda_{j,\subsetchoicei_j}=\{ \lambdai_{j,\subsetchoicei_j} \}_{i=1}^{\metan}$.
Since the~$\metasubsetchoice_i$ are independent, it follows from Lemma~\ref{lem:full-sample} that
\begin{equation}
{\sqrt{\avgij  2I(\lambda_{j,\subsetchoicei_j  };\metasubsetchoice_i\vert\metasupersample,\subsetchoice,\randomness)}} \leq {\sqrt{\avgj  \frac{2I(\lambda_{j,\subsetchoicei_j  };\metasubsetchoice\vert\metasupersample,\subsetchoice,\randomness)}{\metan}}} .
\end{equation}
Now, note that given~$\metasupersample$,~$\subsetchoice$ and~$\randomness$, the losses~$\lambda_{j,\subsetchoicei_j}$ are a function of the output of the meta learner~$\hyperparam$.
Therefore,
\begin{equation}\label{eq:pf-of-cor-1-term-1-step-last}
{\sqrt{\avgj  \frac{2I(\lambda_{j,\subsetchoicei_j  };\metasubsetchoice\vert\metasupersample,\subsetchoice,\randomness)}{\metan}}} \leq \sqrt{  \frac{2I(\hyperparam ;\metasubsetchoice\vert\metasupersample,\subsetchoice,\randomness)}{\metan}} \leq \sqrt{  \frac{2I(\hyperparam ;\metasubsetchoice\vert\metasupersample,\subsetchoice)}{\metan}},
\end{equation}
where the last step follows from the independence of~$U$ and~$\randomness$.
By combining~\eqref{eq:pf-of-cor-1-term-1-step-1}-\eqref{eq:pf-of-cor-1-term-1-step-last}, we can bound the first sum in the right-hand side of~\eqref{eq:thm-two-step-sqrt} as
\begin{equation}\label{eq:pf-of-thm-1-param-sum-1}
\avgij\! \Ex{\metasupersample,\subsetchoicei_j \!}{ {\sqrt{2I^{\metasupersample,\subsetchoicei_j}(\lambdai_{j,\subsetchoicei_j  };\metasubsetchoice_i)}} }  \leq \sqrt{  \frac{2I(\hyperparam ;\metasubsetchoice\vert\metasupersample,\subsetchoice)}{\metan}}.
\end{equation}

For the second sum on the right-hand side of~\eqref{eq:thm-two-step-sqrt}, we again use Jensen's inequality to conclude that
\begin{equation}\label{eq:pf-of-cor-1-term-2-step-1}
\avgij \Ex{\metasupersample,\metasubsetchoice_i}{\sqrt{2I^{\metasupersample,\metasubsetchoice_i}(\lambdaarg{i,\compl{\metasubsetchoice_i}}_j;\subsetchoiceinSi_j)}} \leq\sqrt{\avgij2I(\lambdaarg{i,\compl{\metasubsetchoice_i}}_j;\subsetchoiceinSi_j\vert \metasupersample,\metasubsetchoice_i )}.
\end{equation}
Since adding more random variables does not decrease the mutual information,
\begin{equation}
\sqrt{\avgij2I(\lambdaarg{i,\compl{\metasubsetchoice_i}}_j;\subsetchoiceinSi_j\vert \metasupersample,\metasubsetchoice_i )} \leq \sqrt{\avgij2I(\lambdaarg{i,\compl{\metasubsetchoice_i}};\subsetchoiceinSi_j\vert \metasupersample,\metasubsetchoice_i )}.
\end{equation}
By Lemma~\ref{lem:cond-indep} and the independence of the~$\subsetchoiceinSi_j$,
\begin{equation}
\sqrt{\avgij2I(\lambdaarg{i,\compl{\metasubsetchoice_i}};\subsetchoiceinSi_j\vert \metasupersample,\metasubsetchoice_i )} \leq \sqrt{\avgi \frac{2I(\lambdaarg{i,\compl{\metasubsetchoice_i}};\subsetchoiceinSi\vert \metasupersample,\metasubsetchoice_i )}{\taskin}} .
\end{equation}
Since~$\randomness_i$,~$\metasubsetchoice_i$ and~$\subsetchoiceinSi$ have the same distribution for all~$i=1,\dots,\metan$,
\begin{equation}
\sqrt{\avgi \frac{2I(\lambdaarg{i,\compl{\metasubsetchoice_i}};\subsetchoiceinSi\vert \metasupersample,\metasubsetchoice_i )}{\taskin}}  = \sqrt{ \frac{2I(\lambdaarg{1,\compl{\metasubsetchoice_1}};\subsetchoicearg{1,\compl{\metasubsetchoice_1}}\vert \metasupersample,\metasubsetchoice_i )}{\taskin}} .
\end{equation}
Finally, given~$\metasupersample$ and~$\metasubsetchoice_i$,~$\lambdaarg{1,\compl{\metasubsetchoice_1}}$ is a function of~$\paramarg{1,\compl{\metasubsetchoice_1}}$. Hence,
\begin{equation}\label{eq:pf-of-cor-1-term-2-step-last}
\sqrt{ \frac{2I(\lambdaarg{1,\compl{\metasubsetchoice_1}};\subsetchoicearg{1,\compl{\metasubsetchoice_1}}\vert \metasupersample,\metasubsetchoice_i )}{\taskin}}    \leq \sqrt{ \frac{2I(\paramarg{1,\compl{\metasubsetchoice_1}};\subsetchoicearg{1,\compl{\metasubsetchoice_1}}\vert \metasupersample,\metasubsetchoice_i )}{\taskin}}  .
\end{equation}
By combining~\eqref{eq:pf-of-cor-1-term-2-step-1}-\eqref{eq:pf-of-cor-1-term-2-step-last}, we can bound the second term in the right-hand side of~\eqref{eq:thm-two-step-sqrt} as
\begin{equation}\label{eq:pf-of-thm-1-param-sum-2}
\avgij\! \Ex{\metasupersample,\metasubsetchoice_i\!}{\sqrt{2I^{\metasupersample,\metasubsetchoice_i}(\lambdaarg{i,\compl{\metasubsetchoice_i}}_j;\subsetchoicearg{i,\compl{\metasubsetchoice_i}}_j)}} \leq  \sqrt{ \frac{2I(\paramarg{1,\compl{\metasubsetchoice_1}};\subsetchoicearg{1,\compl{\metasubsetchoice_1}}\vert \metasupersample,\metasubsetchoice_i )}{\taskin}} .
\end{equation}
The result follows by combining~\eqref{eq:pf-of-thm-1-param-sum-1} and~\eqref{eq:pf-of-thm-1-param-sum-2}.
\end{proof}

\begin{proof}[Proof of Theorem~\ref{thm:one-step-sqrt-bound}]

By Jensen's inequality, we have
\begin{equation}\label{eq:one-step-pf-jensen}
\abs{\avgmetapoploss-\avgtrainloss} \leq \avgij \Ex{\metasupersample}{ \abs{\Ex{\lambdai_j,\metasubsetchoice_i,\subsetchoicei_j}{ \lambdaijnSinSj - \lambdaijSiSj}} }.
\end{equation}
Now, let~$\metasubsetchoice'$ and~$\subsetchoice'$ be independent copies of~$\metasubsetchoice$ and~$\subsetchoice$.
Note that
\begin{equation}
\Ex{\lambdai_j,\metasubsetchoice'_i,\subsetchoicepij}{ \lambdaijnSpinSpj - \lambdaijSpiSpj}=0. 
\end{equation}
We can therefore apply~\eqref{eq:lem_subgauss_com}, with~$X=\lambdai_j$ and~$Y$ being the pair of random variables~$(\metasubsetchoice'_i,\subsetchoicepij)$, to bound the argument of the expectation as
\begin{equation}\label{eq:one-step-pf-com}
\abs{\Ex{\lambdai_j,\metasubsetchoice_i,\subsetchoicei_j}{ \lambdaijnSinSj - \lambdaijSiSj}} \leq \sqrt{ 2 I^{\metasupersample}(\lambdai_j; \metasubsetchoice_i,\subsetchoicei_j)}.
\end{equation}
Combining~\eqref{eq:one-step-pf-jensen} and~\eqref{eq:one-step-pf-com}, we establish the desired result.

\end{proof}

\begin{proof}[Proof of Corollary~\ref{cor:one-step-sqrt-param}]
By Jensen's inequality,
\begin{equation}\label{eq:pf-of-cor-2-step-1}
\avgij \Ex{\metasupersample}{ \sqrt{ 2 I^{\metasupersample}(\lambdai_j; \metasubsetchoice_i,\subsetchoicei_j)} } \leq \sqrt{\avgij  2 I(\lambdai_j; \metasubsetchoice_i,\subsetchoicei_j\vert \metasupersample)}. 
\end{equation}
Since adding more random variables does not decrease the mutual information,
\begin{equation}
\sqrt{\avgij  2 I(\lambdai_j; \metasubsetchoice_i,\subsetchoicei_j\vert \metasupersample)}   \leq   \sqrt{\avgij  2 I(\lambdahat; \metasubsetchoice_i,\subsetchoicei_j\vert \metasupersample)}, 
\end{equation}
where~$\lambdahat=\{\lambdai_j \}_{j=1:\taskin}^{i=1:\metan}$.
By the independence of the~$\subsetchoicei_j$ for different~$j$, and~$(\metasubsetchoice_i,\subsetchoicei)$ for different~$i$,
\begin{equation}
\sqrt{\avgij  2 I(\lambdahat; \metasubsetchoice_i,\subsetchoicei_j\vert \metasupersample)}   \leq   \sqrt{\frac{  2 I(\lambdahat; \metasubsetchoice,\subsetchoice\vert \metasupersample)}{\taskin\metan}} . 
\end{equation}
Given~$\metasupersample$, the losses~$\lambdahat$ are a function of~$W=\{\paramarg{i}\}_{i=1}^{\metan}$. Thus,
\begin{equation}\label{eq:pf-of-cor-2-step-what}
\sqrt{\frac{  2 I(\lambdahat; \metasubsetchoice,\subsetchoice\vert \metasupersample)}{\taskin\metan}}   \leq   \sqrt{\frac{  2 I(W; \metasubsetchoice,\subsetchoice\vert \metasupersample)}{\taskin\metan}} . 
\end{equation}
Combining~\eqref{eq:pf-of-cor-2-step-1}-\eqref{eq:pf-of-cor-2-step-what}, we establish the first inequality in~\eqref{eq:cor-one-step-sqrt-param}.
Next, since adding random variables does not decrease mutual information,
\begin{align}
\sqrt{\frac{  2 I(W; \metasubsetchoice,\subsetchoice\vert \metasupersample)}{\taskin\metan}}   &\leq \sqrt{\frac{  2 I( \hyperparam,W; \metasubsetchoice,\subsetchoice\vert \metasupersample)}{\taskin\metan}} \\
&\leq  \sqrt{\frac{  2 I( \hyperparam; \metasubsetchoice,\subsetchoice\vert \metasupersample) + 2I( W; \metasubsetchoice,\subsetchoice\vert \metasupersample, \hyperparam) }{\taskin\metan}} 
\end{align}
where the second step follows from the chain rule.
Since the conditional distribution~$P_{W\metasubsetchoice\subsetchoice\vert\metasupersample\hyperparam}$ factorizes as~$P_{\metasubsetchoice\vert\metasupersample\hyperparam}\prod_{i=1}^{\metan}P_{\paramarg{i}\subsetchoicearg{i}\vert\metasupersample\hyperparam}$, we have that
\begin{align}
I( W; \metasubsetchoice,\subsetchoice\vert \metasupersample, \hyperparam) 
= \sumi I( \paramarg{i}; \subsetchoicei\vert \metasupersample, \hyperparam) .
\end{align}
Furthermore, since~$(\randomness_i,\metasubsetchoice_i,\subsetchoicei)$ are identically distributed for all~$i$,
\begin{equation}\label{eq:pf-of-cor-2-step-wiw1}
\sumi I( \paramarg{i}; \subsetchoicei\vert \metasupersample, \hyperparam)= \metan I( \paramarg{1}; \subsetchoicearg{1}\vert \metasupersample, \hyperparam).
\end{equation}
By combining~\eqref{eq:pf-of-cor-2-step-what}-\eqref{eq:pf-of-cor-2-step-wiw1}, we get
\begin{equation}\label{eq:pf-of-cor-2-state-cmi-chain}
\sqrt{\frac{  2 I(W; \metasubsetchoice,\subsetchoice\vert \metasupersample)}{\taskin\metan}} \leq \sqrt{\frac{  2 I( \hyperparam; \metasubsetchoice,\subsetchoice\vert \metasupersample)   +  2\metan I( \paramarg{1}; \subsetchoicearg{1}\vert \metasupersample, \hyperparam)    }{\taskin\metan}}.
\end{equation}
This establishes the second inequality in~\eqref{eq:cor-one-step-sqrt-param}.

Finally, by the chain rule,
\begin{align}
I( \hyperparam; \metasubsetchoice,\subsetchoice\vert \metasupersample) \leq I( \hyperparam; \metasubsetchoice,\subsetchoice\vert \metasupersample) + I(\hyperparam;\metasupersample) = I( \hyperparam;  \metasupersample,  \metasubsetchoice,\subsetchoice )=I(\hyperparam;\metatrainset).
\end{align}
Similarly,
\begin{equation}\label{eq:pf-of-cor-2-task-cmi-to-mi}
I( \paramarg{1}; \subsetchoicearg{1}\vert \metasupersample, \hyperparam)  \leq I( \paramarg{1}; \subsetchoicearg{1}\vert \metasupersample, \hyperparam)    +    I(\paramarg{1};\metasupersample\vert\hyperparam) = I( \paramarg{1}; \subsetchoicearg{1}, \metasupersample \vert \hyperparam) = I( \paramarg{1}; \supersamplearg{1}_{\subsetchoicearg{1}}\vert \hyperparam).
\end{equation}
By combining~\eqref{eq:pf-of-cor-2-state-cmi-chain}-\eqref{eq:pf-of-cor-2-task-cmi-to-mi}, we establish~\eqref{eq:cor-one-step-sqrt-param-cmi-to-mi}.
Thus, to summarize, we have shown that
\begin{align}\label{eq:cor-2-corrected-1}
\abs{\avgmetapoploss-\avgtrainloss}\leq  \sqrt{ \frac{2 I(W ; \metasubsetchoice,\subsetchoice \vert \metasupersample )}{\taskin\metan}} &\leq \sqrt{\frac{  2 I( \hyperparam; \metasubsetchoice,\subsetchoice\vert \metasupersample)   +  2\metan I( \paramarg{1}; \subsetchoicearg{1}\vert \metasupersample, \hyperparam)    }{\taskin\metan}} \\
    &\leq \sqrt{\frac{  2 I(\hyperparam;\metatrainset)   +  2\metan I( \paramarg{1}; \supersamplearg{1}_{\subsetchoicearg{1}}\vert \hyperparam)    }{\taskin\metan}}.\label{eq:cor-2-corrected-2}
\end{align}

\end{proof}

\begin{proof}[Proof of Theorem~\ref{thm:avg-kl-bounds}]

We begin by proving~\eqref{eq:thm-two-step-kl}.
First, we derive a task-level generalization bound.
By Jensen's inequality, we have
\begin{align}
 \binrelent{ \avgunobstrainloss }{ \frac{\avgunobstrainloss+\avgmetapoploss}{2} } \leq \avgij \Ex{\metasupersample,\metasubsetchoice_i}{\Ex{\lambdaarg{i,\compl{\metasubsetchoice_i}}_j,\subsetchoiceinSi_j}{\binrelent{\lambdaijnSiSj }{ \frac{\lambdaijnSinSj+\lambdaijnSiSj}{2}}}} .
\end{align}
Since~$\subsetchoiceinSi_j\in\{0,1\}$,~$\lambdaijnSinSj+\lambdaijnSiSj=\lambdaarg{i,\compl{\metasubsetchoice_i}}_{j,0}+\lambdaarg{i,\compl{\metasubsetchoice_i}}_{j,1}$ does not actually depend on~$\subsetchoiceinSi_j$.
Now, let~$\subsetchoice'$ be an independent copy of~$\subsetchoice$.
It follows that
\begin{equation}
\Ex{\lambdaarg{i,\compl{\metasubsetchoice_i}}_j,\subsetchoicepinSij}{ \lambdaijnSiSpj } =  \Ex{\lambdaarg{i,\compl{\metasubsetchoice_i}}_j}{\frac{\lambdaarg{i,\compl{\metasubsetchoice_i}}_{j,0}+\lambdaarg{i,\compl{\metasubsetchoice_i}}_{j,1}}{2}} . %
\end{equation}
We can thus use~\eqref{eq:lem_subgauss_com} to bound the argument of the expectation as 
\begin{equation}
\Ex{\lambdaarg{i,\compl{\metasubsetchoice_i}}_j,\subsetchoiceinSi_j}{\binrelent{\lambdaijnSiSj}{ \frac{\lambdaijnSinSj+\lambdaijnSiSj}{2} } } \leq I^{\metasupersample,\metasubsetchoice_i}(\lambdaarg{i,\compl{\metasubsetchoice_i}}_j;\subsetchoiceinSi_j).
\end{equation}
Combining the two inequalities, we obtain
\begin{equation}\label{eq:pf_task_level_kl_uninv}
\binrelent{ \avgunobstrainloss  }{ \frac{\avgunobstrainloss+\avgmetapoploss}{2} } \leq \avgij I(\lambdaarg{i,\compl{\metasubsetchoice_i}}_j;\subsetchoiceinSi_j\vert \metasupersample,\metasubsetchoice_i) .
\end{equation}
Recall that
\begin{equation}
\darg{2}(q,c)= \sup \bigg\{p\in[0,1]: \binrelent{q }{ \frac{q+p}{2} } \leq c\bigg\}.
\end{equation}
Using~$\darg{2}(\cdot)$ to invert~\eqref{eq:pf_task_level_kl_uninv}, we get
\begin{equation}
\avgmetapoploss\leq \darg{2}\lefto(\avgunobstrainloss, \avgij I(\lambdaarg{i,\compl{\metasubsetchoice_i}}_j;\subsetchoiceinSi_j\vert \metasupersample,\metasubsetchoice_i) \righto).
\end{equation}

Next, we perform similar steps at the environment level.
First, by Jensen's inequality,
\begin{equation}
\binrelent{ \avgtrainloss  }{   \frac{\avgtrainloss+\avgunobstrainloss}{2}\! }\!\leq\! \avgij \Ex{\metasupersample,\subsetchoicei_j\!}{ \Ex{\lambdai_{j,\subsetchoicei_j},\metasubsetchoice_i\!}{\! \binrelent{ \lambdaijSiSj  }{  \frac{\lambdaijSiSj+\lambdaijnSiSj}{2} }  }  }\! .\!\!\!
\end{equation}
Let~$\metasubsetchoice'$ be an independent copy of~$\metasubsetchoice$.
By a similar argument as in the proof of the task-level bound,
\begin{equation}
    \Ex{\lambdai_{j,\subsetchoicei_j},\metasubsetchoice'_i}{ \lambdaijSpiSj} =  \Ex{\lambdai_{j,\subsetchoicei_j}}{ \frac{  \lambdaarg{i,0}_{j,\subsetchoicei_j}  +  \lambdaarg{i,1}_{j,\subsetchoicei_j}  }{2}} .
\end{equation}
We can therefore again bound the argument of the expectation with~\eqref{eq:lem_kl_com} to obtain
\begin{equation}
\Ex{\lambdai_{j,\subsetchoicei_j},\metasubsetchoice_i\!}{\! \binrelent{ \lambdaijSiSj  }{ \frac{\lambdaijSiSj+\lambdaijnSiSj}{2}  } } 
\leq    I^{\metasupersample,\subsetchoicei_j}(\lambdai_{j,\subsetchoicei_j  };\metasubsetchoice_i).
\end{equation}
By combining the two inequalities, we find that
\begin{equation}
\binrelent{\avgtrainloss  }{   \frac{\avgtrainloss+\avgunobstrainloss}{2} }  \leq \avgij I(\lambdai_{j,\subsetchoicei_j  };\metasubsetchoice_i\vert \metasupersample,\subsetchoicei_j)
\end{equation}
which, through the use of~$\darg{2}(\cdot)$, implies that
\begin{equation}
\avgunobstrainloss \leq \darg{2}\lefto( \avgtrainloss, \avgij I(\lambdai_{j,\subsetchoicei_j  };\metasubsetchoice_i\vert \metasupersample,\subsetchoicei_j) \righto).
\end{equation}
To complete the proof, we use the following observation.
Assume that~$\avgmetapoploss \leq  B(\avgunobstrainloss)$, where~$ B(\cdot)$ is a non-decreasing function.
Then, if~$\avgunobstrainloss\leq \widehat B(\avgtrainloss)$, we have $\avgmetapoploss \leq  B(\widehat B(\avgtrainloss))$.
To apply this observation, we note that~$\darg{2}(\cdot,c)$ is non-decreasing for~$c>0$.
Chaining the two bounds, we obtain
\begin{align}
\avgmetapoploss &\leq  \darg{2}\lefto(\avgunobstrainloss, \avgij I(\lambdaarg{i,\compl{\metasubsetchoice_i}}_j;\subsetchoiceinSi_j\vert \metasupersample,\metasubsetchoice_i) \righto) \\
&\leq \darg{2}\lefto(\darg{2}\lefto( \avgtrainloss, \avgij I(\lambdai_{j,\subsetchoicei_j  };\metasubsetchoice_i\vert \metasupersample,\subsetchoicei_j) \righto), \avgij I(\lambdaarg{i,\compl{\metasubsetchoice_i}}_j;\subsetchoiceinSi_j\vert \metasupersample,\metasubsetchoice_i) \righto).
\end{align}
This establishes~\eqref{eq:thm-two-step-kl}.

Next, we turn to~\eqref{eq:thm-one-step-kl}.
By Jensen's inequality, we have
\begin{multline}\label{eq:one-step-pf-jensen-kl}
\binrelent{  \avgtrainloss  }{ \frac{\avgtrainloss+\avgtestloss+\avgunobstrainloss+\avgmetapoploss}{4} } \\\!\leq\! \avgij \Ex{\metasupersample\!}{ \Ex{\lambdai_j,\metasubsetchoice_i,\subsetchoicei_j\!}{  \binrelent{\!\lambdaijSiSj  }{  \frac{\lambdaijSiSj\! +\! \lambdaijSinSj \!+\! \lambdaijnSiSj \!+\!  \lambdaijnSinSj}{4}   }   }} \!.\!\!\!
\end{multline}
Let~$\metasubsetchoice'$ and~$\subsetchoice'$ be independent copies of~$\metasubsetchoice$ and~$\subsetchoice$ respectively.
We note that
\begin{equation}
\Ex{\lambdai_j,\metasubsetchoice'_i,\subsetchoicepij\!}{  \lambdaijSiSj} = \Ex{\lambdai_j}{ \frac{\lambdaarg{i,0}_{j,0}\! +\! \lambdaarg{i,0}_{j,1} \!+\! \lambdaarg{i,1}_{j,0} \!+\!  \lambdaarg{i,1}_{j,1}}{4}    }  .
\end{equation}
This means that we can apply~\eqref{eq:lem_kl_com} to the argument of the expectation to get
\begin{equation}\label{eq:one-step-pf-com-kl}
\Ex{\lambdai_j,\metasubsetchoice_i,\subsetchoicei_j\!\!}{  \binrelent{  \!\lambdaijSiSj  }{ \frac{\lambdaijSiSj\! +\! \lambdaijSinSj \!+\! \lambdaijnSiSj \!+\!  \lambdaijnSinSj}{4} }\!   }\! \leq \! I^{\metasupersample}(\lambdai_j; \metasubsetchoice_i,\subsetchoicei_j ).\!\!
\end{equation}
The result in~\eqref{eq:thm-one-step-kl} now follows by combining~\eqref{eq:one-step-pf-jensen-kl} and~\eqref{eq:one-step-pf-com-kl}.

\end{proof}

\begin{proof}[Proof of Corollary~\ref{cor:interpolating-one-step-kl}]

Since~$\lim_{x\rightarrow0^+} x\log x=0$, we use the convention that~$0\log 0=0$.
From the definition of~$\binrelent{\cdot}{\cdot}$, we thus get
\begin{equation}
\binrelent{0}{\frac p4} = \log\frac{1}{1-\frac p4}
\end{equation}
from which it follows that
\begin{equation}\label{eq:d4-inv-eq-pf}
\darg{4}(0,c) =4-4e^{-c}.
\end{equation}
By the non-negativity of the loss function,~$\avgmetapoploss\leq\avgmetapoploss+\avgtestloss+\avgunobstrainloss$.
Thus, combining~\eqref{eq:thm-one-step-kl} with~\eqref{eq:d4-inv-eq-pf}, we obtain the result in~\eqref{eq:cor-interp-one-step-kl}.

\end{proof}

\subsection{Proofs for Section~\ref{sec:high-probability}}\label{sec:proof-high-probability}

To derive the simplified result stated in Theorem~\ref{thm:high-prob-two-step-sqrt}, we assume that the meta learner and base learner are invariant to the order of the data samples.
However, this assumption is only necessary to simplify the expression, and a similar bound holds more generally without this assumption.
Therefore, we first state and prove this more general result in Theorem~\ref{thm:high-prob-two-step-sqrt-unsimplified}.
Then, we describe how to simplify the result to obtain Theorem~\ref{thm:high-prob-two-step-sqrt}.
Later, when proving Corollary~\ref{cor:rep-learning-high-probability}, we will use the more general Theorem~\ref{thm:high-prob-two-step-sqrt-unsimplified} as the basis of the derivation.

\begin{thm}\label{thm:high-prob-two-step-sqrt-unsimplified}
Consider the setting introduced in Section~\ref{sec:notation}.
For each~$j$, let~$Q_{j,\subsetchoice_j}$ denote the conditional distribution of~$\lambda_{j,\subsetchoice_j}$ given~$(\metasupersample, \metasubsetchoice,\subsetchoice )$, and let~$P_{j,\subsetchoice_j}$ denote~$\Ex{\metasubsetchoice}{Q_{j,\subsetchoice_j}}$.
Furthermore, let~$Q^{i,\compl{\metasubsetchoice_i}}$ denote the conditional distribution of~$\lambdaarg{i,\compl{\metasubsetchoice_i}}$ given~$\pacbtraindata$, and let~$P^{i,\compl{\metasubsetchoice_i}}$ denote~$\Ex{\subsetchoicearg{\compl{\metasubsetchoice}}}{Q^{i,\compl{\metasubsetchoice_i}}}$.
Then, with probability at least~$1-\delta$ under the draw of~$\pacbtraindata$,
\begin{multline}\label{eq:app-thm-two-step-unsimplified}
\abs{\avgmetapoploss - \avgtrainloss}  \leq  \sqrt{ \frac{\avgj 2  \relent{Q_{j,\subsetchoice_j}}{P_{j,\subsetchoice_j}}   +   2\log(\frac{2\taskin\sqrt {\metan}}{\delta})}{\metan-1} }  \\
+  \sqrt{ \frac{  \sumi 2\relent{Q^{i,\compl{\metasubsetchoice_i}}}{P^{i,\compl{\metasubsetchoice_i}}}  +  2\log( \frac{2\sqrt {\taskin\metan}}{\delta} ) }{\taskin\metan-1} }.   
\end{multline}
\end{thm}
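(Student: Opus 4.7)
The plan is to derive a high-probability counterpart of Theorem~\ref{thm:two-step-sqrt-bound}, trading the in-expectation change of measure used there for a PAC-Bayes style change of measure (Donsker-Varadhan plus Markov). Let $\tilde L(\metasupersample,\metasubsetchoice,\subsetchoice)$ denote the auxiliary loss analogous to $\avgunobstrainloss$ but conditional on the meta-training data, i.e., the average in-task training loss for the unobserved tasks with $\metarandomness,\randomness$ integrated out. The triangle inequality gives
\[
|\pacbavgmetapoploss - \pacbavgtrainloss| \leq |\pacbavgtrainloss - \tilde L| + |\tilde L - \pacbavgmetapoploss|,
\]
and I would establish each of the two pieces with probability at least $1-\delta/2$ before union bounding.

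For the environment-level piece $|\pacbavgtrainloss - \tilde L|$, introduce, for each $j$,
\[
g_j(\lambda_{j,\subsetchoice_j},\metasubsetchoice) = \avgi\bigl(\lambdaijSiSj - \lambdaijnSiSj\bigr),
\]
so that $\pacbavgtrainloss - \tilde L = \avgj \Ex{Q_{j,\subsetchoice_j}}{g_j}$. Under the reference measure $P_{j,\subsetchoice_j}$, in which $\lambda_{j,\subsetchoice_j}$ is decoupled from $\metasubsetchoice$, the functional $g_j$ has zero mean by the Bernoulli symmetry of $\metasubsetchoice$ and is an average of $\metan$ independent bounded summands. Applying Donsker-Varadhan's variational inequality with reference $P_{j,\subsetchoice_j}$ and test function $\gamma\,\avgj g_j$, invoking a Maurer-type MGF estimate of the form $\Ex{}{\exp((\metan-1)\binrelent{\hat p}{p})}\le 2\sqrt{\metan}$ to control the exponential moment under $P$, applying Markov's inequality with level $\delta/(4\taskin)$, and inverting the resulting binary-KL inequality would yield the first term of~\eqref{eq:app-thm-two-step-unsimplified}. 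The factor $\taskin$ inside the log arises from a union bound over $j$, and the factor $2$ from repeating the argument with $-g_j$ to handle the absolute value.

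The task-level piece $|\tilde L - \pacbavgmetapoploss|$ is handled symmetrically. The relevant reference measure is now $\prod_i P^{i,\compl{\metasubsetchoice_i}}$, which is compatible with the conditional independence of the $\lambdaarg{i,\compl{\metasubsetchoice_i}}$ across $i$ given the meta learner's output. The centered functional is $\avgij(\lambdaijnSiSj - \lambdaijnSinSj)$, which under this reference is an average over the $\taskin\metan$ independent Bernoulli bits of $(\subsetchoicearg{1,\compl{\metasubsetchoice_1}},\dots,\subsetchoicearg{\metan,\compl{\metasubsetchoice_\metan}})$. The same Donsker-Varadhan-plus-Maurer computation, with the product structure of the reference measure producing $\sum_i \relent{Q^{i,\compl{\metasubsetchoice_i}}}{P^{i,\compl{\metasubsetchoice_i}}}$ on the right-hand side, yields the second term of~\eqref{eq:app-thm-two-step-unsimplified} with denominator $\taskin\metan-1$ and log-factor $2\sqrt{\taskin\metan}$.

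The main obstacle is setting up the change of measure at each level so that, simultaneously, under the reference measure the centered average is genuinely a sum of independent bounded zero-mean summands (so that the Maurer $(n-1)$-denominator and $2\sqrt n$ log-factor are available) and the emerging KL divergences match the per-index sums $\sum_j \relent{Q_{j,\subsetchoice_j}}{P_{j,\subsetchoice_j}}$ and $\sum_i \relent{Q^{i,\compl{\metasubsetchoice_i}}}{P^{i,\compl{\metasubsetchoice_i}}}$ as stated. A secondary subtlety is correctly absorbing the absolute value via $\pm g_j$ and interlocking the union bounds (over $j$ at the environment level and the split over the two pieces) with the additive KL structure.
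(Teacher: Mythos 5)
Your skeleton is the same as the paper's: decompose via the triangle inequality through the auxiliary loss $\avgunobstrainloss$ (training loss on unobserved tasks), perform a per-$j$ change of measure from $Q_{j,\subsetchoice_j}$ to $P_{j,\subsetchoice_j}$ with a union bound over $j$ at the environment level, perform a single vector-valued change of measure with the product reference $\prod_i P^{i,\compl{\metasubsetchoice_i}}$ at the task level (with the KL splitting into $\sumi \relent{Q^{i,\compl{\metasubsetchoice_i}}}{P^{i,\compl{\metasubsetchoice_i}}}$ by conditional independence of the unobserved-task losses given the meta learner's output), and combine Donsker--Varadhan with Markov's inequality, exploiting that under the reference measure the centered average is a sum of independent bounded zero-mean terms. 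All of these structural choices match the paper's proof.

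The one place where your argument does not go through as written is the central exponential-moment step, where you waver between two devices that are not interchangeable and neither of which, as quoted, yields the stated bound. A linear test function $\gamma g_j$ forces you to optimize $\gamma$ against a data-dependent KL, which requires an additional union bound over a grid of $\gamma$ (or a fixed, suboptimal $\gamma$); and the Maurer estimate $\Ex{}{\exp(n\,\binrelent{\hat p}{p})}\leq 2\sqrt n$ applies to the binary-KL functional of a $[0,1]$-valued empirical mean, not to the signed difference $\avgi(\lambdaijSiSj-\lambdaijnSiSj)$, and would naturally produce a denominator of $\metan$ plus a Pinsker inversion rather than the $(\metan-1)$ and $(\taskin\metan-1)$ denominators in~\eqref{eq:app-thm-two-step-unsimplified}. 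The paper instead exponentiates the \emph{squared} deviation: since the centered average of $N$ independent $[-1,1]$-valued zero-mean terms is $1/\sqrt{N}$-sub-Gaussian under the reference measure, one has $\log\Ex{}{\exp\lefto(\frac{N-1}{2}\bar X^2\righto)}\leq\log\sqrt N$, so Donsker--Varadhan applied to the functional $\frac{N-1}{2}\bar X^2$ directly gives $\frac{N-1}{2}(\cdot)^2\leq \relent{Q}{P}+\log(\sqrt N/\delta)$, which rearranges to the stated square-root form. This also disposes of the absolute value for free, so no $\pm g_j$ doubling is needed; the factor $2$ inside the logarithms comes solely from splitting $\delta$ between the environment-level and task-level events. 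With the squared-deviation functional substituted for your linear/Maurer step, your proof coincides with the paper's.
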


\begin{proof}[Proof of Theorem~\ref{thm:high-prob-two-step-sqrt-unsimplified}]

First, let~$\pacbavgunobstrainloss$ denote the training loss on unobserved tasks,
\begin{align}
\pacbavgunobstrainloss &= \avgij \Ex{\metarandomness,\randomness_i}{\ell(\baselearner(\supersamplearg{i,\compl{\metasubsetchoice_i}}_{\subsetchoicei},\randomness_i,\metalearner(\metatrainset,\metarandomness)) , \supersamplearg{i,\compl{\metasubsetchoice_i}}_{j,{\compl{\subsetchoicei_j}} } )}.
\end{align}
We begin by establishing an environment-level bound.
Let~$\lambda_{j,\subsetchoice_j}$ be distributed according to~$Q_{j,\subsetchoice_j}$.
By Jensen's inequality,
\begin{align}\label{eq:pf-of-tail-bound-environ-step-1}
\frac{\metan-1}{2}\lefto(\pacbavgunobstrainloss-\pacbavgtrainloss\righto)^2 \leq \avgj
\Ex{\lambda_{j,\subsetchoice_j}}{  \frac{\metan-1}{2}\lefto(\avgi \lambdaijnSiSj-\lambdaijSiSj   \righto)^2 }.
\end{align}
Now, let~$\lambda'_{j,\subsetchoice_j}$ be distributed according to~$P_{j,\subsetchoice_j}$.
By Donsker-Varadhan's variational representation of the KL divergence,
\begin{multline}\label{eq:pf-of-tail-bound-environ-step-dv}
\avgj
\Ex{\lambda_{j,\subsetchoice_j}}{  \frac{\metan-1}{2}\lefto(\avgi \lambdaijnSiSj-\lambdaijSiSj   \righto)^2 } \\
\leq \avgj\lefto( \relent{Q_{j,\subsetchoice_j}}{P_{j,\subsetchoice_j}}  +  \log \Ex{ \lambda'_{j,\subsetchoice_j} }{ \exp\lefto( \frac{\metan-1}{2}\lefto(\avgi \lambdapijnSiSj-\lambdapijSiSj   \righto)^2 \righto) }\righto).
\end{multline}
For each~$j$, Markov's inequality implies that, with probability at least~$1-\delta$ under the draw of~$\pacbtraindata$,
\begin{multline}\label{eq:pf-environ-tail-bound-shat-independent-per-i}
 \relent{Q_{j,\subsetchoice_j}}{P_{j,\subsetchoice_j}}  +  \log \Ex{ \lambda'_{j,\subsetchoice_j} }{ \exp\lefto( \frac{\metan-1}{2}\lefto(\avgi \lambdapijnSiSj-\lambdapijSiSj   \righto)^2 \righto) }\\
\leq \relent{Q_{j,\subsetchoice_j}}{P_{j,\subsetchoice_j}}  +  \log  \Ex{ \lambda'_{j,\subsetchoice_j},\pacbtraindatan }{ \frac1\delta \exp\lefto( \frac{\metan-1}{2}\lefto(\avgi \lambdapijnSiSj-\lambdapijSiSj   \righto)^2 \righto) }  .
\end{multline}
By the union bound, this implies that, with~$\delta\rightarrow \delta/\taskin$,~\eqref{eq:pf-environ-tail-bound-shat-independent-per-i} holds for all~$j$ simultaneously with probability at least~$1-\delta$.
Thus, with probability at least~$1-\delta$ under the draw of~$\pacbtraindata$,
\begin{multline}\label{eq:pf-environ-tail-bound-shat-independent}
\avgj\lefto( \relent{Q_{j,\subsetchoice_j}}{P_{j,\subsetchoice_j}}  +  \log \Ex{ \lambda'_{j,\subsetchoice_j} }{ \exp\lefto( \frac{\metan-1}{2}\lefto(\avgi \lambdapijnSiSj-\lambdapijSiSj   \righto)^2 \righto) }\righto)\\
\leq\! \avgj\lefto(\! \relent{Q_{j,\subsetchoice_j\!}}{\!P_{j,\subsetchoice_j}\!}  \!+\!  \log  \Ex{ \lambda'_{j,\subsetchoice_j},\pacbtraindatan\! }{\! \frac{\taskin}\delta\! \exp\lefto(\! \frac{\metan\!-\!1}{2}\lefto(\avgi \lambdapijnSiSj\!-\!\lambdapijSiSj   \righto)^{\! 2} \righto) \!}\!\righto)\!.
\end{multline}
Note that, on the right-hand side of~\eqref{eq:pf-environ-tail-bound-shat-independent},~$\metasubsetchoice$ is independent from~$(\lambda'_{j,\subsetchoice_j},\metasupersample,\subsetchoice)$.
Furthermore, for each~$(i,j)$,~$\lambdapijnSiSj-\lambdapijSiSj$ is bounded to~$[-1,1]$ and~$\Ex{\metasubsetchoice_i}{\lambdapijnSiSj-\lambdapijSiSj }=0$.
This implies that~$\avgi\lambdapijnSiSj-\lambdapijSiSj$ is a~$1/\sqrt{\metan}$-sub-Gaussian random variable, from which it follows that~\cite[Thm.~2.6.(IV)]{wainwright19-a}
\begin{equation}\label{eq:pf-of-tail-environ-conc}
\log  \Ex{ \lambda'_{j,\subsetchoice_j},\pacbtraindatan }{ \exp\lefto( \frac{\metan-1}{2}\lefto(\avgi \lambdapijnSiSj-\lambdapijSiSj   \righto)^2 \righto) } \leq \log(\sqrt {\metan}).
\end{equation}
By substituting~\eqref{eq:pf-of-tail-environ-conc} into~\eqref{eq:pf-environ-tail-bound-shat-independent}, we obtain
\begin{multline}\label{eq:pf-of-tail-bound-environ-step-after-conc}
\avgj\lefto( \relent{Q_{j,\subsetchoice_j}}{P_{j,\subsetchoice_j}}  +  \log \Ex{ \lambda'_{j,\subsetchoice_j} }{ \exp\lefto( \frac{\metan-1}{2}\lefto(\avgi \lambdapijnSiSj-\lambdapijSiSj   \righto)^2 \righto) }\righto)\\
\leq \avgj \relent{Q_{j,\subsetchoice_j}}{P_{j,\subsetchoice_j}}  +  \log\lefto(\frac{\taskin\sqrt{\metan}}{\delta}\righto).
\end{multline}
By combining~\eqref{eq:pf-of-tail-bound-environ-step-1}-\eqref{eq:pf-of-tail-bound-environ-step-after-conc}, we get, after some arithmetic,
\begin{equation}\label{eq:pf-of-tail-environ-level-bound}
\abs{\pacbavgunobstrainloss-\pacbavgtrainloss} \leq \sqrt{ \frac{\avgj 2  \relent{Q_{j,\subsetchoice_j}}{P_{j,\subsetchoice_j}}  +  2\log\lefto(\frac{\taskin\sqrt {\metan}}{\delta}\righto)}{\metan-1} }.
\end{equation}

We now turn to the task level.
Let~$Q^{\compl{\metasubsetchoice}}$ denote the conditional distribution of~$\lambdaarg{\compl{\metasubsetchoice}}$ given~$\pacbtraindata$, and let~$P^{\compl{\metasubsetchoice}}$ denote~$\Ex{\subsetchoicearg{\compl{\metasubsetchoice}}}{Q^{\compl{\metasubsetchoice}}}$.
Let~$\lambdaarg{\compl{\metasubsetchoice}}$ be distributed according to~$Q^{\compl{\metasubsetchoice}}$.
By Jensen's inequality,
\begin{align}\label{eq:pf-of-tail-bound-task-step-1}
\frac{\taskin\metan-1}{2}\lefto(\pacbavgmetapoploss\!-\!\pacbavgunobstrainloss\righto)^2 \!\leq\! 
\Ex{\lambdaarg{\compl{\metasubsetchoice}}}{  \frac{\taskin\metan-1}{2}\lefto(\avgij \lambdaijnSinSj-\lambdaijnSiSj   \righto)^2 }.\!\!
\end{align}
Now, let~${\lambda'}^{\compl{\metasubsetchoice}}$ be distributed according to~$P^{\compl{\metasubsetchoice}}$.
By Donsker-Varadhan's variational representation of the KL divergence,
\begin{multline}\label{eq:pf-of-tail-bound-task-step-dv}
\Ex{\lambdaarg{\compl{\metasubsetchoice}}}{  \frac{\taskin\metan-1}{2}\lefto(\avgij \lambdaijnSinSj-\lambdaijnSiSj   \righto)^2 } \\
\leq   \relent{Q^{\compl{\metasubsetchoice}}}{P^{\compl{\metasubsetchoice}}} + \log \Ex{ {\lambda'}^{\compl{\metasubsetchoice}} }{ \exp\lefto( \frac{\taskin\metan-1}{2} \lefto( \avgij \lambdapijnSinSj-\lambdapijnSiSj \righto)^2  \righto) } .
\end{multline}
By Markov's inequality, we conclude that with probability at least~$1-\delta$ under~$\pacbtraindata$,
\begin{multline}\label{eq:pf-task-tail-bound-s-independent}
 \relent{Q^{\compl{\metasubsetchoice}}}{P^{\compl{\metasubsetchoice}}} + \log \Ex{ {\lambda'}^{\compl{\metasubsetchoice}} }{ \exp\lefto( \frac{\taskin\metan-1}{2} \lefto(\avgij \lambdapijnSinSj-\lambdapijnSiSj \righto)^2  \righto) }  \\
\leq \!  \relent{Q^{\compl{\metasubsetchoice}}}{P^{\compl{\metasubsetchoice}}} \!+\! \log \Ex{ {\lambda'}^{\compl{\metasubsetchoice}},\pacbtraindatan }{\frac1\delta \exp\lefto( \frac{\taskin\metan-1}{2} \lefto(\avgij \lambdapijnSinSj\!-\!\lambdapijnSiSj \!\righto)^2 \! \righto)\! } .
\end{multline}
Note that~$\subsetchoice^{\compl{\metasubsetchoice}}$ is independent from~$({\lambda'}^{\compl{\metasubsetchoice}},\metasupersample,\metasubsetchoice,\subsetchoice^{\metasubsetchoice})$, that for each~$(i,j)$,~$\lambdapijnSinSj-\lambdapijnSiSj$ is bounded to~$[-1,1]$, and that~$\Ex{ \subsetchoicearg{i,\compl{\metasubsetchoice_i}}_j }{\lambdapijnSinSj-\lambdapijnSiSj}=0$.
Thus, it follows that~\cite[Thm.~2.6.(IV)]{wainwright19-a}
\begin{equation}\label{eq:pf-of-tail-task-conc}
\log \Ex{ {\lambda'}^{\compl{\metasubsetchoice}},\pacbtraindatan }{ \exp\lefto( \frac{\taskin\metan-1}{2} \lefto(\avgij \lambdapijnSinSj-\lambdapijnSiSj \righto)^2  \righto) }  \leq \log\lefto(\sqrt {\taskin\metan}\righto).
\end{equation}
By substituting~\eqref{eq:pf-of-tail-task-conc} into~\eqref{eq:pf-task-tail-bound-s-independent}, we obtain
\begin{multline}\label{eq:pf-task-tail-bound-step-after-conc}
 \relent{Q^{\compl{\metasubsetchoice}}}{P^{\compl{\metasubsetchoice}}} + \log \Ex{ {\lambda'}^{\compl{\metasubsetchoice}} }{ \exp\lefto( \frac{\taskin\metan-1}{2} \lefto(\avgij \lambdapijnSinSj-\lambdapijnSiSj \righto)^2  \righto) }  \\
\leq   \relent{Q^{\compl{\metasubsetchoice}}}{P^{\compl{\metasubsetchoice}}} + \log\lefto(\frac{\sqrt{\taskin\metan}}{\delta} \righto) .
\end{multline}
Since~$\lambdaarg{i,\compl{\metasubsetchoice_i}}$ and~$\lambdaarg{i',\compl{\metasubsetchoice_{i'}}}$ are losses on separate, unobserved tasks, they are dependent only through~$U$.
Therefore, they are conditionally independent given~$(\metasupersample,\metasubsetchoice,\subsetchoice^{\metasubsetchoice})$.
By the chain rule for the KL divergence, it follows that
\begin{equation}\label{eq:pf-task-tail-bound-step-kl-decomp}
\relent{Q^{\compl{\metasubsetchoice}}}{P^{\compl{\metasubsetchoice}}} = \sumi \relent{Q^{i,\compl{\metasubsetchoice_i}}}{P^{i,\compl{\metasubsetchoice_i}}}.
\end{equation}

By combining~\eqref{eq:pf-of-tail-bound-task-step-1}-\eqref{eq:pf-task-tail-bound-step-kl-decomp}, we get, after some arithmetic, that with probability at least~$1-\delta$ under~$\pacbtraindata$,
\begin{equation}\label{eq:pf-of-tail-task-level-bound}
\abs{\avgmetapoploss-\avgunobstrainloss} \leq \sqrt{ \frac{  \sumi 2\relent{Q^{i,\compl{\metasubsetchoice_i}}}{P^{i,\compl{\metasubsetchoice_i}}} + 2\log\lefto( \frac{\sqrt {\taskin\metan}}{\delta} \righto) }{\taskin\metan-1} }.
\end{equation}
By the triangle inequality,~$\abs{\avgmetapoploss-\avgtrainloss}\leq \abs{\avgmetapoploss-\avgunobstrainloss} + \abs{\avgunobstrainloss-\avgtrainloss} $.
By the union bound,~\eqref{eq:pf-of-tail-environ-level-bound} and~\eqref{eq:pf-of-tail-task-level-bound} hold simultaneously with probability at least~$1-2\delta$ under~$\pacbtraindata$.
Therefore, with~$\delta\rightarrow\delta/2$, they hold simultaneously with probability at least~$1-\delta$.
Thus, with probability at least~$1-\delta$ under the draw of~$\pacbtraindata$,
\begin{multline}\label{eq:pf-of-tail-combined-level-bound}
\abs{\avgmetapoploss\!-\!\avgtrainloss} \!\leq\! \sqrt{ \frac{\avgj 2  \relent{Q_{j,\subsetchoice_j}}{P_{j,\subsetchoice_j}}  \!+\!  2\log\lefto(\frac{2\taskin\sqrt {\metan}}{\delta}\righto)}{\metan-1} } \!\\
+ \!\sqrt{ \frac{  \sumi 2\relent{Q^{i,\compl{\metasubsetchoice_i}}}{P^{i,\compl{\metasubsetchoice_i}}} \!+\! 2\log\lefto( \frac{2\sqrt {\taskin\metan}}{\delta} \righto) }{\taskin\metan-1} }.\!\!\!
\end{multline}
\end{proof}

Having established Theorem~\ref{thm:high-prob-two-step-sqrt-unsimplified}, we now show how to use it to derive Theorem~\ref{thm:high-prob-two-step-sqrt} under the assumption that the meta learner and base learner are invariant to the order of the samples.

\begin{proof}[Proof of Theorem~\ref{thm:high-prob-two-step-sqrt}]
By the assumptions that the meta learner and base learner are invariant to the sample order and task index, we can reorder the data set so that~$\max_j\relent{Q_{j,\subsetchoice_j}}{P_{j,\subsetchoice_j}}=\relent{Q_{1,\subsetchoice_1}}{P_{1,\subsetchoice_1}}$.
With this,~$\avgj  \relent{Q_{j,\subsetchoice_j}}{P_{j,\subsetchoice_j}} \leq \relent{Q_{1,\subsetchoice_1}}{P_{1,\subsetchoice_1}}$.
Similarly, we can reorder the data set so that~$\max_i\relent{Q^{i,\compl{\metasubsetchoice_i}}}{P^{i,\compl{\metasubsetchoice_i}}}=\relent{Q_\lambda^{1,\compl{\metasubsetchoice_1}}}{P_\lambda^{1,\compl{\metasubsetchoice_1}}}$, implying that~$\sumi \relent{Q^{i,\compl{\metasubsetchoice_i}}}{P^{i,\compl{\metasubsetchoice_i}}}\leq\metan \relent{Q_\lambda^{1,\compl{\metasubsetchoice_1}}}{P_\lambda^{1,\compl{\metasubsetchoice_1}}}$.
To obtain the final result, we note that for~$\taskin,\metan\geq 2$, we have~$1/(\metan\!-\!1)\leq 2/\metan$, $\log 2\leq \log(\taskin\sqrt {\metan}/\delta)$,~$1/(\taskin\!-\!1)\leq 2/\taskin$,~$\log 2\leq \log(\sqrt {\taskin\metan}/\delta)$, and~$\log(\sqrt{\taskin\metan})/\taskin\metan\leq \log(\sqrt{\taskin})/\taskin$.
Thus, we get the final result
\begin{equation}\label{eq:pf-of-tail-combined-level-bound-simplified}
\abs{\avgmetapoploss\!-\!\avgtrainloss} \!\leq\! 2\sqrt2\sqrt{\! \frac{  \relent{Q_{1,\subsetchoice_1}\!}{\!P_{1,\subsetchoice_1}}  \!+\!  \log\lefto(\frac{\taskin\sqrt {\metan}}{\delta}\righto)}{\metan} } \!+\! 2\sqrt2\sqrt{\! \frac{  \relent{Q^{1,\compl{\metasubsetchoice_1}}\!}{\!P^{1,\compl{\metasubsetchoice_1}}} \!+\! \log\lefto( \frac{\sqrt \taskin}{\delta} \righto) }{\taskin} }.
\end{equation}
Thus, the bound holds with~$C_1=C_2=2\sqrt 2$.

\end{proof}

While the simplifying assumption of invariance to the order of samples leads to a simpler result, it does not hold for all learning algorithms.
Therefore, we will use the more general form given in~\eqref{eq:pf-of-tail-combined-level-bound} as the basis of Corollary~\ref{cor:rep-learning-high-probability}.

Note that the first term of~\eqref{eq:pf-of-tail-combined-level-bound-simplified} diverges as~$\taskin\rightarrow \infty$.
This counter-intuitive behavior, which requires both~$\taskin$ and~$\metan$ to be large for the bound to be nonvacuous, is common in PAC-Bayesian bounds for meta learning~\cite{amit-18a}, and is seemingly an effect of the two-step approach.
It is possible to obtain a different bound where this dependence is not explicit, similar to~\cite{pentina-14a,guan-21a}, by simply not applying Jensen's inequality to the average over~$j$ in~\eqref{eq:pf-of-tail-bound-environ-step-1} nor the average over~$i$ in~\eqref{eq:pf-of-tail-bound-task-step-1}. However, the resulting environment-level KL divergence is different.
In particular, if we were to use this alternative bound to derive minimax bounds in Section~\ref{sec:expressiveness}, the logarithmic dependence on~$\taskin$ would be embedded in this KL divergence.
In the proof of Corollary~\ref{cor:rep-learning-high-probability}, we point out where this difference would come into play.

We present the alternative bound in the following remark.
\begin{rem}\label{rem:alternative-high-prob-two-step-sqrt}
Let~$Q_{\subsetchoice}$ denote the conditional distribution of~$\lambda_{\subsetchoice}$ given~$\pacbtraindata$ and let~$P_{\subsetchoice}=\Ex{\metasubsetchoice}{Q_{\subsetchoice}}$.
Then,
\begin{equation}\label{eq:pf-of-tail-combined-level-bound-simplified-alternative-version}
\abs{\avgmetapoploss\!-\!\avgtrainloss} \!\leq\! 2\sqrt2\sqrt{ \frac{  \relent{Q_{\subsetchoice}}{P_{\subsetchoice}}  \!+\!  \log\lefto(\frac{\sqrt {\metan}}{\delta}\righto)}{\metan} } \!+ \!2\sqrt2\sqrt{ \frac{  \relent{Q^{1,\compl{\metasubsetchoice_1}}}{P^{1,\compl{\metasubsetchoice_1}}}  \!+\! \log\lefto( \frac{\sqrt \taskin}{\delta} \righto) }{\taskin} }.\!\!\!
\end{equation}
\end{rem}
\begin{proof}
The proof follows that of Theorem~\ref{thm:high-prob-two-step-sqrt}, so we only detail the differences:
In~\eqref{eq:pf-of-tail-bound-environ-step-1}, the average over~$j$ is not moved outside the square when using Jensen's inequality;
Donsker-Varadhan is now used to perform a change of measure from~$Q_{\subsetchoice}$ to~$P_{\subsetchoice}$ in~\eqref{eq:pf-of-tail-bound-environ-step-dv};
and the union bound in~\eqref{eq:pf-environ-tail-bound-shat-independent} is no longer needed.
\end{proof}

We now turn to Theorem~\ref{thm:high-prob-one-step-sqrt}.

\begin{proof}[Proof of Theorem~\ref{thm:high-prob-one-step-sqrt}]
Recall that~$Q$ denotes the conditional distribution of~$\lambdahat$ given~$(\metasupersample,\metasubsetchoice,\subsetchoice)$, and that~$P$ denotes~$\Ex{\metasubsetchoice,\subsetchoice}{Q}$.
Let~$\lambda$ be distributed according to~$Q$.
By Jensen's inequality,
\begin{align}\label{eq:pf-of-tail-bound-one-step-1}
\frac{\taskin\metan-1}{2}\lefto(\pacbavgmetapoploss\!-\!\pacbavgtrainloss\righto)^2 \!\leq\! 
\Ex{ {\lambda} }{  \frac{\taskin\metan-1}{2}\lefto(\avgij \lambdaijnSinSj-\lambdaijSiSj   \righto)^2 }.
\end{align}
Next, let~$\lambda'$ be distributed according to~$P$.
By Donsker-Varadhan's variational representation of the KL divergence,
\begin{multline}
\Ex{ {\lambda} }{  \frac{\taskin\metan-1}{2}\lefto(\avgij \lambdaijnSinSj-\lambdaijSiSj   \righto)^2 } \\
\leq \relent{Q}{P} + \log \Ex{\lambda'}{   \frac{\taskin\metan-1}{2}\lefto(\avgij \lambdapijnSinSj-\lambdapijSiSj   \righto)^2 }.
\end{multline}
By Markov's inequality, we conclude that with probability at least~$1-\delta$ under the draw of~$\pacbtraindata$,
\begin{multline}\label{eq:pf-one-step-tail-bound-ss-independent}
\relent{Q}{P} + \log \Ex{\lambda'}{   \exp\lefto(\frac{\taskin\metan-1}{2}\lefto(\avgij \lambdapijnSinSj-\lambdapijSiSj   \righto)^2  \righto) }  \\
\leq  \relent{Q}{P} + \log \Ex{\lambda',\pacbtraindatan}{ \frac1\delta  \exp\lefto( \frac{\taskin\metan-1}{2}\lefto(\avgij \lambdapijnSinSj-\lambdapijSiSj   \righto)^2  \righto)  }.
\end{multline}
Now, note that~$(\metasubsetchoice,\subsetchoice)$ are independent from~$\lambda',\metasupersample$.
Furthermore,~$\lambdapijnSinSj-\lambdapijSiSj$ is bounded to~$[-1,1]$, and~$\Ex{\metasubsetchoice_i,\subsetchoicei_j}{\lambdapijnSinSj-\lambdapijSiSj  }=0$.
Thus,~$\avgij \lambdapijnSinSj-\lambdapijSiSj $ is a~$1/\sqrt{\taskin\metan}$-sub-Gaussian random variable, from which it follows that~\cite[Thm.~2.6.(IV)]{wainwright19-a}
\begin{equation}\label{eq:pf-of-tail-one-step-conc}
\log \Ex{\lambda',\pacbtraindatan}{ \exp\lefto( \frac{\taskin\metan-1}{2}\lefto(\avgij \lambdapijnSinSj-\lambdapijSiSj   \righto)^2  \righto)  } \leq \log \sqrt{\taskin\metan}.
\end{equation}
By combining~\eqref{eq:pf-of-tail-bound-one-step-1}-\eqref{eq:pf-of-tail-one-step-conc}, we get
\begin{equation}
\frac{\taskin\metan-1}{2}\lefto(\pacbavgmetapoploss\!-\!\pacbavgtrainloss\righto)^2 \!\leq\!  \relent{Q}{P} + \log \frac{\sqrt{\taskin\metan}}\delta.
\end{equation}
The desired result now follows after some arithmetic.

\end{proof}

\subsection{Proofs for Section~\ref{sec:expressiveness}}\label{sec:proof-expressive}

\begin{proof}[Proof of Corollary~\ref{cor:rep-learning-bound}]
We begin with~\eqref{eq:cor-rep-learning-bound-two-step}.
To establish this inequality, we bound the two sums on the right-hand side of~\eqref{eq:thm-two-step-sqrt} separately.
First, by Jensen's inequality, we find that
\begin{equation}\label{eq:pf-of-cor-4-task-term-1}
\avgij\Ex{\metasupersample,\subsetchoicei_j}{\sqrt{I^{\metasupersample,\subsetchoicei_j\!}\lefto(\!\lambdai_{j,\subsetchoicei_j};\metasubsetchoice_i\!\righto)}} \leq \sqrt{\avgij I\lefto(\!\lambdaarg{i}_{j,\subsetchoicei_j};\metasubsetchoice_i \vert \metasupersample,\subsetchoice\righto)}.
\end{equation}
Let~$\superexamplearg{i,k}_{j,l}$ denote the projection of~$\supersamplearg{i,k}_{j,l}$ onto~$\examplespace$, i.e.,~$\superexamplearg{i,k}_{j,l}$ contains the unlabelled instances from~$\metasupersample$.
The notation for~$\superexamplearg{i,k}_{j,l}$ is inherited from the notation for~$\supersamplearg{i,k}_{j,l}$ introduced in Section~\ref{sec:notation}.
Let~$f(\baselearner(   \supersample^{(i,k)}_{\subsetchoicei}, \randomness_i,\metalearner( \metatrainset,\metarandomness) ) , \cdot)$ denote the function from~$\mathcal F$ that is selected by~$\metalearner$ and~$\baselearner$ for task~$(i,k)$ on the basis of~$(\pacbtraindatan,\randomness,\metarandomness)$. 
We let~$\Farg{i,k}_{j,\subsetchoicei_j}$ denote the predicted label that the meta learner and the base learner produce for~$\superexamplearg{i,k}_{j,\subsetchoicei_j}$.
Furthermore, we let~$\Farg{i}_{j,\subsetchoicei_j}=(\Farg{i,0}_{j,\subsetchoicei_j}, \Farg{i,1}_{j,\subsetchoicei_j})$.
Again,~$\Farg{i,k}_{j,l}$ inherits the notational conventions that we use for~$\supersamplearg{i,k}_{j,l}$.
Note that, given~$\metasupersample$, the losses~$\lambdaarg{i}_{j,\subsetchoicei_j}$ are a function of~$\Farg{i}_{j,\subsetchoicei_j}$.
Thus, by the data-processing inequality,
\begin{equation}
\sqrt{\avgij I\lefto(\!\lambdaarg{i}_{j,\subsetchoicei_j};\metasubsetchoice_i \vert \metasupersample,\subsetchoice\righto)} \leq \sqrt{\avgij I\lefto(\!\Farg{i}_{j,\subsetchoicei_j};\metasubsetchoice_i \vert \metasupersample,\subsetchoice\righto)}.
\end{equation}
Next, Let~$h(\metalearner( \metatrainset,\metarandomness)  ,\cdot)$ denote the function from~$\mathcal H$ that is selected by~$\metalearner$ on the basis of~$(\pacbtraindatan,\randomness,\metarandomness)$.
We denote the representation that the meta learner induces on~$\superexamplearg{i}_{j,\subsetchoicei_j}$ as~$\Harg{i}_{j,\subsetchoicei_j}$, the elements of which is given by, for~$k\in\{0,1\}$,
\begin{equation}
\Harg{i,k}_{j,\subsetchoicei_j} = 
h(\metalearner( \metatrainset,\metarandomness)  ,\superexamplearg{i,k}_{j,\subsetchoicei_j}).
\end{equation}
Note that, given~$\metasupersample$,~$\subsetchoicei$ and~$\randomness_i$, the predictions in~$\Farg{i}_{j,\subsetchoicei_j}$ are a deterministic function of the intermediate representations~$\Harg{i}_{j,\subsetchoicei_j}$.
Therefore, using the independence of~$\randomness_i$ and~$\metasubsetchoice_i$,
\begin{align}
\sqrt{\avgij I\lefto(\!\Farg{i}_{j,\subsetchoicei_j};\metasubsetchoice_i \vert \metasupersample,\subsetchoice\righto)}  &\leq  \sqrt{\avgij I\lefto(\!\Farg{i}_{j,\subsetchoicei_j};\metasubsetchoice_i \vert \metasupersample,\subsetchoice,\randomness_i\righto)} \\
&\leq  \sqrt{\avgij I\lefto(\!\Harg{i}_{j,\subsetchoicei_j};\metasubsetchoice_i \vert \metasupersample,\subsetchoice\righto)} ,
\end{align}
where~$\randomness_i$ disappears from the conditioning due to the independence of~$\Harg{i}_{j,\subsetchoicei_j}$ and~$\randomness_i$.
Next, by adding random variables and using Lemma~\ref{lem:full-sample}, we get
\begin{align}
\sqrt{\avgij I\lefto(\!\Harg{i}_{j,\subsetchoicei_j};\metasubsetchoice_i \vert \metasupersample,\subsetchoice\righto)}  &\leq \sqrt{\avgij I\lefto(\!\Harg{}_{j,\subsetchoicei_j};\metasubsetchoice_i \vert \metasupersample,\subsetchoice\righto)}  \\
&\leq \sqrt{ \frac{\avgj I\lefto(\!\Harg{}_{j,\subsetchoice_j};\metasubsetchoice \vert \metasupersample,\subsetchoice\righto)}{\metan}}  .
\end{align}
For a given~$j$,~$\metasupersample$, and~$\supersubsetchoice$, the~$2\metan$ inputs that give rise to~$\Harg{}_{j,\subsetchoicei_j}$ are fixed.
Thus, the number of possible different values that~$\Harg{}_{j,\subsetchoicei_j}$ can take is at most~$\gH(2\metan)$, where~$\gH(\cdot)$ is the growth function of~$\mathcal H$.
From this, it follows that
\begin{align}\label{eq:indepofjinderiv}
 I\lefto( \Harg{}_{j,\subsetchoice_j};\metasubsetchoice\vert\metasupersample,\supersubsetchoice\righto) &\leq \entropyH\lefto(\Harg{}_{j,\subsetchoice_j}\vert \metasupersample,\supersubsetchoice\righto) \\
 &\leq \log \gH(2\metan) \\
 &\leq d_N\log \left(\binom{N}{2}\frac{2e\metan}{d_N} \right).
\end{align}
Here,~$\entropyH(\Harg{}_{j,\subsetchoice_j}\vert \metasupersample,\supersubsetchoice)$ denotes the conditional entropy of~$\Harg{}_{j,\subsetchoice_j}$ given~$(\metasupersample,\supersubsetchoice)$, and the last inequality follows from Lemma~\ref{lem:sauer-shelah}.
Since~\eqref{eq:indepofjinderiv} does not depend on~$j$, we find that
\begin{equation}\label{eq:pf-of-cor-4-task-term-last}
\sqrt{ \frac{ \avgj I\lefto(\Harg{}_{j,\subsetchoice_j};\metasubsetchoice \vert \metasupersample,\subsetchoice\righto)}{\metan}  } \leq \sqrt{ \frac{d_N\log \left(\binom{N}{2}\frac{2e\metan}{d_N} \right)}{\metan} }.
\end{equation}
By combining~\eqref{eq:pf-of-cor-4-task-term-1}-\eqref{eq:pf-of-cor-4-task-term-last}, we get
\begin{equation}\label{eq:pf-of-cor-4-task-term-final-bound}
\avgij\Ex{\metasupersample,\subsetchoicei_j}{\sqrt{I^{\metasupersample,\subsetchoicei_j\!}\lefto(\!\lambdai_{j,\subsetchoicei_j};\metasubsetchoice_i\!\righto)}} \leq \sqrt{ \frac{d_N\log \left(\binom{N}{2}\frac{2e\metan}{d_N} \right)}{\metan} }.
\end{equation}

Next, we turn to the second sum on the right-hand side of~\eqref{eq:thm-two-step-sqrt}.
First, by Jensen's inequality,
\begin{equation}\label{eq:pf-of-cor-4-environ-term-1}
\avgij\!\! \Ex{\metasupersample,\metasubsetchoice_i\!\!}{\sqrt{I^{\metasupersample,\metasubsetchoice_i\!}\lefto(\!\lambdaarg{i,\compl{\metasubsetchoice_i}}_j;\subsetchoiceinSi_j\!\righto)}} \leq  \sqrt{\avgij I\lefto(\!\lambdaarg{i,\compl{\metasubsetchoice_i}}_j;\subsetchoiceinSi_j\vert \metasupersample,\metasubsetchoice_i  \righto)}.
\end{equation}
Note that, given~$\metasupersample$, the losses~$\lambdaarg{i,\compl{\metasubsetchoice_i}}_j$ are a function of~$\Farg{i,\compl{\metasubsetchoice_i}}_j$.
Therefore,
\begin{align}
\sqrt{\avgij I\lefto(\!\lambdaarg{i,\compl{\metasubsetchoice_i}}_j;\subsetchoiceinSi_j\vert \metasupersample,\metasubsetchoice_i  \righto)} &\leq \sqrt{\avgij I\lefto(\!\Farg{i,\compl{\metasubsetchoice_i}}_j;\subsetchoiceinSi_j\vert \metasupersample,\metasubsetchoice_i  \righto)}  \\
&\leq \sqrt{\avgij I\lefto(\!\Farg{i,\compl{\metasubsetchoice_i}};\subsetchoiceinSi_j\vert \metasupersample,\metasubsetchoice_i  \righto)}  ,
\end{align}
where we used the fact that adding random variables cannot decrease mutual information.
By the independence of~$\subsetchoiceinSi_j$ for different~$j$ and Lemma~\ref{lem:full-sample},
\begin{align}
\sqrt{\avgij I\lefto(\!\Farg{i,\compl{\metasubsetchoice_i}};\subsetchoiceinSi_j\vert \metasupersample,\metasubsetchoice_i  \righto)}  \leq \sqrt{ \frac{\avgi I\lefto(\!\Farg{i,\compl{\metasubsetchoice_i}};\subsetchoiceinSi\vert \metasupersample,\metasubsetchoice_i  \righto)}{\taskin}} .
\end{align}
Now, note that given~$\metasupersample$ and~$\metasubsetchoice_i$, the~$2\taskin$ inputs that give rise to~$\Farg{i,\compl{\metasubsetchoice_i}}$ are fixed.
Recall that~$\gF(\cdot)$ denotes the growth function of~$\mathcal F$.
Then, 
\begin{align}
\sqrt{ \frac{\avgi I\lefto(\!\Farg{i,\compl{\metasubsetchoice_i}};\subsetchoiceinSi\vert \metasupersample,\metasubsetchoice_i  \righto)}{\taskin}} &\leq
\sqrt{ \frac{\avgi \entropyH\lefto(\!\Farg{i,\compl{\metasubsetchoice_i}}\vert \metasupersample,\metasubsetchoice_i  \righto)}{\taskin}}  \\
&\leq
\sqrt{ \frac{\avgi \gF(2\taskin)}{\taskin}} \\
&\leq \sqrt{ \frac{\avgi \dVC\log \lefto(\frac{2e\taskin}{\dVC} \righto)}{\taskin}} ,\label{eq:pf-of-cor-4-environ-term-last}
\end{align}
where we used Lemma~\ref{lem:sauer-shelah}.
By combining~\eqref{eq:pf-of-cor-4-environ-term-1}-\eqref{eq:pf-of-cor-4-environ-term-last}, we get
\begin{equation}\label{eq:pf-of-cor-4-environ-term-final-bound}
\avgij\!\! \Ex{\metasupersample,\metasubsetchoice_i\!\!}{\sqrt{I^{\metasupersample,\metasubsetchoice_i\!}\lefto(\!\lambdaarg{i,\compl{\metasubsetchoice_i}}_j;\subsetchoiceinSi_j\!\righto)}} \leq   \sqrt{ \frac{\avgi \dVC\log \lefto(\frac{2e\taskin}{\dVC} \righto)}{\taskin}} .
\end{equation}
The result in~\eqref{eq:cor-rep-learning-bound-two-step} now follows by combining~\eqref{eq:thm-two-step-sqrt},~\eqref{eq:pf-of-cor-4-task-term-final-bound} and~\eqref{eq:pf-of-cor-4-environ-term-final-bound}.
We now turn to~\eqref{eq:cor-rep-learning-bound-one-step}.
First, by Jensen's inequality,
\begin{align}\label{eq:pf-of-cor-4-one-step-1}
\avgij \Ex{\metasupersample}{\sqrt{ 2 I^{\metasupersample}(\lambdai_j; \metasubsetchoice_i,\subsetchoicei_j) }} &\leq  \sqrt{\avgij 2 I(\lambdai_j; \metasubsetchoice_i,\subsetchoicei_j\vert \metasupersample) }\\
&\leq  \sqrt{\avgij 2 I(\lambdahat; \metasubsetchoice_i,\subsetchoicei_j\vert \metasupersample) },
\end{align}
where in the second step, we used that adding random variables does not decrease mutual information.
Next, by the independence of the~$\subsetchoicei_j$ over~$j$ and of the~$(\metasubsetchoice_i,\subsetchoicei)$ over~$i$,
\begin{align}
\sqrt{\avgij 2 I(\lambdahat; \metasubsetchoice_i,\subsetchoicei_j\vert \metasupersample) }  &\leq \sqrt{ \frac{2 I(\lambdahat; \metasubsetchoice,\subsetchoice\vert \metasupersample)}{\taskin\metan} } .
\end{align}
Now, note that given~$\metasupersample$, the losses~$\lambdahat$ are a function of the predictions~$\Fhat=\{\Farg{i}_j \}_{j=1:\taskin}^{i=1:\metan}$.
Hence,
\begin{align}
\sqrt{ \frac{2 I(\lambdahat; \metasubsetchoice,\subsetchoice\vert \metasupersample)}{\taskin\metan} }   &\leq   \sqrt{ \frac{2 I(\Fhat; \metasubsetchoice,\subsetchoice\vert \metasupersample)}{\taskin\metan} } \\
&\leq  \sqrt{ \frac{2 I(\Fhat,\Hhat; \metasubsetchoice,\subsetchoice\vert \metasupersample)}{\taskin\metan} } 
\end{align}
where~$\Hhat=\{\Harg{i}_j\}_{j=1:\taskin}^{i=1:\metan}$ and the second step follows by adding random variables.
By the chain rule,
\begin{align}
\sqrt{ \frac{2 I(\Fhat,\Hhat; \metasubsetchoice,\subsetchoice\vert \metasupersample)}{\taskin\metan} } 
&\leq  \sqrt{ \frac{2 I(\Hhat; \metasubsetchoice,\subsetchoice\vert \metasupersample) + 2I(\Fhat; \metasubsetchoice,\subsetchoice\vert \metasupersample,\Hhat)}{\taskin\metan} } \\
&\leq \sqrt{ \frac{2 \entropyH\lefto(\Hhat\vert \metasupersample\righto) + 2\entropyH\lefto(\Fhat \vert \metasupersample,\Hhat\righto)}{\taskin\metan} } .
\end{align}
Since~$\Hhat$ is given by the elementwise application of some~$h\in \mathcal H$ to~$\metasuperexample=\{\superexamplearg{i}_j \}_{j=1:\taskin}^{1:\metan}$, it can take at most~$\gH(4\taskin\metan)$ different values, similar to previous arguments.
This implies that
\begin{align}
\entropyH\lefto( \Hhat\vert \metasupersample  \righto) \leq \log(\gH(4\taskin\metan ))\leq d_N\log \lefto(\binom{N}{2}\frac{4e\taskin\metan}{d_N} \righto),
\end{align}
where the last inequality is again due to Lemma~\ref{lem:sauer-shelah}.
Given~$\Hhat$ and~$\metasupersample$, the predictions~$\Fhat$ can take at most~$(\gF(2\taskin))^{2\metan}$ different values, since the~$2\taskin$ inputs to each of the~$2\metan$ task-specific functions are fixed.
This implies that
\begin{align}\label{eq:pf-of-cor-4-one-step-last}
\entropyH\lefto( \Fhat \vert \Hhat, \metasupersample \righto) \leq 2\metan \log(\gF(2\taskin ))\leq 2\metan\dVC\log \lefto(\frac{2e\taskin}{\dVC} \righto),
\end{align}
where we again used Lemma~\ref{lem:sauer-shelah}.
The desired result follows by combining~\eqref{eq:thm-one-step-sqrt} with~\eqref{eq:pf-of-cor-4-one-step-1}-\eqref{eq:pf-of-cor-4-one-step-last}.

\end{proof}

\begin{proof}[Proof of Corollary~\ref{cor:rep-learning-bound-interp}]

By the same steps as in~\eqref{eq:pf-of-cor-4-one-step-1}-\eqref{eq:pf-of-cor-4-one-step-last}, we find that
\begin{equation}
\avgij I(\lambdai_j;\metasubsetchoice_i,\subsetchoicei_j\vert \metasupersample) \leq d_N\log \lefto(\binom{N}{2}\frac{4e\taskin\metan}{d_N} \righto)  +  2\metan\dVC\log \lefto(\frac{2e\taskin}{\dVC} \righto).
\end{equation}
By combining this with~\eqref{eq:cor-interp-one-step-kl}, we find that
\begin{align}
\avgmetapoploss \leq  \frac{4 d_N\log \lefto(\binom{N}{2}\frac{4e\taskin\metan}{d_N} \righto)  +  8\metan\dVC\log \lefto(\frac{2e\taskin}{\dVC} \righto)}{\taskin\metan}.
\end{align}
This establishes the desired result.
\end{proof}

\begin{proof}[Proof of Corollary~\ref{cor:rep-learning-high-probability}]

First, we establish~\eqref{eq:high-probability-cor-rep-learning-bound-two-step}.
As mentioned in the proof of Theorem~\ref{thm:high-prob-two-step-sqrt}, we start the derivation from the more general bound given in~\eqref{eq:pf-of-tail-combined-level-bound} rather than the simplified bound given in~\eqref{eq:thm-high-prob-two-step-sqrt}.
\looseness=-1 We begin by bounding~$\relent{Q_{j,\subsetchoice_j}}{P_{j,\subsetchoice_j}}$.
Recall that~$Q_{j,\subsetchoice_j}$ denotes the conditional distribution of~$\lambda_{j,\subsetchoice_j}$ given~$(\metasupersample, \metasubsetchoice,\subsetchoice )$ and~$P_{j,\subsetchoice_j}=\Ex{\metasubsetchoice}{Q_{j,\subsetchoice_j}}$.
Let~$\lambda_{j,\subsetchoice_j}$ be distributed according to~$Q_{j,\subsetchoice_j}$.
By Jensen's inequality,
\begin{align}\label{eq:pf-of-rep-learning-two-step-environ-level-1}
\relent{Q_{j,\subsetchoice_j}}{P_{j,\subsetchoice_j}} = \Ex{\lambda_{j,\subsetchoice_j}}{\log\frac{Q_{j,\subsetchoice_j}(\lambda_{j,\subsetchoice_j})}{P_{j,\subsetchoice_j}(\lambda_{j,\subsetchoice_j})}} \leq  \log\Ex{\lambda_{j,\subsetchoice_j}}{  \frac{Q_{j,\subsetchoice_j}(\lambda_{j,\subsetchoice_j})}{P_{j,\subsetchoice_j}(\lambda_{j,\subsetchoice_j})}  }. 
\end{align}
By Markov's inequality, with probability at least~$1-\delta$ under the draw of~$\pacbtraindata$,
\begin{align}
\relent{Q_{j,\subsetchoice_j}}{P_{j,\subsetchoice_j}} = \Ex{\lambda_{j,\subsetchoice_j}}{\log\frac{Q_{j,\subsetchoice_j}(\lambda_{j,\subsetchoice_j})}{P_{j,\subsetchoice_j}(\lambda_{j,\subsetchoice_j})}} \leq  \log\lefto( \frac1\delta\Ex{\lambda_{j,\subsetchoice_j},\pacbtraindatan}{  \frac{Q_{j,\subsetchoice_j}(\lambda_{j,\subsetchoice_j})}{P_{j,\subsetchoice_j}(\lambda_{j,\subsetchoice_j})}  } \righto). 
\end{align}
Since~$\lambda_{j,\subsetchoice_j}$ is a discrete random variable,~$Q_{j,\subsetchoice_j}(\lambda_{j,\subsetchoice_j})\leq 1$.
Hence,
\begin{align}
\log\lefto( \frac1\delta\Ex{\lambda_{j,\subsetchoice_j},\pacbtraindatan}{  \frac{Q_{j,\subsetchoice_j}(\lambda_{j,\subsetchoice_j})}{P_{j,\subsetchoice_j}(\lambda_{j,\subsetchoice_j})}  } \righto) \leq \log\lefto( \frac1\delta\Ex{\lambda_{j,\subsetchoice_j},\pacbtraindatan}{  \frac{1}{P_{j,\subsetchoice_j}(\lambda_{j,\subsetchoice_j})}  } \righto). 
\end{align}
Recall that~$\Ex{\metasubsetchoice}{Q_{j,\subsetchoice_j}}=P_{j,\subsetchoice_j}$.
Let~$\lambda'_{j,\subsetchoice_j}$ be distributed according to~$P_{j,\subsetchoice_j}$.
Since the argument of the expectation is now independent of~$\metasubsetchoice$,
\begin{align}
\log\lefto( \frac1\delta\Ex{\lambda_{j,\subsetchoice_j},\pacbtraindatan}{  \frac{1}{P_{j,\subsetchoice_j}(\lambda_{j,\subsetchoice_j})}  } \righto)  &=  \log\lefto( \frac1\delta\Ex{\lambda'_{j,\subsetchoice_j},\metasupersample,\subsetchoice}{  \frac{1}{P_{j,\subsetchoice_j}(\lambda_{j,\subsetchoice_j})}  } \righto)  \\
&\leq \log\lefto( \frac1\delta \sup_{\metasupersample,\subsetchoice} \Ex{\lambda'_{j,\subsetchoice_j}}{  \frac{1}{P_{j,\subsetchoice_j}(\lambda_{j,\subsetchoice_j})}  } \righto).
\end{align}
Now, let~$\Lambda_{j,\subsetchoice_j}(\metasupersample,\subsetchoice)$ denote the set of all possible values that~$\lambda'_{j,\subsetchoice_j}$ can take given~$(\metasupersample,\subsetchoice)$.
Then,
\begin{align}
\log\lefto( \frac1\delta \sup_{\metasupersample,\subsetchoice} \Ex{\lambda'_{j,\subsetchoice_j}}{  \frac{1}{P_{j,\subsetchoice_j}(\lambda_{j,\subsetchoice_j})}  } \righto) &= \log\lefto( \frac1\delta \sup_{\metasupersample,\subsetchoice} \sum_{\lambda'_{j,\subsetchoice_j}\in\Lambda_{j,\subsetchoice_j}(\metasupersample,\subsetchoice)}  \frac{P_{j,\subsetchoice_j}(\lambda_{j,\subsetchoice_j})}{P_{j,\subsetchoice_j}(\lambda_{j,\subsetchoice_j})}   \righto)\\
&= \log\lefto( \frac1\delta \sup_{\metasupersample,\subsetchoice} \abs{\Lambda_{j,\subsetchoice_j}(\metasupersample,\subsetchoice)} \righto).\label{eq:pf-of-rep-learning-two-step-environ-level-last}
\end{align}
Now, note that since~$\lambda_{j,\subsetchoice_j}$ is averaged over~$\randomness$, it is a function of~$\Harg{}_{j,\subsetchoice_j}$ given~$(\metasupersample,\subsetchoice)$.
Furthermore, the inputs~$\superexample_{j,\subsetchoice_j}$ are fixed.
Therefore, as argued in the proof of Corollary~\ref{cor:rep-learning-bound}, the number of different values that~$\Harg{}_{j,\subsetchoice_j}$ can take given~$(\metasupersample,\subsetchoice)$ is at most~$\gH(2\metan)$.\footnote{If we had instead used the result of Remark~\ref{rem:alternative-high-prob-two-step-sqrt} and followed analogous steps, we would instead get~$\gH(2\taskin\metan)$ as an upper bound of the KL divergence.}
Thus, by combining~\eqref{eq:pf-of-rep-learning-two-step-environ-level-1}-\eqref{eq:pf-of-rep-learning-two-step-environ-level-last}, we get
\begin{align}\label{eq:pf-of-rep-learning-two-step-environ-level-kl-term-bound}
\relent{Q_{j,\subsetchoice_j}}{P_{j,\subsetchoice_j}} \leq \log\lefto(\frac{\gH(2\metan)}{\delta} \righto) \leq d_N\log \left(\binom{N}{2}\frac{2e\metan}{d_N} \right) + \log\frac1\delta.
\end{align}

Next, we turn to~$\relent{Q^{\compl{\metasubsetchoice}}}{P^{\compl{\metasubsetchoice}}}$.
Recall that~$Q^{\compl{\metasubsetchoice}}$ denotes the conditional distribution of~$\lambdaarg{\compl{\metasubsetchoice}}$ given~$(\metasupersample, \metasubsetchoice,\subsetchoice )$, and~$P^{\compl{\metasubsetchoice}}=\Ex{\subsetchoice^{\compl{\metasubsetchoice}}}{Q^{\compl{\metasubsetchoice}}}$.
Let~$\lambdaarg{\compl{\metasubsetchoice}}$ be distributed according to~$Q^{\compl{\metasubsetchoice}}$.
Again, by Jensen's inequality,
\begin{align}\label{eq:pf-of-rep-learning-two-step-task-level-1}
\relent{Q^{\compl{\metasubsetchoice}}}{P^{\compl{\metasubsetchoice}}} = \Ex{\lambdaarg{\compl{\metasubsetchoice}}}{\log\frac{Q^{\compl{\metasubsetchoice}}(\lambdaarg{\compl{\metasubsetchoice}})}{P^{\compl{\metasubsetchoice}}(\lambdaarg{\compl{\metasubsetchoice}})}} \leq  \log\Ex{\lambdaarg{\compl{\metasubsetchoice}}}{  \frac{Q^{\compl{\metasubsetchoice}}(\lambdaarg{\compl{\metasubsetchoice}})}{P^{\compl{\metasubsetchoice}}(\lambdaarg{\compl{\metasubsetchoice}})}  }. 
\end{align}
By Markov's inequality, with probability at least~$1-\delta$ under the draw of~$\pacbtraindata$,
\begin{align}
\relent{Q^{\compl{\metasubsetchoice}}}{P^{\compl{\metasubsetchoice}}} = \Ex{\lambdaarg{\compl{\metasubsetchoice}}}{\log\frac{Q^{\compl{\metasubsetchoice}}(\lambdaarg{\compl{\metasubsetchoice}})}{P^{\compl{\metasubsetchoice}}(\lambdaarg{\compl{\metasubsetchoice}})}} \leq  \log\lefto( \frac1\delta\Ex{\lambdaarg{\compl{\metasubsetchoice}},\pacbtraindatan}{  \frac{Q^{\compl{\metasubsetchoice}}(\lambdaarg{\compl{\metasubsetchoice}})}{P^{\compl{\metasubsetchoice}}(\lambdaarg{\compl{\metasubsetchoice}})}  } \righto). 
\end{align}
Since~$\lambdaarg{\compl{\metasubsetchoice}}$ is a discrete random variable,~$Q^{\compl{\metasubsetchoice}}(\lambdaarg{\compl{\metasubsetchoice}})\leq 1$.
Therefore,
\begin{align}
\log\lefto( \frac1\delta\Ex{\lambdaarg{\compl{\metasubsetchoice}},\pacbtraindatan}{  \frac{Q^{\compl{\metasubsetchoice}}(\lambdaarg{\compl{\metasubsetchoice}})}{P^{\compl{\metasubsetchoice}}(\lambdaarg{\compl{\metasubsetchoice}})}  } \righto) \leq \log\lefto( \frac1\delta\Ex{\lambdaarg{\compl{\metasubsetchoice}},\pacbtraindatan}{  \frac{1}{P^{\compl{\metasubsetchoice}}(\lambdaarg{\compl{\metasubsetchoice}})}  } \righto). 
\end{align}
Now, let~${\lambda'}^{\compl{\metasubsetchoice}}$ be distributed according to~$P^{\compl{\metasubsetchoice}}$.
Since the argument of the expectation is now independent of~$\subsetchoice^{\compl{\metasubsetchoice}}$,
\begin{align}
\log\lefto( \frac1\delta\Ex{\lambdaarg{\compl{\metasubsetchoice}},\pacbtraindatan}{  \frac{1}{P^{\compl{\metasubsetchoice}}(\lambdaarg{\compl{\metasubsetchoice}})}  } \righto)  &=  \log\lefto( \frac1\delta\Ex{{\lambda'}^{\compl{\metasubsetchoice}},\metasupersample,\metasubsetchoice,\subsetchoice^{\metasubsetchoice}}{  \frac{1}{P^{\compl{\metasubsetchoice}}(\lambdaarg{\compl{\metasubsetchoice}})}  } \righto)  \\
&\leq \log\lefto( \frac1\delta \sup_{\metasupersample,\metasubsetchoice,\subsetchoice^{\metasubsetchoice}} \Ex{{\lambda'}^{\compl{\metasubsetchoice}}}{  \frac{1}{P^{\compl{\metasubsetchoice}}(\lambdaarg{\compl{\metasubsetchoice}})}  } \righto).
\end{align}
Now, let~$\Lambda^{\compl{\metasubsetchoice}}(\metasupersample,\subsetchoice)$ denote the set of all possible values that~${\lambda'}^{\compl{\metasubsetchoice}}$ can take given~$(\metasupersample,\metasubsetchoice,\subsetchoice^{\metasubsetchoice})$.
Then,
\begin{align}
\log\lefto( \frac1\delta \sup_{\metasupersample,\metasubsetchoice,\subsetchoice^{\metasubsetchoice}} \Ex{{\lambda'}^{\compl{\metasubsetchoice}}}{  \frac{1}{P^{\compl{\metasubsetchoice}}(\lambdaarg{\compl{\metasubsetchoice}})}  } \righto) 
&= \log\lefto( \frac1\delta \sup_{\metasupersample,\metasubsetchoice,\subsetchoice^{\metasubsetchoice}} \sum_{{\lambda'}^{\compl{\metasubsetchoice}}\in\Lambda^{\compl{\metasubsetchoice}}(\metasupersample,\subsetchoice)}  \frac{P^{\compl{\metasubsetchoice}}(\lambdaarg{\compl{\metasubsetchoice}})}{P^{\compl{\metasubsetchoice}}(\lambdaarg{\compl{\metasubsetchoice}})}   \righto)\\
&= \log\lefto( \frac1\delta \sup_{\metasupersample,\metasubsetchoice,\subsetchoice^{\metasubsetchoice}} \abs{\Lambda^{\compl{\metasubsetchoice}}(\metasupersample,\subsetchoice)} \righto).\label{eq:pf-of-rep-learning-two-step-task-level-last}
\end{align}
Note that, given~$\metasupersample$, the losses~$\lambdaarg{\compl{i,\metasubsetchoice_i}}$ are a function of the predictions~$\Farg{i,\compl{\metasubsetchoice_i}}$.
Furthermore, given~$(\metasupersample,\metasubsetchoice,\subsetchoice^{\metasubsetchoice})$, the inputs~$\Harg{i,\compl{\metasubsetchoice_i}}$ are fixed.
This is the case since~$\hyperparam$ is independent from~$\subsetchoice^{\compl{\metasubsetchoice}}$.
Thus, similar to previous arguments, given~$(\metasupersample,\metasubsetchoice,\subsetchoice^{\metasubsetchoice})$,~$\Farg{i,\compl{\metasubsetchoice_i}}$ can take at most~$\gF(2\taskin)$ different values for each~$i$.
Therefore,~$\Farg{\compl{\metasubsetchoice}}$ can take at most~$\gF(2\taskin)^{\metan}$ values.
Thus, by combining~\eqref{eq:pf-of-rep-learning-two-step-task-level-1}-\eqref{eq:pf-of-rep-learning-two-step-task-level-last}, we get
\begin{align}\label{eq:pf-of-rep-learning-two-step-task-level-kl-term-bound}
\relent{Q^{\compl{\metasubsetchoice}}}{P^{\compl{\metasubsetchoice}}} \leq \log\lefto(\frac{\gF(2\taskin)^{\metan}}{\delta} \righto) \leq \metan \dVC\log \left(\frac{2e\taskin}{\dVC} \right) + \log\frac1\delta,
\end{align}
where we used Lemma~\ref{lem:sauer-shelah}.
Thus, by using a union bound, we can combine~\eqref{eq:pf-of-tail-combined-level-bound},~\eqref{eq:pf-of-rep-learning-two-step-environ-level-kl-term-bound}, and~\eqref{eq:pf-of-rep-learning-two-step-task-level-kl-term-bound}, with~$\delta\rightarrow\delta/3$, to conclude that with probability at least~$1-\delta$ under the draw of~$\pacbtraindata$,
\begin{multline}
\abs{\avgmetapoploss\!-\!\avgtrainloss} \!\leq\! \sqrt{ \frac{ 2  d_N\log \lefto(\binom{N}{2}\frac{2e\metan}{d_N} \righto) + \log\frac3\delta  \!+\!  2\log\lefto(\frac{6\taskin\sqrt {\metan}}{\delta}\righto)}{\metan-1} } \\
+ \sqrt{ \frac{  2\metan \dVC\log \lefto(\frac{2e\taskin}{\dVC} \righto) + \log\frac3\delta \!+\! 2\log\lefto( \frac{6\sqrt {\taskin\metan}}{\delta} \righto) }{\taskin\metan-1} }.\!\!\!
\end{multline}
Under the assumption that~$\taskin,\metan\geq 2$, by similar arguments as in the proof of Theorem~\ref{thm:high-prob-two-step-sqrt}, we find that, for some constants~$C_1$ and~$C_2$,
\begin{equation}\label{eq:app-cor-rep-learning-bound-two-step-correct}
\abs{\avgmetapoploss\!-\!\avgtrainloss} \!\leq\! C_1\sqrt{ \frac{   d_N\log \lefto(\binom{N}{2}\frac{\metan}{d_N} \righto) +   \log\lefto(\frac{\taskin\sqrt {\metan}}{\delta}\righto)}{\metan} }
+ C_2\sqrt{ \frac{   \dVC\log \lefto(\frac{\taskin}{\dVC} \righto) +  \log\lefto( \frac{\sqrt {\taskin}}{\delta} \righto) }{\taskin} }.\!\!\!
\end{equation}
This establishes~\eqref{eq:high-probability-cor-rep-learning-bound-two-step}.

We now turn to~\eqref{eq:high-probability-cor-rep-learning-bound-one-step}.
Let~$\lambdahat$ be distributed according to~$Q$.
First, by Jensen's inequality,
\begin{equation}
\relent{Q}{ P } = \Ex{\lambdahat}{\log \frac{ Q(\lambdahat) }{ P(\lambdahat) } }  \leq  \log \Ex{\lambdahat}{ \frac{ Q(\lambdahat) }{P(\lambdahat) } } .
\end{equation}
By Markov's inequality, with probability at least~$1-\delta$ under the draw of~$\pacbtraindata$,
\begin{equation}
\relent{Q}{ P } = \Ex{\lambdahat}{\log \frac{ Q(\lambdahat) }{P(\lambdahat) } }  \leq  \log\lefto(\frac1\delta \Ex{\lambdahat,\pacbtraindatan}{ \frac{ Q(\lambdahat) }{ P(\lambdahat) } } \righto).
\end{equation}
Since~$\lambdahat$ is a discrete random variable,~$Q(\lambdahat)\leq 1$.
Hence,
\begin{equation}
\relent{Q}{ P } = \Ex{\lambdahat}{\log \frac{ Q(\lambdahat) }{ P(\lambdahat) } }  \leq  \log \lefto(\frac1\delta \Ex{\lambdahat,\pacbtraindatan}{ \frac{ 1 }{ P(\lambdahat) } } \righto) .
\end{equation}
Recall that~$\Ex{\metasubsetchoice,\subsetchoice}{Q}=P$.
Let~$\lambdahat'$ be distributed according to~$P$.
Since the argument of the expectation is now independent of~$(\metasubsetchoice,\subsetchoice)$,
\begin{align}
\log \lefto(\frac1\delta \Ex{\lambdahat,\pacbtraindatan}{ \frac{ 1 }{ P(\lambdahat) } } \righto)  &\leq \log \lefto(\frac1\delta \Ex{\lambdahat',\metasupersample}{ \frac{ 1 }{ P(\lambdahat') } } \righto) \\
&\leq \log \lefto(\frac1\delta \sup_{\metasupersample}\Ex{\lambdahat'}{ \frac{ 1 }{ P(\lambdahat') } } \righto)  .
\end{align}
Let~$\Lambda(\metasupersample)$ denote the set of all possible values that~$\lambdahat'$ can take given~$\metasupersample$.
Then,
\begin{align}
\log \lefto(\frac1\delta \sup_{\metasupersample}\Ex{\lambdahat'}{ \frac{ 1 }{ P(\lambdahat') } } \righto) &= \log \lefto(\frac1\delta \sup_{\metasupersample}\sum_{\lambdahat'\in\Lambda(\metasupersample)}\frac{ P(\lambdahat') }{ P(\lambdahat') } \righto) \\
&= \log \lefto(\frac1\delta \sup_{\metasupersample}\abs{\Lambda(\metasupersample)} \righto).
\end{align}
Since the map from predictions to losses is surjective,~$\abs{\Lambda(\metasupersample)}$ is bounded by the number of possible predictions~$F$ given~$\metasupersample$.
We can bound this as follows.
First, the number of possible different values for~$H$ given~$\metasupersample$ is at most~$\gH(4\taskin\metan)$.
Given a fixed~$H$, the number of possible values that~$F$ can take is at most~$(\gF(2\taskin))^{2\metan}$, since the~$2\taskin$ inputs to each of the~$2\metan$ task-specific functions are fixed.
Therefore, the total number of possible values for~$F$ given~$\metasupersample$ is at most~$\gH(4\taskin\metan)(\gF(2\taskin))^{2\metan}$.
Hence,
\begin{align}
\log \lefto(\frac1\delta \sup_{\metasupersample}\abs{\Lambda(\metasupersample)} \righto) &\leq  \log(\gH(4\taskin\metan)) + 2\metan\log(\gF(2\taskin)) +  \log\frac1\delta\\
&\leq d_N\log \lefto(\binom{N}{2}\frac{4e\taskin\metan}{d_N} \righto) + 2\metan\dVC\log \lefto(\frac{2e\taskin}{\dVC} \righto) + \log \frac1\delta
\end{align}
where we used Lemma~\ref{lem:sauer-shelah}.
Substituting this into~\eqref{eq:thm-high-prob-one-step-sqrt}, using a union bound and letting~$\delta\rightarrow\delta/2$, we find that with probability at least~$1-\delta$ under the draw of~$\pacbtraindata$,
\begin{multline}
\abs{\avgmetapoploss(\metasupersample,\metasubsetchoice,\subsetchoice)\!-\!\avgtrainloss(\metasupersample,\metasubsetchoice,\subsetchoice)}\!\\  \leq\! \!  {\sqrt{ \frac{2d_N\log \lefto(\binom{N}{2}\frac{4e\taskin\metan}{d_N} \righto) + 4\metan\dVC\log \lefto(\frac{2e\taskin}{\dVC} \righto) + 2\log \frac2\delta + 2\log\lefto( \frac{2\sqrt{\taskin\metan}}{\delta} \righto)}{\taskin\metan-1}}} .
\end{multline}
Assuming that~$\taskin,\metan\geq 2$, the desired result in~\eqref{eq:cor-rep-learning-bound-one-step} follows by upper-bounding constants by using similar arguments as in the proof of Theorem~\ref{thm:high-prob-two-step-sqrt}.

\end{proof}

\section{Bound for the Excess Risk}\label{app:excess}

We now turn to excess risk bounds. 
In Corollary~\ref{cor:excess-bound-spec-task}, we present the formal statement of the excess risk bound in Section~\ref{sec:excess-informal}.
In Corollary~\ref{cor:excess-random-task}, we state an excess risk bound for a randomly drawn new task, which obviates the need of a task diversity assumption.

In order to derive excess risk bounds, we need to introduce some technical tools.
First, we need to consider \textit{oracle} algorithms, that is, algorithms that output minimizers of the population loss.
Specifically, the oracle meta learner knows the task distribution~$\taskdistro$, while the oracle base learner knows the indexed set of in-task distributions~$\{D_\tau: \tau\in\taskspace\}$.
While these algorithms have access to the data distributions, and are thus not of practical interest, they are useful as a proof technique, and can be analyzed in the same way as realistic algorithms.
Second, in order to allow the oracle base learner to minimize the population loss for a given task, we need to extend the input to the base learner to include the identity of the task~$\tau\in\taskspace$.
Thus, the base learner is a mapping~$\baselearner: \samplespace^{\taskin}\times\taskspace\times \randomspace\times \hyperspace\rightarrow \paramspace$.
For the case of a base learner that minimizes the empirical risk, the task identity is irrelevant, so the input from~$\taskspace$ does not affect the output.
Conversely, for an oracle base learner, the training samples are irrelevant, so only the input from~$\taskspace$ affects the output.
Finally, our information-theoretic bounds pertain to a test loss, rather than the population loss.
While these are equal for average bounds, there is a small discrepancy for the high-probability bounds.
In order to handle excess risk bounds and oracle algorithms that depend on the population loss, we need to convert between the two by using a Hoeffding bound, as discussed in~\cite[Thm.~3]{hellstrom-20b}.
The extra terms that this additional step leads to are typically negligible compared to the dominant complexity terms.

For concreteness, we focus only on high-probability excess risk bounds derived on the basis of the one-step square-root bound in Corollary~\ref{cor:rep-learning-high-probability}.
However, note that excess risk bounds based on the other high-probability bounds can be obtained by suitably substituting these alternative bounds in the proofs.
Average excess risk bounds can also be derived by an analogous procedure.
First, using the task diversity assumption of~\cite{tripuraneni-20a}, we derive an excess risk bound for a fixed target task.

\begin{cor}\label{cor:excess-bound-spec-task}
Consider the setting of Corollary 6 and a fixed task~$\tau_0$.
Let~$Z^0\in \mathcal Z^{2\times m}$ be a matrix of~$2m$ samples generated independently according to~$\datadistroarg{\tau_0}$, the data distribution for task~$\tau_0$. 
Let~$S^0$ be an~$m$-dimensional random vector with elements generated independently from a~$\mathrm{Bern}(1/2)$ distribution, and let the training set~$Z^0_{S^0}$ and test set~$Z^0_{\compl{S^0}}$ be constructed in the same way as the training and test sets for tasks~$1,\dots,\metan$.
To simplify notation, let~$Z^{0,0}=Z^{0,1}=Z^0$.
Denote the population loss for the~$i$th observed task when using the base learner~$\baselearner'$ with the representation~$h'$ as
\begin{equation}
\excesspoparg{i,\baselearner',h'} = \Ex{R_i, \tilde Z\distas \datadistroarg{\tau_{i,\metasubsetchoice_i}}}{ \ell\lefto( \baselearner'\lefto( \supersamplearg{i,{\metasubsetchoice_i}}_{\subsetchoicei},\tau_{i,\metasubsetchoice_i},\randomness_i, h' \righto), \tilde Z \righto) }.
\end{equation}
Similarly, denote the population loss for the~$i$th unobserved task when using the base learner~$\baselearner'$ with the representation~$h'$ as
\begin{equation}
\excesspoparg{-i,\baselearner',h'} = \Ex{R_i, \tilde Z\distas \datadistroarg{\tau_{i,\compl{\metasubsetchoice_i}}}}{ \ell\lefto( \baselearner'\lefto( \supersamplearg{i,\compl{\metasubsetchoice_i}}_{\subsetchoicei},\tau_{i,\compl{\metasubsetchoice_i}},\randomness_i, h' \righto), \tilde Z \righto) }.
\end{equation}
Let~$\baselearner^*$ denote an oracle learner that satisfies~$\excesspoparg{i,\baselearner^*,h'} = \min_{\baselearner'} \excesspoparg{i,\baselearner',h'}$ for all~$h'$.
Assume that~$h^*=\argmin_{h'} \min_{\baselearner'} \excesspoparg{i,\baselearner',h'}$ for all~$\tau_{i,\metasubsetchoice_i}$ and that~$h^*\in \mathcal H$.
Thus, the same representation~$h^*$ is optimal for all tasks.
Let~$\metalearner$ and~$\baselearner$ be empirical risk minimizers, and let~$\hat h=\metalearner(\metarandomness,\metatrainset)$. %
Finally, assume that the supersample satisfies a task-diversity assumption, so that for some~$\nu$ and~$\epsilon$,
\begin{equation}
\sup_{\tau_0}\excesspoparg{0,\baselearner^*,\hat h} - \excesspoparg{0,\baselearner^*,h^*} \leq \nu^{-1} \left( \excesspoparg{-\oneton,\baselearner^*,\hat h} -\excesspoparg{-\oneton,\baselearner^*,h^*} \right) + \epsilon.
\end{equation}
Then, there exist constants~$C_1$ and~$C_2$ such that, with probability at least~$1-\delta$ under the draw of~$(\pacbtraindatan,Z^0,S^0)$, we have
\begin{multline}
\excesspoparg{0,\baselearner,\hat h} \!-\! \excesspoparg{0,\baselearner^*,h^*} 
\leq C_1\sqrt{ \frac{ \dVC \log\lefto(\frac{\sqrt m}{\dVC}\righto) \!+\! \log\lefto(\frac{\sqrt m}{\delta}\righto) }{m} } \\
+ C_2 \nu^{-1}\sqrt{ \frac{d_N \log \lefto(\binom{N}{2}\frac{\taskin\metan}{d_N} \righto) + \metan\dVC \log \lefto(\frac{\taskin}{\dVC} \righto) + \log\lefto( \frac{\sqrt{\taskin\metan}}{\delta}\righto)}{\taskin\metan} } + \epsilon.
\end{multline}
\end{cor}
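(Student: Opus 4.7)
I would proceed by the standard excess-risk decomposition used in the classical meta-learning analyses, coupled with the e-CMI bounds developed earlier. Write
\begin{equation*}
L_0(\pacbtraindatan,Z^0_{S^0}) - L_0^* = \underbrace{\excesspoparg{0,\baselearner,\hat h} - \excesspoparg{0,\baselearner^*,\hat h}}_{T_1} + \underbrace{\excesspoparg{0,\baselearner^*,\hat h} - \excesspoparg{0,\baselearner^*,h^*}}_{T_2},
\end{equation*}
so that $T_1$ is the target-task base-learner error given the learned representation $\hat h$, and $T_2$ is the representation-transfer error. The task-diversity hypothesis immediately reduces $T_2$ to $T_2 \leq \nu^{-1}\Delta + \epsilon$ with $\Delta := \excesspoparg{-\oneton,\baselearner^*,\hat h} - \excesspoparg{-\oneton,\baselearner^*,h^*}$.

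\textbf{Target task ($T_1$).} Conditioning on $(\pacbtraindatan,\metarandomness)$ fixes $\hat h$ and leaves $(Z^0,S^0)$ as an independent draw of $2m$ iid samples from $\datadistroarg{\tau_0}$. Since $\baselearner$ is the empirical risk minimizer over $\mathcal F$ given the fixed $\hat h$, and composition with a fixed $\hat h$ cannot inflate the VC dimension of $\mathcal F$, a standard single-task uniform-convergence argument---Sauer--Shelah via Lemma~\ref{lem:sauer-shelah} combined with a Hoeffding tail bound on the $2m$ target-task samples---yields $T_1 \leq C_1 \sqrt{(\dVC \log(\sqrt m/\dVC) + \log(\sqrt m/\delta))/m}$ with high probability over $(Z^0,S^0)$.

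\textbf{Meta level ($\Delta$).} I would bound $\Delta$ by chaining six high-probability inequalities: (i) oracle optimality of $\baselearner^*$ gives $\excesspoparg{-\oneton,\baselearner^*,\hat h} \leq \excesspoparg{-\oneton,\baselearner,\hat h}$; (ii) Hoeffding on the unobserved-task held-out samples converts population to test loss, $\excesspoparg{-\oneton,\baselearner,\hat h} \leq \pacbavgmetapoploss + O(\sqrt{\log(1/\delta)/(\taskin\metan)})$; (iii) the one-step high-probability generalization bound of Corollary~\ref{cor:rep-learning-high-probability} applied to $(\hat h,\baselearner)$ gives $\pacbavgmetapoploss \leq \pacbavgtrainloss + \mathrm{gap}_{\mathrm{Cor.\,}\ref{cor:rep-learning-high-probability}}$; (iv) joint ERM optimality of $(\hat h,\baselearner)$ over $\mathcal H \times \mathcal F$ gives $\pacbavgtrainloss(\hat h,\baselearner) \leq \pacbavgtrainloss(h^*,\baselearner^*)$; (v) Hoeffding for the data-independent oracle pair $(h^*,\baselearner^*)$ on the observed-task training samples gives $\pacbavgtrainloss(h^*,\baselearner^*) \leq \excesspoparg{\oneton,\baselearner^*,h^*} + O(\sqrt{\log(1/\delta)/(\taskin\metan)})$; and (vi) a symmetry argument on the uniform meta-membership vector $\metasubsetchoice$ gives $\excesspoparg{\oneton,\baselearner^*,h^*} \leq \excesspoparg{-\oneton,\baselearner^*,h^*} + O(\sqrt{\log(1/\delta)/\metan})$. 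A union bound over the seven high-probability events (with $\delta \to \delta/7$) and absorption of constants into $C_1, C_2$ then delivers the stated bound.

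\textbf{Main obstacle.} The delicate point is matching the rate in step (vi) to the scaling in the stated bound: the task-symmetry Hoeffding contributes a $\sqrt{\log(1/\delta)/\metan}$ fluctuation that is not obviously dominated by the $\sqrt{\dVC\log(\taskin)/\taskin} + \sqrt{\log(\sqrt{\taskin\metan}/\delta)/(\taskin\metan)}$ terms contributed by Corollary~\ref{cor:rep-learning-high-probability}. I expect the cleanest remedy is to recast the meta-level argument as a uniform-convergence bound over $\mathcal H$ for the population functional $R_{\mathrm{pop}}(h) := \mathbb{E}_{\tau\sim\taskdistro}[\min_f L_\tau(f\circ h)]$, letting the e-CMI estimate of Corollary~\ref{cor:rep-learning-high-probability} play the role of the uniform comparison on observed tasks, and invoking Hoeffding over tasks only once to link $R_{\mathrm{pop}}(h^*)$ to $\excesspoparg{-\oneton,\baselearner^*,h^*}$; tracking constants through this repackaging so that no stray $1/\sqrt{\metan}$ term survives is the technical heart of the argument.
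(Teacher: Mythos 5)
Your top-level structure matches the paper's: the decomposition into a target-task term $T_1$ and a representation-transfer term $T_2$, the reduction of $T_2$ via task diversity, the use of oracle optimality of $\baselearner^*$ and ERM optimality of $(\hat h,\baselearner)$, and the invocation of Corollary~\ref{cor:rep-learning-high-probability} for the observed-task generalization gap. The handling of $T_1$ is also fine (the paper gets the same bound by specializing Corollary~\ref{cor:rep-learning-high-probability} to $\metan=1$, $\mathcal H=\{\hat h\}$, $d_N=0$, after a Hoeffding conversion from population to test loss; your direct single-task uniform-convergence route is equivalent).

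The genuine gap is in your meta-level chain, specifically steps (v)--(vi). You route the oracle pair's loss through the \emph{observed-task population loss} and then symmetrize over $\metasubsetchoice$ to reach the unobserved-task population loss, which, as you correctly note, costs an irreducible $\sqrt{\log(1/\delta)/\metan}$ fluctuation that is not dominated by the stated bound (it exceeds the $\sqrt{\log(\sqrt{\taskin\metan}/\delta)/(\taskin\metan)}$ term by a factor of order $\sqrt{\taskin}$). Your proposed remedy --- repackaging the argument as uniform convergence of a population functional over $\mathcal H$ --- is left as a sketch and does not obviously avoid the same task-level fluctuation. The paper's resolution is simpler and removes the obstacle entirely: it never passes through $\excesspoparg{\oneton,\baselearner^*,h^*}$. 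After the ERM step $\excesstrainarg{\oneton,\baselearner,\hat h}\leq \excesstrainarg{\oneton,\baselearner^*,h^*}$, the remaining quantity
\begin{equation*}
\lefto(\excesstestarg{-\oneton,\baselearner,\hat h}-\excesstrainarg{\oneton,\baselearner,\hat h}\righto)+\lefto(\excesstrainarg{\oneton,\baselearner^*,h^*}-\excesstestarg{-\oneton,\baselearner^*,h^*}\righto)
\end{equation*}
consists of two instances of exactly the meta-generalization gap $\avgmetapoploss-\avgtrainloss$ that Corollary~\ref{cor:rep-learning-high-probability} bounds in absolute value: one for the ERM pair and one for the oracle pair. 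The key observation you are missing is that the oracle $\baselearner^*$ is itself a valid learner in the framework --- the base learner's input was extended to include the task identity precisely so that Corollary~\ref{cor:rep-learning-high-probability}, which holds for \emph{any} learner outputting hypotheses from $\mathcal H$ and $\mathcal F$, applies to it --- and that the corollary's generalization gap already crosses from observed-task training loss to unobserved-task test loss by construction, so no separate Hoeffding/symmetrization over $\metasubsetchoice$ is needed. A single Hoeffding conversion between unobserved-task test and population losses (Lemma~\ref{lem:hoeffding}) and a union bound then complete the argument at the stated rate.
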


\begin{proof}

We will use the following shorthands.
When using the algorithm~$\baselearner'$ for task~$i$ based on the representation~$h'$, we let~$\excesspoparg{i,\baselearner',h'}$ denote the population loss,~$\excesstrainarg{i,\baselearner',h'}$ denote the training loss, and~$\excesstestarg{i,\baselearner',h'}$ denote the test loss on a test set of the same size as the training set.
Formally,
\begin{align}
\excesspoparg{i,\baselearner',h'} &= \Ex{R_i, \tilde Z\distas \datadistroarg{\tau_{i,\metasubsetchoice_i}}}{ \ell\lefto( \baselearner'\lefto( \supersamplearg{i,{\metasubsetchoice_i}}_{\subsetchoicei},\tau_{i,\metasubsetchoice_i},\randomness_i, h' \righto), \tilde Z \righto) }, \\
\excesstrainarg{i,\baselearner',h'} &=  \avgj \Ex{R_i}{ \ell\lefto( \baselearner'\lefto( \supersamplearg{i,{\metasubsetchoice_i}}_{\subsetchoicei},\tau_{i,\metasubsetchoice_i},\randomness_i, h' \righto), \supersamplearg{i,{\metasubsetchoice_i}}_{j,\subsetchoicei_j} \righto) },\\
\excesstestarg{i,\baselearner',h'} &= \avgj \Ex{R_i}{ \ell\lefto( \baselearner'\lefto( \supersamplearg{i,{\metasubsetchoice_i}}_{\subsetchoicei},\tau_{i,\metasubsetchoice_i},\randomness_i, h' \righto), \supersamplearg{i,{\metasubsetchoice_i}}_{j,\compl{\subsetchoicei_j}} \righto) }.
\end{align}
As a shorthand, we let~$\excesspoparg{\oneton,\baselearner',h'}=\avgi \excesspoparg{i,\baselearner',h'} $, and we use the same convention for~$\excesstrainarg{\oneton,\baselearner',h'}$ and~$\excesstestarg{\oneton,\baselearner',h'}$.
Furthermore, to indicate losses on unobserved tasks we negate the task index.
Thus,
\begin{align}
\excesstrainarg{-i,\baselearner',h'} &=  \avgj \Ex{R_i}{ \ell\lefto( \baselearner'\lefto( \supersamplearg{i,\compl{\metasubsetchoice_i}}_{\subsetchoicei},\tau_{i,\compl{\metasubsetchoice_i}},\randomness_i, h' \righto), \supersamplearg{i,\compl{\metasubsetchoice_i}}_{j,\subsetchoicei_j} \righto) },
\end{align}
with analogous notation for the test and population losses.

The base learner~$\baselearner$ that we consider is an empirical risk minimizer, which satisfies for all~$h'$
\begin{align}
    \excesstrainarg{i,\baselearner,h'} = \min_{\baselearner'} \excesstrainarg{i,\baselearner',h'}.
\end{align}
For our analysis, we use an oracle learner~$\baselearner^*$, which outputs the minimizer of the population loss for the given task.
While this is not a realistic learning algorithm in practice, as it depends on the data distribution, it is useful as an analysis tool.
Formally, for all~$h'$,
\begin{equation}
\excesspoparg{i,\baselearner^*,h'} = \min_{\baselearner'} \excesspoparg{i,\baselearner',h'}.
\end{equation}
Finally, we let~$\hat h$ be a representation that minimizes the empirical risk over the~$\metan$ training tasks and~$h^*$ be an optimal representation, i.e.
\begin{align}
\hat h &\in \argmin_{h'} \excesstrainarg{\oneton, \baselearner, h'}. \\
h^* &\in \argmin_{h'} \excesspoparg{i,\baselearner^*,h'}.
\end{align}
By assumption,~$h^*$ is the same for any task~$\tau_{i,k}$.

In the proof, we need to convert between test losses and population losses.
By definition, test data is independent from the hypothesis, so standard concentration inequalities can be applied to bound the difference between the test and population loss.
The following lemma follows immediately from Hoeffding's inequality~\cite[Prop.~2.5]{wainwright19-a}, as argued in~\cite[Thm.~3]{hellstrom-20b}.
\begin{lem}\label{lem:hoeffding}
Let~$\excesstestarg{i,\baselearner',h'}$ be a test loss based on~$m$ samples.
Then, with probability at least~$1-\delta$,
\begin{equation}
   \abs{\excesstestarg{i,\baselearner',h'}-\excesspoparg{i,\baselearner',h'}} \leq \sqrt{ \frac{\log\frac{2}{\delta}}{2m} }.
\end{equation}
\end{lem}
\begin{proof}
The test loss~$\excesstestarg{i,\baselearner',h'}$ is the average of~$m$ independent samples of a bounded random variable with mean~$\excesspoparg{i,\baselearner',h'}$.
Therefore, the result follows by Hoeffding's inequality~\cite[Prop.~2.5]{wainwright19-a}.
\end{proof}
This result allows us to convert between test losses and population losses at the cost of a term that is typically negligible in comparison to the complexity terms.

With these tools and notations in place, we are ready to derive excess risk bounds.
The aim is to upper-bound the excess risk by an expression consisting of differences between training and test losses, for which we can apply our generalization bounds.
Starting from the excess risk on task~$\tau_0$, which is our fixed target task, we get
\begin{align}
\excesspoparg{0,\baselearner,\hat h} \!-\! \excesspoparg{0,\baselearner^*,h^*} \!&=\! \excesspoparg{0,\baselearner,\hat h} \!-\! \excesspoparg{0,\baselearner^*,\hat h} \!+\! \excesspoparg{0,\baselearner^*,\hat h} \!-\! \excesspoparg{0,\baselearner^*,h^*} \nonumber \\
&\leq \excesspoparg{0,\baselearner,\hat h} - \excesspoparg{0,\baselearner^*,\hat h} + D.
\end{align}
Here,~$D=\sup_{\tau_0}\excesspoparg{0,\baselearner^*,\hat h} - \excesspoparg{0,\baselearner^*,h^*}$ is the worst-case representation difference~\cite{tripuraneni-20a}, which we will later bound using a task diversity assumption.
Next, by Lemma~\ref{lem:hoeffding}, with probability at least~$1-2\delta$,
\begin{align}
\excesspoparg{0,\baselearner,\hat h} - \excesspoparg{0,\baselearner^*,\hat h} + D \leq \excesstestarg{0,\baselearner,\hat h} - \excesstestarg{0,\baselearner^*,\hat h} + D + 2\sqrt{\frac{\log \frac{2}{\delta}}{m}}.
\end{align}
Next, we use the risk decomposition
\begin{align}
&\excesstestarg{0,\baselearner,\hat h} - \excesstestarg{0,\baselearner^*,\hat h} \\
=&\excesstestarg{0,\baselearner,\hat h} - \excesstrainarg{0,\baselearner,\hat h} +\excesstrainarg{0,\baselearner,\hat h} - \excesstrainarg{0,\baselearner^*,\hat h} + \excesstrainarg{0,\baselearner^*,\hat h} -   \excesstestarg{0,\baselearner^*,\hat h} \\
\leq & \excesstestarg{0,\baselearner,\hat h} - \excesstrainarg{0,\baselearner,\hat h} + \excesstrainarg{0,\baselearner^*,\hat h} -   \excesstestarg{0,\baselearner^*,\hat h},
\end{align}
where the last step follows because~$\excesstrainarg{0,\baselearner,\hat h} \leq \excesstrainarg{0,\baselearner^*,\hat h} $, since~$\baselearner$ is an empirical risk minimizer.
Notice that the resulting expression is the difference between test and training losses on task~$\tau_0$ for two different algorithms.
These terms are simply the generalization gaps for a conventional learning setting.
These terms can be bounded by applying Corollary~\ref{cor:rep-learning-high-probability}, but for the case where~$\metan=1$ and~$\mathcal H=\{\hat h\}$, which implies that~$d_N=0$.
We conclude that there exists a constant~$C_1$ such that, with probability at least~$1-\delta$,
\begin{align}
\excesstestarg{0,\baselearner,\hat h} \!-\! \excesstrainarg{0,\baselearner,\hat h} \!+\! \excesstrainarg{0,\baselearner^*,\hat h} \!-\!   \excesstestarg{0,\baselearner^*,\hat h} \leq C_1\sqrt{ \frac{ \dVC \log\lefto(\frac{\sqrt m}{\dVC}\righto) \!+\! \log\lefto(\frac{\sqrt m}{\delta}\righto) }{m} }.
\end{align}
It remains to bound~$D$.
First, by the task diversity assumption,
\begin{align}
D&=\sup_{\tau_0}\excesspoparg{0,\baselearner^*,\hat h} - \excesspoparg{0,\baselearner^*,h^*}\\
&\leq \nu^{-1} \left( \excesspoparg{-\oneton,\baselearner^*,\hat h} -\excesspoparg{-\oneton,\baselearner^*,h^*} \right) + \epsilon.
\end{align}
We note here that, while the way that \cite{tripuraneni-20a} uses the assumption of task diversity requires that the difference between the minimum population losses for task~$\tau_0$ based on~$\hat h$ and~$h^*$ is controlled by the corresponding risks for tasks~$\oneton$, i.e., the tasks upon which~$\hat h$ is chosen, we instead assume that it is controlled by the corresponding losses for tasks~$-\oneton$, i.e., tasks that are independent from~$\hat h$.
In this sense, the diversity assumption that we use is arguably weaker.

By a risk decomposition, we get
\begin{align}
&\excesspoparg{-\oneton,\baselearner^*,\hat h} -\excesspoparg{-\oneton,\baselearner^*,h^*} \\
=& \excesspoparg{-\oneton,\baselearner^*,\hat h} -\excesspoparg{-\oneton,\baselearner,\hat h} + \excesspoparg{-\oneton,\baselearner,\hat h} -\excesspoparg{-\oneton,\baselearner^*,h^*} \\
\leq &\excesspoparg{-\oneton,\baselearner,\hat h} -\excesspoparg{-\oneton,\baselearner^*,h^*} ,
\end{align}
where the last step follows since~$\excesspoparg{-\oneton,\baselearner^*,\hat h} \leq\excesspoparg{-\oneton,\baselearner,\hat h}$.
By Lemma~\ref{lem:hoeffding}, with probability~$1-2\delta$,
\begin{align}
\excesspoparg{-\oneton,\baselearner,\hat h} -\excesspoparg{-\oneton,\baselearner^*,h^*} \leq \excesstestarg{-\oneton,\baselearner,\hat h} -\excesstestarg{-\oneton,\baselearner^*,h^*} + 2\sqrt{\frac{\log\frac2\delta  }{2n}}.
\end{align}
By a risk decomposition, we find that
\begin{align}
&\excesstestarg{-\oneton,\baselearner,\hat h} -\excesstestarg{-\oneton,\baselearner^*,h^*} \nonumber\\
\leq  &\excesstestarg{-\oneton,\!\baselearner,\hat h} \!-\! \excesstrainarg{\oneton,\!\baselearner,\hat h} \!+\! \excesstrainarg{\oneton,\!\baselearner,\hat h} \!-\! \excesstrainarg{\oneton,\!\baselearner^*,h^*} 
\!+\! \excesstrainarg{\oneton,\!\baselearner^*,h^*} \!-\!\excesstestarg{-\oneton,\!\baselearner^*,h^*} \nonumber\\
\leq &\excesstestarg{-\oneton,\baselearner,\hat h} \!-\! \excesstrainarg{\oneton,\baselearner,\hat h}  
\!+\! \excesstrainarg{\oneton,\baselearner^*,h^*} \!-\!\excesstestarg{-\oneton,\baselearner^*,h^*},
\end{align}
where the last step follows since~$\excesstrainarg{\oneton,\baselearner,\hat h} \leq \excesstrainarg{\oneton,\baselearner^*,h^*} $.
Now, notice that the resulting expression consists of the differences between the unobserved test losses and observed training losses for two different learning algorithms.
This means that we can apply Corollary~\ref{cor:rep-learning-high-probability} to find that there exists a constant~$C_2$ such that, with probability at least~$1-\delta$,
\begin{multline}
\excesstestarg{-\oneton,\baselearner,\hat h} \!-\! \excesstrainarg{\oneton,\baselearner,\hat h}  
\!+\! \excesstrainarg{\oneton,\baselearner^*,h^*} \!-\!\excesstestarg{-\oneton,\baselearner^*,h^*}\\
\leq  C_2 \sqrt{ \frac{d_N \log \left(\binom{N}{2}\frac{\taskin\metan}{d_N} \right) + \metan\dVC \log \left(\frac{\taskin}{\dVC} \right) + \log\lefto( \frac{\sqrt{\taskin\metan}}{\delta}\righto)}{\taskin\metan} }.
\end{multline}
Thus, by putting it all together, using a union bound to combine the probabilistic inequalities, we find that there exists constants~$C_1,C_2,C_3$ such that, with probability at least~$1-\delta$,
\begin{multline}
\excesspoparg{0,\baselearner,\hat h} \!-\! \excesspoparg{0,\baselearner^*,h^*} 
\leq C_1\sqrt{ \frac{ \dVC \log\lefto(\frac{\sqrt m}{\dVC}\righto) \!+\! \log\lefto(\frac{\sqrt m}{\delta}\righto) }{m} } \\
+ C_2 \nu^{-1}\sqrt{ \frac{d_N \log \left(\binom{N}{2}\frac{\taskin\metan}{d_N} \right) + \metan\dVC \log \left(\frac{\taskin}{\dVC} \right) + \log\lefto( \frac{\sqrt{\taskin\metan}}{\delta}\righto)}{\taskin\metan} } + \epsilon,
\end{multline}
where we note that the penalty terms arising from the union bound and converting between test and population losses have been absorbed using the constants.

\end{proof}

Thus, under the assumption of task diversity, we obtained an excess risk bound for a fixed target task, as was done in~\cite{tripuraneni-20a}.
However, if we are interested in bounding the excess risk for a new, randomly drawn task, rather than a fixed target, task diversity is not necessary.
In the following corollary, we demonstrate this by deriving an excess risk bound with respect to the population loss for a new, random task.
While we only present a bound based on Corollary~\ref{cor:rep-learning-high-probability}, similar excess risk bounds can be derived for the average case and from the other high-probability bounds.

\begin{cor}\label{cor:excess-random-task}
Consider the setting of Corollary~\ref{cor:rep-learning-high-probability}.
Assume that~$\baselearner$ is an empirical risk minimizer, that~$\baselearner^*$ is an oracle algorithm, and let
\begin{align}
\hat h &\in \argmin_{h'} \excesstrainarg{\oneton,\baselearner,h'},\\
    h^* &\in \argmin_{h'} \excesspoparg{-i,\baselearner^*,h'}.
\end{align}
Then, there exists a constant~$C$ such that, with probability at least~$1-\delta$ under~$\pacbtraindata$,
\begin{equation}
\excesspoparg{-i,\baselearner,\hat h}\!-\!\excesspoparg{-i,\baselearner^*,h^*} 
\!\leq\! C\sqrt{ \frac{d_N \log \lefto(\binom{N}{2}\frac{\taskin\metan}{d_N} \righto) \!+ \!\metan\dVC \log \lefto(\frac{\taskin}{\dVC} \righto) \!+\! \log\lefto( \frac{\sqrt{\taskin\metan}}{\delta}\righto)}{\taskin\metan} }.
\end{equation}
\end{cor}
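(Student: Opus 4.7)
The plan is to follow the template of the proof of Corollary~\ref{cor:excess-bound-spec-task} while exploiting a crucial simplification: because the target task here is drawn from the same distribution as the meta-training tasks rather than being a fixed outside task, the excess risk is already a meta-population quantity, so no task-diversity assumption is required and no separate base-learner generalization term for a held-out task $\tau_0$ appears.

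First, I would perform a risk decomposition that interpolates through the empirical meta-training risks,
\[
\excesspoparg{-i,\baselearner,\hat h}-\excesspoparg{-i,\baselearner^*,h^*}
= \bigl[\excesspoparg{-i,\baselearner,\hat h} - \excesstrainarg{\oneton,\baselearner,\hat h}\bigr]
+ \bigl[\excesstrainarg{\oneton,\baselearner,\hat h} - \excesstrainarg{\oneton,\baselearner^*,h^*}\bigr]
+ \bigl[\excesstrainarg{\oneton,\baselearner^*,h^*} - \excesspoparg{-i,\baselearner^*,h^*}\bigr].
\]
The middle bracket is non-positive, since $(\baselearner,\hat h)$ jointly minimize the observed empirical risk by assumption. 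Hence only the first and third brackets remain, and each is a meta-generalization gap between training loss on the $\metan$ observed tasks and population loss on an unobserved task.

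To control these brackets, I would apply Lemma~\ref{lem:hoeffding} twice to convert each unobserved-task population loss into its empirical counterpart on the corresponding test samples within $\metasupersample$, at a cost of order $\sqrt{\log(1/\delta)/(\taskin\metan)}$ per application. After this conversion each bracket has precisely the form $\pacbavgmetapoploss-\pacbavgtrainloss$ for a suitable pair of learners, so Corollary~\ref{cor:rep-learning-high-probability} directly supplies the claimed scaling. A union bound across the two Hoeffding conversions and the two invocations of the generalization bound (with $\delta\to\delta/4$) completes the argument, and the Hoeffding penalties, being of the same order as the complexity term, are absorbed into the constant $C$.

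The only real subtlety, rather than a genuine obstacle, is that Corollary~\ref{cor:rep-learning-high-probability} must be applied with the oracle pair $(\baselearner^*,h^*)$ in place of the ERM learners in the second invocation; this is legitimate because that corollary only requires the learners to output hypotheses in $\mathcal F$ and $\mathcal H$, regardless of how they are selected. Compared with Corollary~\ref{cor:excess-bound-spec-task}, both the $\nu^{-1}$ task-diversity factor and the separate $\dVC\log(m)/m$ term for the fixed target task vanish, which is exactly what the stated bound reflects.
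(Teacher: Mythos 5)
Your proposal matches the paper's proof essentially step for step: the same three-term risk decomposition through the observed empirical risks, dropping the middle term by the ERM property, converting the unobserved-task population losses to test losses via Lemma~\ref{lem:hoeffding}, and then applying Corollary~\ref{cor:rep-learning-high-probability} to the resulting test-minus-training gaps for the two learners, absorbing the Hoeffding and union-bound penalties into the constant. Your observation that the generalization bound applies equally to the oracle pair because it only requires outputs in the hypothesis classes is exactly the implicit justification the paper relies on.
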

\begin{proof}
We begin with the risk decomposition
\begin{align}
\excesspoparg{-i,\baselearner,\hat h}\!-\!\excesspoparg{-i,\baselearner^*,h^*} \! &=\! \excesspoparg{-i,\baselearner,\hat h} \!-\! \excesstrainarg{\oneton,\baselearner,\hat h} \!+\! \excesstrainarg{\oneton,\baselearner,\hat h} \\
&\quad - \excesstrainarg{\oneton,\baselearner^*,h^*} + \excesstrainarg{\oneton,\baselearner^*,h^*} - \excesspoparg{-i,\baselearner^*,h^*} \nonumber\\
&\leq \!\excesspoparg{-i,\baselearner,\hat h} \!-\! \excesstrainarg{\oneton,\baselearner,\hat h} \!+\! \excesstrainarg{\oneton,\baselearner^*,h^*\!} \!-\! \excesspoparg{-i,\baselearner^*,h^*\!} ,\nonumber
\end{align}
where we used that~$\excesstrainarg{\oneton,\baselearner,\hat h} \leq \excesstrainarg{\oneton,\baselearner^*,h^*} $.
Next, by Lemma~\ref{lem:hoeffding},
\begin{multline}
\excesspoparg{-i,\baselearner,\hat h} \!-\! \excesstrainarg{\oneton,\baselearner,\hat h} \!+\! \excesstrainarg{\oneton,\baselearner^*,h^*} \!-\! \excesspoparg{-i,\baselearner^*,h^*}\\
\leq \excesstestarg{-\oneton,\baselearner,\hat h} \!-\! \excesstrainarg{\oneton,\baselearner,\hat h} \!+\! \excesstrainarg{\oneton,\baselearner^*,h^*} \!-\! \excesstestarg{-\oneton,\baselearner^*,h^*} + 2\sqrt{\frac{\log (2\delta)}{2n\hat n}}.
\end{multline}
This expression consists of the differences between the unobserved test losses and observed training losses for two different learning algorithms.
We can thus use Corollary~\ref{cor:rep-learning-high-probability} to conclude that there exists a constant~$C$ such that, with probability at least~$1-\delta$,
\begin{multline}
\excesstestarg{-\oneton,\baselearner,\hat h} \!-\! \excesstrainarg{\oneton,\baselearner,\hat h} \!+\! \excesstrainarg{\oneton,\baselearner^*,h^*} \!-\! \excesstestarg{-\oneton,\baselearner^*,h^*} + 2\sqrt{\frac{\log (2\delta)}{2n\hat n}} \\
\leq  C\sqrt{ \frac{d_N \log \lefto(\binom{N}{2}\frac{\taskin\metan}{d_N} \righto) + \metan\dVC \log \lefto(\frac{\taskin}{\dVC} \righto) + \log\lefto( \frac{\sqrt{\taskin\metan}}{\delta}\righto)}{\taskin\metan} }.
\end{multline}
Here, the penalty term from the conversion between population and test loss has been absorbed into the constant~$C$.
From this, the desired result follows.
\end{proof}

\end{document}